\DeclareSymbolFontAlphabet{\amsmathbb}{AMSb}%
\newcommand{\cp}[1]{\ifmmode {\mathcal{#1}}\else ${\mathcal{#1}}$\fi}
\newcommand{\bA}{\boldsymbol{A}}
\newcommand{\bB}{\boldsymbol{B}}
\newcommand{\bC}{\boldsymbol{C}}
\newcommand{\bD}{\boldsymbol{D}}
\newcommand{\bE}{\boldsymbol{E}}
\newcommand{\bI}{\boldsymbol{I}}
\newcommand{\bJ}{\boldsymbol{J}}
\newcommand{\bM}{\boldsymbol{M}}
\newcommand{\bP}{\boldsymbol{P}}
\newcommand{\bQ}{\boldsymbol{Q}}
\newcommand{\bS}{\boldsymbol{S}}
\newcommand{\bT}{\boldsymbol{T}}
\newcommand{\bU}{\boldsymbol{U}}
\newcommand{\bV}{\boldsymbol{V}}
\newcommand{\bW}{\boldsymbol{W}}
\newcommand{\bX}{\boldsymbol{X}}
\newcommand{\bY}{\boldsymbol{Y}}
\newcommand{\bZ}{\boldsymbol{Z}}
\newcommand{\ba}{\boldsymbol{a}}
\newcommand{\bb}{\boldsymbol{b}}
\newcommand{\bc}{\boldsymbol{c}}
\newcommand{\bu}{\boldsymbol{u}}
\newcommand{\bv}{\boldsymbol{v}}
\newcommand{\bx}{\boldsymbol{x}}
\newcommand{\bw}{\boldsymbol{w}}
\newcommand{\bDelta}{\boldsymbol{\Delta}}
\newcommand{\cb}[1]{\boldsymbol{#1}}
\newcommand\tensor[1]{%
  \ifcat\noexpand#1\relax %
    \mathbb{#1}%
  \else
      \if\relax\detokenize\expandafter{\romannumeral-0#1}\relax  %
        \mathbb{#1}
      \else
        \boldsymbol{\mathcal{#1}}%
      \fi
  \fi }
\newcommand{\unfold}[2]{{\mathop{#1}}_{\langle{#2}\rangle}}
\newcommand{\rank}{\operatorname{rank}}
\newcommand{\kr}{\operatorname{kr}}
\newcommand{\Ex}{\amsmathbb{E}}
\newcommand{\cdidx}{\mathsf{m}}
\def\cred{\textcolor{red}}
\definecolor{darkgreen}{rgb}{0., 0.4, 0.}
\newtheorem{theorem}{Theorem}%
\newtheorem{corollary}{Corollary}
\newtheorem{lemma}{Lemma}
\newtheorem{definition}{Definition}
\newtheorem{remark}{Remark}
\newtheorem{example}{Example}
\newcommand\mywidehat[1]{%
\savestack{\tmpbox}{\stretchto{%
  \scaleto{%
    \scalerel*[\widthof{\ensuremath{#1}}]{\kern3pt\mathchar"0362\kern3pt}%
    {\rule{0ex}{\textheight}}%
  }{\textheight}%
}{2.4ex}}%
\stackon[-7.2pt]{#1}{\tmpbox}{}%
}
\newcommand{\bPi}{\boldsymbol{\Pi}}
\newcommand{\bLambda}{\boldsymbol{\Lambda}}
\definecolor{red}{rgb}{0., 0.0, 0.0}
\title{
Personalized Coupled Tensor  \\ Decomposition for Multimodal Data  \\ Fusion: Uniqueness and Algorithms
\thanks{This work has been partially supported by the French National Research Agency under grants ANR-23-CE23-0024, ANR-23-CE94-0001, ANR-19-CE23-0021 and by the National Science Foundation, under grant NSF 2316420.}
}
\author{Ricardo A. Borsoi, Konstantin Usevich, David Brie, T\"ulay Adali
\thanks{R. Borsoi, K Usevich, and D. Brie are with the Universit{\'e} de Lorraine, CNRS, CRAN, Vandoeuvre-l{\`e}s-Nancy, France. e-mail: \mbox{\{firstname.lastname\}@univ-lorraine.fr}.}
\thanks{T. Adali is with the Department of Computer Science and Electrical Engineering, University of Maryland Baltimore County, Baltimore, MD 21250, USA. e-mail: \mbox{\{lastname\}@umbc.edu)}.} }
\begin{document}
\maketitle

\begin{abstract}
Coupled tensor decompositions (CTDs) perform data fusion by linking factors from different datasets. 
Although many CTDs have been already proposed, current works do not address important challenges of data fusion, where: 1) the datasets are often heterogeneous, constituting different ``views'' of a given phenomena (multimodality); and 2) each dataset can contain \textit{personalized} or dataset-specific information, constituting distinct factors that are not coupled with other datasets.
In this work, we introduce a personalized CTD framework tackling these challenges. A flexible model is proposed where each dataset is represented as the sum of two components, one related to a common tensor through a multilinear measurement model, and another specific to each dataset. Both the common and distinct components are assumed to admit a polyadic decomposition. This generalizes several existing CTD models. 
We provide conditions for specific and generic uniqueness of the decomposition that are easy to interpret. These conditions employ \textit{uni-mode} uniqueness of different individual datasets and properties of the measurement model. %
Two algorithms are proposed to compute the common and distinct components: a semi-algebraic one and a coordinate-descent optimization method.
Experimental results illustrate the advantage of the proposed framework compared with the state of the art approaches.

\end{abstract}

\begin{IEEEkeywords}
Coupled tensor decomposition, personalized learning, shared and distinct components, uniqueness.
\end{IEEEkeywords}

\section{Introduction}

Coupled tensor decompositions (CTDs) perform data fusion by linking factors among different datasets, benefiting from the complementary information across modalities~\cite{sorber2015structuredDataFusionReview,acar2013dataFusionThroughCoupledFactorizations}.
CTD has recently drawn increasing interest in various disciplines, particularly in data mining and multimodal data fusion~\cite{lahat2015multimodalFusionReview,sorber2015structuredDataFusionReview,acar2014structureRevealingFusion,acar2013dataFusionThroughCoupledFactorizations,chatzichristos2022coupledTensorDataFusionBook}.
Applications include social network data analysis~\cite{papalexakis2014turboSMT_matrix_tensor_fact} and link prediction~\cite{ermics2015linkPredictionCoupledTensor}, multimodal image fusion~\cite{prevost2020coupledTucker_hyperspectralSRR_TSP,kanatsoulis2018hyperspectralSRR_coupledCPD,borsoi2020tensorHSRvariability,prevost2022LL1_HSR}, multi-subject fMRI data fusion~\cite{kuang2023coupledCPD_fMRI_multigroup,borsoi2022coupledCP_dec_fMRI}, and several problems in array signal processing~\cite{sorensen2016harmonicRetrievalCCPD_part1,sorensen2016harmonicRetrievalCCPD_part2,sorensen2016ESPRIT_coupledCPD,sorensen2018coupledCPD_arrayProcessing,zheng2021coupledCPD_DOAestimation}.
CTDs are also important in methods based on high-order statistics for blind system identification~\cite{vaneeghem2018coupledTensorDecBlindSysIdentification}, joint blind source separation from multiple datasets~\cite{gong2018jointBSS_coupledCPD}, and for blind deconvolution of underdetermined systems~\cite{yu2007parafacBlindDecomvolutionMIMO}. 
Moreover, related approaches coupling both matrix and tensor data have shown great potential in metabolomics, combining nuclear magnetic resonance and spectrometry or spectroscopy measurements~\cite{acar2015dataFusionMetabolomics}, and in neuroimaging, fusing electroencephalogram (EEG) with functional magnetic resonance imaging (fMRI) data~\cite{karahan2015tensor_EEG_fMRI_fusion,chatzichristos2022fusionEEGfMRI_doubleCoupledMatrixTensor}, or fMRI with magnetoencephalography (MEG) data~\cite{belyaeva2024learningBrainDynamica_MEG_fMRI}.
\cred{The flexibility of CTDs allow them to address data fusion both in the \emph{multiset} setting, where data is collected using the same modality over different conditions or observation times (e.g., fMRI datasets of different subjects), and also in the \emph{multimodal} setting, where information is collected about the same phenomenon using different modalities or sensors (e.g., EEG and fMRI datasets)~\cite{adali2018ica_iva_dataFusionOverview}.}

One of the main advantages of CTDs is that the coupled decomposition can be unique even when uniqueness does not hold for the decomposition of any of the individual tensors and matrices~\cite{lathauwer2017coupledMatrixTensorFactPartially}.
This occurs, for example, when coupling images with different spectral and spatial resolutions~\cite{kanatsoulis2018hyperspectralSRR_coupledCPD}. 
\cred{Thus, CTDs can exploit advantages of each data modality, being able to leverage weaker uniqueness results of each dataset to provide uniqueness to the full decomposition.}

\textbf{Shared and distinct features:}
In many applications, the following aspect of CTDs is important: even if the modes of all coupled tensors express the same ``view'' of the data (e.g., spatial dimensions, subjects, time, etc.), there is often content unique to each dataset in addition to the content shared with other measurements.
This can occur due to datasets being captured from multiple subjects or from experiments running in different conditions, which provides more diversity but also incurs factors of variation which do not occur in all measurements~\cite{smilde2017commonDistinctComponentsFusion}.
For instance, in multitask fMRI data fusion this allows one to account for task-specific information~\cite{borsoi2022coupledCP_dec_fMRI}. 
Moreover, accurately identifying underlying shared and distinct factors from EEG and fMRI data reveals biologically meaningful components (biomarkers) which allow for differentiating between patients with schizophrenia and healthy controls~\cite{acar2019biomarkersStructureRevealingTensorFusion}.
Subspace-based representation learning of face images can be made more accurate by considering one subspace common to all persons along with others unique to each individual~\cite{prince2007probabilisticLDA_sharedCommon}. Moreover, when fusing hyperspectral images (HSI) with multispectral images (MSI) of different spatial/spectral resolutions,
such models account for the distinct information present in each image due to differences between their acquisition conditions
(e.g., cloud cover, illumination or atmosphere)~\cite{prevost2022LL1_HSR,borsoi2020tensorHSRvariability,wang2023deepFusionInterImageVariability}, addressing the so-called inter-image variability problem.
In what follows, we consider a general framework of so-called \emph{personalized} decompositions, which include some components that are common to all datasets, and other which are distinct to each dataset.

Traditional CTDs fail to consider shared and distinct features in the datasets, which motivated the development of more flexible models.
For example, in \cite{acar2014structureRevealingFusion} the authors proposed a flexible decomposition framework in which the contribution of the coupled components to each dataset is weighted by a set of sparse coefficients. This allows components not to contribute to some datasets. 
Other works proposed to ``un-couple'' a subset of the columns of one of the factor matrices. For instance, the authors in \cite{liu2013miningTensorsCommonDistriminative} perform nonnegative tensor factorization with common and distinct parts in one factor to achieve representation learning for classification tasks. A similar model was used in \cite{wen2016tensorSharedUniqueEventAnalytics} for the analysis of critical events in data (from, e.g., social networks).
More recently, this approach has been considered for coupled matrix decomposition~\cite{sorensen2020factorizationSharesUnshared}, and for the coupled block term decomposition (BTD) in multimodal image fusion~\cite{prevost2022LL1_HSR,borsoi2020tensorHSRvariability}. 
A different approach uses flexible couplings, in which the factor matrices of different tensors or matrices are not constrained to be equal, but only to belong to a Euclidean ball~\cite{schenker2020flexibleCouplingsTensorFact,farias2016tensorFusionFlexibleCouplings}.
However, previous approaches suffer from one of two limitations: they either lack flexibility in the observation model or do not have established uniqueness guarantees. In this work we will consider a general personalized CTD framework, addressing both these issues.

\textbf{Uniqueness of CTDs:}
Uniqueness of matrix and tensor decompositions is of fundamental importance in most applications as it allows the recovered factors to be directly interpreted when there is a match between the decomposition and the data generating model~\cite{adali2022reproducibilityReviewSPM}.
The uniqueness of the coupled canonical polyadic decomposition (CPD) was first studied in~\cite{sorensen2015coupledCPD_part1_unique,sorensen2015coupledCPD_part2_algs}. The authors derived uniqueness guarantees based on the Kruskal condition for the case when one of the factors was coupled for all tensors.

More recent works established uniqueness for more flexible data acquisition models and different tensor decompositions.
The recovery of an order-3 tensor admitting a CPD from measurements acquired according to sampling schemes acting separately on each mode (such as sampling entries, fibers or slabs) was studied in~\cite{kanatsoulis2019tensorCompletionNyquistSampling}. Conditions for the recovery of the full tensor were given depending on the uniqueness of individual tensors in the decomposition. %
This approach is closely related to~\cite{kanatsoulis2018hyperspectralSRR_coupledCPD} in the context of hyperspectral and multispectral image fusion, and to~\cite{zhang2020spectrumCartographyCoupledBTD} for spectrum cartography in the case when the observed tensors have missing data, in which a BTD model is considered. However, these problems do not require the uniqueness of all factors of the decomposition; instead, only the recoverability of the unobserved ``high-resolution'' tensor is necessary.

Although the basic problem of fusing HSIs and MSIs does not require the uniqueness of all factor matrices, recent work addressed the inter-image variability problem by considering flexible CTDs containing both shared and distinct factors.
In this case, conditions ensuring the recoverability of the true high resolution image tensor based on degraded measurements were obtained for image tensors admitting both an LL1 BTD~\cite{prevost2022LL1_HSR} and a Tucker decomposition~\cite{borsoi2020tensorHSRvariability}. However, those works assume a very specific measurement model which also contains no variability (i.e., distinct components) for one of the measured tensors.

Other related work studied the recoverability of principal component analysis~\cite{shi2022personalizedPCA_sharedUnique} and dictionary learning~\cite{liang2023personalizedDictionaryLearning_sharedUnique,shi2023heterogeneousMatrixFactorization_sharedUnique} with shared and dataset-specific factors using incoherence assumptions between the shared and distinct components.
The uniqueness of coupled decomposition with structured matrices accounting for shared and unique factors was studied in~\cite{sorensen2020factorizationSharesUnshared}. %

\textbf{Contribution:} In this paper, we propose a general personalized CTD framework for multimodal data fusion. 
A distinctive feature of our approach is that we clearly show how it is possible to leverage the so-called {\em uni-modal uniqueness} of each tensor~\cite{guo2012partialUniquenessPARAFAC,domanov2013uniquenessCPS_part1_oneFactor}, which can be satisfied under milder conditions when compared to the traditional uniqueness requirements, to provide uniqueness guarantees for the full decomposition.
The main contributions of this paper are:

1) Our coupling framework generalizes several existing ones, including~\cite{kanatsoulis2018hyperspectralSRR_coupledCPD,kanatsoulis2019tensorCompletionNyquistSampling} (which do not have distinct components), \cite{borsoi2020tensorHSRvariability,prevost2022LL1_HSR}, which uses a very specific measurement model and have a distinct component in only one dataset. 
\cred{Although~\cite{sorensen2020factorizationSharesUnshared} provides uniqueness results for the CPD with shared and distinct components, it does not consider a general measurement/degradation model or heterogeneity between datasets. However, the results in~\cite{sorensen2020factorizationSharesUnshared} are based on a different low-rank model and cannot be directly compared to the results presented in this paper.}

2) The general multilinear measurement model comprises a wide range of possible practical applications in, e.g., hyperspectral and neuroimaging (e.g., fMRI, MRI, MEG) data fusion problems.

3) Uniqueness results which are easy to interpret and provide insight on the role that ``weaker'' (uni-mode) uniqueness of each measured tensor and the measurement model have on the uniqueness of the full decomposition. An important feature of the proofs is that they are constructive and directly motivate a semi-algebraic decomposition method.

4) Semi-algebraic (motivated by the constructive uniqueness results) and optimization-based algorithms are proposed to compute the decomposition. \cred{The solution provided by the semi-algebraic algorithm also serves as a principled approach to initialize optimization-based methods.}

A general measurement model is presented in Section~\ref{sec:prob_statement} where each measured tensor is represented as a sum of two terms. The first term is linked to a \textit{common component} through an arbitrary multilinear measurement/degradation model, which couples the different datasets. The second (uncoupled) component is \textit{distinct} to each measured tensor representing personalized or dataset-specific information. The common and distinct tensors are assumed to admit CPDs.
In Section~\ref{sec:uniqueness_full} we demonstrate that this decomposition is generically unique (i.e., the common and distinct components can be recovered) under mild conditions. The developed uniqueness conditions are also highly interpretable: informally, they show that recovery of the common and distinct components is possible as long as 1) one of the low-resolution measured tensors is fully unique, and 2) for each mode $j$, there exists a measured tensor that is mode-$j$ unique and whose mode-$j$ factor is measured at full resolution. Obtaining uniqueness of the full decomposition from ``weaker'' uniqueness results of each individual tensor highlights the benefits of data fusion.
A semi-algebraic algorithm inspired by the uniqueness results as well as an optimization approach based on \cred{alternating least squares (ALS)} are proposed in Section~\ref{sec:computing_the_decomposition}.
Experiments with synthetic data and real hyperspectral images illustrate the advantage of the proposed framework compared to competing algorithms.
\cred{In addition, a preliminary conference version of this work containing experimental results for multi-subject and multitask fMRI data fusion based on a more constrained version of the model proposed in Section~\ref{sec:prob_statement} was presented in~\cite{borsoi2022coupledCP_dec_fMRI}. These results further demonstrate the practical value of the proposed model.}

\section{Background}

\subsection{Definitions and notation}

Scalars, vectors and matrices are denoted by plain font ($x$ or $X$), lowercase bold font ($\bx$) and uppercase bold font ($\bX$), respectively. Order-3 tensors are represented by calligraphic \cred{bold} font ($\tensor{X}$). 
The $(i,j)$-th element of a matrix $\bX$ is denoted by $[\bX]_{i,j}$, while the $(i,j,k)$-th element of a tensor $\tensor{X}$ is denoted by $[\tensor{X}]_{i,j,k}$.
\cred{We also use the notation $a:b$, for $a$ and $b$ being positive integers to compute submatrices of a given matrix. For example, $[\bX]_{1:a,1:b}$ denotes the first $a$ rows and $b$ columns of $\bX$.}
We denote by $\otimes$ and by $\odot$ the Kronecker and the Khatri-Rao products \cite{kolda2009tensor}.
The outer product between three vectors is defined as $\ba\circ\bb\circ\bc$, $[\ba\circ\bb\circ\bc]_{i,j,k}=a_ib_jc_k$, and constitutes a rank-1 tensor.
We denote the left pseudoinverse of matrix $\bX$ by $\bX^{\dagger}$. 
A mode-$k$ fiber of a tensor $\tensor{X}$ is the vector obtained by fixing all but one mode of $\tensor{X}$, while a slice is a two dimensional subset obtained by fixing one mode of $\tensor{X}$. For $K\in\amsmathbb{N}_{+}$, we also use $[K]$ to represent the set of integers $\{1,\ldots,K\}$.
\cred{Note that although we consider the case of third order tensors to keep the presentation cleaner, the proposed framework can be extended to higher order tensors as well.}

\begin{definition}
The Kruskal rank (also called K-rank) of a matrix $\bX$, denoted by $\kr(\bX)$, is the largest number $r$ such that \cred{every set} of $r$ columns of $\bX$ are linearly independent. 
\end{definition}

\begin{definition}
The mode-$k$ matricization of a tensor $\tensor{T}\in\amsmathbb{R}^{N_1\times N_2\times N_3}$, denoted by $\unfold{\bT}{k}$, arranges its mode-$k$ fibers to be the rows of matrix $\unfold{\bT}{k}\in\amsmathbb{R}^{N_{\ell}N_{m}\times N_k}$ such that the $n_k$-th column of $\unfold{\bT}{k}$ consists of the vectorization of the slice of $\tensor{T}$ obtained by fixing the index of the $k$-th mode of $\tensor{T}$ as $n_k$.
\end{definition}

\begin{definition}
The mode-$k$ product between a tensor $\tensor{T}$ and a matrix $\bB$ is denoted by $\tensor{T}\times_k\bB$ and consists of multiplying every mode-$k$ fiber of $\tensor{T}$ by $\bB$. It can be written using the mode-$k$ matricization as $\tensor{U}=\tensor{T}\times_k\bB\Leftrightarrow\unfold{\bU}{k}=\unfold{\bT}{k}\bB^\top$.
\end{definition}

\begin{definition}
The full multilinear product consists of mode-$k$ products between a tensor $\tensor{T}$ and matrices $\bB_k$ for each mode $k\in\{1,2,3\}$, and can be expressed as $\ldbrack\tensor{T};\bB_{1},\bB_{2},\bB_{3}\rdbrack=\tensor{T}\times_1\bB_{1}\times_2\bB_{2}\times_3\bB_{3}$.
\end{definition}

\subsection{Canonical Polyadic Decomposition}

The polyadic decomposition expresses a tensor $\tensor{X}\in\amsmathbb{R}^{I\times J\times K}$ as a sum of rank-1 terms:
\begin{align}
    \tensor{X} &= \sum_{r=1}^R \ba_r \circ \bb_r \circ \bc_r 
    = \ldbrack \bA,\bB,\bC \rdbrack \,,
\end{align}
where matrices $\bA=[\ba_1,\ldots,\ba_R]$, $\bB=[\bb_1,\ldots,\bb_R]$, $\bC=[\bc_1,\ldots,\bc_R]$ are called the factor matrices. When $R$ is minimal, this is called the canonical polyadic decomposition (CPD), and $R$ is called the (CP) rank of $\tensor{X}$.

A fundamental property of the CPD is that it is \emph{essentially unique} under mild conditions. More precisely, a decomposition $\tensor{X}=\ldbrack\bA,\bB,\bC\rdbrack$ is essentially unique if every alternative decomposition $\tensor{X}=\ldbrack\bA',\bB',\bC'\rdbrack$ satisfies:
\allowdisplaybreaks
\cred{\begin{align}
    \bA'=\bA\bPi\bLambda_A \,,%
    \label{eq:def_ess_unique_a}\\
    \bB'=\bB\bPi\bLambda_B \,,%
    \label{eq:def_ess_unique_b}\\
    \bC'=\bC\bPi\bLambda_C \,,
    \label{eq:def_ess_unique_c}
\end{align}
where} $\bPi$ is a permutation matrix and $\bLambda_A$, $\bLambda_B$, $\bLambda_C$ are diagonal matrices satisfying $\bLambda_A\bLambda_B\bLambda_C=\bI$. This means that the factors of the decomposition are unique up to a permutation and scaling of their columns. \cred{We also say that one of the factor matrices of the CPD, say, $\bC$, is essentially unique if~\eqref{eq:def_ess_unique_c}
is satisfied, but nothing is required of the remaining factor matrices $\bA$ and $\bB$.}

Several works investigated the uniqueness of the CPD. One of the earliest conditions was obtained by Kruskal:
\begin{theorem}\cite{kruskal1977uniquenessTensor,stegeman2007kruskalConditionAcessible} \label{thm:kruskals}
The CPD of an order-3 tensor $\tensor{X}=\ldbrack\bA,\bB,\bC \rdbrack$ with rank $R$ is essentially unique if
\begin{align}
    \kr(\bA) + \kr(\bB) + \kr(\bC) \geq 2R+2 \,.
    \label{eq:kruskal_condition_classical}
\end{align}
\end{theorem}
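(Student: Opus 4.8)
The plan is to follow Kruskal's original argument, organized around the \emph{permutation lemma}. Suppose $\tensor{X}$ admits two CPDs with $R$ terms, $\ldbrack\bA,\bB,\bC\rdbrack=\ldbrack\bA',\bB',\bC'\rdbrack$; since $\tensor{X}$ has rank $R$, neither decomposition contains a vanishing rank-one term, so all six factor matrices may be assumed to have no zero columns. Because the three modes play symmetric roles both in the CPD and in~\eqref{eq:kruskal_condition_classical}, it suffices to establish~\eqref{eq:def_ess_unique_c}, i.e. $\bC'=\bC\bPi\bLambda_C$ for some permutation $\bPi$ and nonsingular diagonal $\bLambda_C$; the analogous statements for $\bA$ and $\bB$ then follow by relabelling the modes, and matching the individual rank-one terms forces a common permutation $\bPi$ together with the scaling constraint $\bLambda_A\bLambda_B\bLambda_C=\bI$.

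First I would introduce the slice-mixing (pencil) device. For $\bw\in\amsmathbb{R}^{K}$, the $\bw$-weighted combination of the frontal slices of $\tensor{X}$ admits two expressions,
\begin{align}
\bM_{\bw}\;=\;\bA\,\td(\bC^\top\bw)\,\bB^\top\;=\;\bA'\,\td(\bC'^\top\bw)\,\bB'^\top.
\end{align}
Writing $\omega(\cdot)$ for the number of nonzero entries of a vector, $\bM_{\bw}$ is a sum of $\omega(\bC^\top\bw)$ rank-one terms, so $\rank(\bM_{\bw})\le\omega(\bC^\top\bw)$, and likewise $\rank(\bM_{\bw})\le\omega(\bC'^\top\bw)$. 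The crucial sharpening is that whenever $\omega(\bC^\top\bw)\le\min\{\kr(\bA),\kr(\bB)\}$ the active columns of $\bA$ and of $\bB$ are each linearly independent, so in fact $\rank(\bM_{\bw})=\omega(\bC^\top\bw)$, and symmetrically with primes. Comparing the two expressions then yields a domination relation between the zero patterns of $\bC^\top\bw$ and $\bC'^\top\bw$ on the range of $\bw$ where the rank is pinned down.

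The heart of the argument is to convert this into the hypothesis of \emph{Kruskal's permutation lemma}: if $\bC$ has no zero columns and $\kr(\bC)\ge 2$, and if for every $\bw$ the zero pattern of $\bC'^\top\bw$ dominates that of $\bC^\top\bw$ in the appropriate sense, then $\bC'=\bC\bPi\bLambda_C$. Verifying this hypothesis is exactly where the bound $2R+2$ enters: for each admissible zero pattern one must exhibit a vector $\bw$ annihilating the prescribed columns of $\bC$ while keeping $\omega(\bC^\top\bw)$ below $\min\{\kr(\bA),\kr(\bB)\}$ so that the rank identity applies, and symmetrically for the primed system. The inequality $\kr(\bA)+\kr(\bB)+\kr(\bC)\ge 2R+2$ is precisely what guarantees that these two regimes overlap enough to force the zero patterns to match in \emph{both} directions. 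Once the permutation lemma delivers $\bC'=\bC\bPi\bLambda_C$, symmetry over the modes finishes the proof.

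I expect the main obstacle to be the permutation lemma together with the bookkeeping that feeds it. The rank identity $\rank(\bM_{\bw})=\omega(\bC^\top\bw)$ is valid only in a limited range, and the delicate point is to select enough vectors $\bw$ — realizing all relevant zero patterns of $\bC^\top\bw$ — while staying inside that range and simultaneously controlling $\omega(\bC'^\top\bw)$, so that the domination can be read off for both decompositions. Establishing the permutation lemma itself is a self-contained combinatorial statement about the zero patterns of $\bw^\top\bC$; its proof (by induction on the number of columns, tracking how the zero sets of $\bw^\top\bC$ and $\bw^\top\bC'$ nest) is the genuinely nontrivial ingredient, and is where I would concentrate the effort.
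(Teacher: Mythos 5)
The paper itself contains no proof of Theorem~\ref{thm:kruskals}: it is imported as classical background, with the proof deferred to the cited works \cite{kruskal1977uniquenessTensor,stegeman2007kruskalConditionAcessible}. So the relevant benchmark is the proof in those references, and your outline is a faithful reconstruction of it: reduce to establishing essential uniqueness of one factor, use the slice-mixing pencil $\bM_{\bw}=\bA\,\td(\bC^\top\bw)\,\bB^\top=\bA'\,\td(\bC'^\top\bw)\,\bB'^\top$, compare zero patterns of $\bC^\top\bw$ and $\bC'^\top\bw$ through the rank of $\bM_{\bw}$, invoke Kruskal's permutation lemma, and finally match permutations and scalings across the three modes. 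The individual claims you do state are correct; in particular, the identity $\rank(\bM_{\bw})=\omega(\bC^\top\bw)$ when $\omega(\bC^\top\bw)\le\min\{\kr(\bA),\kr(\bB)\}$ follows from Sylvester's rank inequality applied to the active columns, and the hypothesis $\kr(\bA)+\kr(\bB)+\kr(\bC)\geq 2R+2$ together with $\kr(\bA),\kr(\bB)\le R$ indeed forces $\kr(\bC)\ge 2$ and the absence of zero columns needed by the lemma.

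That said, as submitted this is a road map rather than a proof, and the two items you defer are precisely the mathematical content of the theorem. First, the permutation lemma is stated only loosely (``dominates \ldots in the appropriate sense''); the actual hypothesis is quantified — for every $\bw$ with $\omega(\bC'^\top\bw)\le R-\rank(\bC)+1$ one must show $\omega(\bC^\top\bw)\le\omega(\bC'^\top\bw)$ — and its inductive proof, which you acknowledge is the crux, is absent. Second, the verification of that hypothesis is not carried out, and your description of it is structurally off: one does not need to \emph{exhibit} vectors $\bw$ realizing prescribed zero patterns (that construction lives inside the proof of the permutation lemma itself); rather, one takes an \emph{arbitrary} $\bw$ in the admissible range and shows it automatically lies in the regime where your rank identity applies. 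This requires the general bound $\rank\big(\bA\,\td(\bd)\,\bB^\top\big)\ge\min(\kr(\bA),\omega(\bd))+\min(\kr(\bB),\omega(\bd))-\omega(\bd)$, valid for all $\bd$, followed by a short case analysis: if $\omega(\bC^\top\bw)$ exceeded $\min\{\kr(\bA),\kr(\bB)\}$, Kruskal's inequality would force $\rank(\bM_{\bw})\ge R-\rank(\bC)+2$, contradicting $\rank(\bM_{\bw})\le\omega(\bC'^\top\bw)\le R-\rank(\bC)+1$. With that contradiction in hand, the pinned-down rank identity gives the domination and the permutation lemma applies. Filling in these two pieces would turn your outline into a complete proof equivalent to the one in \cite{stegeman2007kruskalConditionAcessible}; without them, nothing has actually been proved.
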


Note that Kruskal's condition is sufficient but not necessary for a CPD to be unique. Recent work has investigated more relaxed uniqueness conditions~\cite{domanov2013uniquenessCPS_part2_overallDec}. 
In particular, it has been shown that conditions for the CPD of generic tensors to be unique can be mild, especially for tensors that are ``tall'' in at least one of the modes~\cite{chiantini2012genericIdentifiabilityTensors}.
We also note that CPD can be computed algebraically even when the factor matrices are not full rank~\cite{domanov2014CPD_uniqueness_reductionGEVD}. 
We refer to a tensor which satisfies Kruskal's uniqueness condition as \textit{fully unique} in order to specify the case in which all factor matrices (from the three modes) are unique up to permutation and scaling ambiguities and to clearly distinguish it from the uni-mode uniqueness condition discussed in the following.

\subsection{Coupled decompositions and uni-mode uniqueness}
\label{ssec:uni_mode_unique}

Given a set of tensors $\tensor{X}_t$, $t=1,\ldots,T$, coupled decompositions represent each $\tensor{X}_t$ using a low-rank model but constrain some of the factors to be the same for different $t$. For instance, a coupled PD of $\{\tensor{X}_t\}_t$ can be expressed as
\begin{align}
    \tensor{X}_t = \ldbrack \bA_t, \bB_t, \bC \rdbrack \,, \quad t=1,\ldots,T \,,
    \label{eq:rev_coupled_CPD}
\end{align}
where mode-1 and mode-2 factors $\bA_t\in\amsmathbb{R}^{I_t\times R}$ and $\bB_t\in\amsmathbb{R}^{J_t\times R}$ are individual to each tensor $\tensor{X}_t$, while the mode-3 factor $\bC\in\amsmathbb{R}^{K\times R}$ is shared among all tensors.

The uniqueness of decompositions in the form of~\eqref{eq:rev_coupled_CPD} was studied in~\cite{sorensen2015coupledCPD_part1_unique,sorensen2015coupledCPD_part2_algs}. It was shown that conditions for the coupled decomposition to be unique can be much milder than those guaranteeing the uniqueness of every tensor. 
This is an important advantage of coupled decompositions, which can be unique even when some of $\{\tensor{X}_t\}_t$ are matrices~\cite{acar2013dataFusionThroughCoupledFactorizations}.

Such relaxed uniqueness conditions are strongly related to the so-called \textit{uni-mode uniqueness} of tensors, which guarantee the unique recovery of only a single factor of the decomposition.
Such results will be important to guarantee uniqueness of the coupled decompositions later in the paper.
The uni-mode uniqueness of the CPD has been studied in~\cite{guo2012partialUniquenessPARAFAC,domanov2013uniquenessCPS_part1_oneFactor}. We recall the condition obtained in~\cite{guo2012partialUniquenessPARAFAC}:

\begin{theorem}\cite[Theorem~2.1]{guo2012partialUniquenessPARAFAC} \label{thm:partial_uniq_cp}
Given the CP decomposition of an order-3 tensor $\tensor{X}=\ldbrack\bA,\bB,\bC \rdbrack$ with rank $R$, if $\bC$ has no zero columns and
\begin{align}
    \kr(\bA) + \kr(\bB) + \rank(\bC) \geq 2R+2
    \label{eq:uni_mode_unique_cp1}
\end{align}
holds, then the third factor matrix $\bC$ is essentially unique.
\end{theorem}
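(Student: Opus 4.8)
The plan is to reduce the statement to \emph{Kruskal's permutation lemma} applied to the pair $(\bC,\bC')$ arising from two competing decompositions, verifying the lemma's hypothesis through a rank-counting argument on weighted combinations of the frontal slices of $\tensor{X}$. Throughout, let $\tensor{X}=\ldbrack\bA,\bB,\bC\rdbrack=\ldbrack\bA',\bB',\bC'\rdbrack$ be an arbitrary alternative rank-$R$ decomposition, and write $\omega(\bv)$ for the number of nonzero entries of a vector $\bv$. The objective is to establish $\bC'=\bC\bPi\bLambda$ for a permutation matrix $\bPi$ and a nonsingular diagonal $\bLambda$, which is precisely essential uniqueness of the third factor as in~\eqref{eq:def_ess_unique_c}.

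First I would record two preliminaries. (i) The Khatri--Rao Kruskal-rank bound $\kr(\bA\odot\bB)\ge\min(\kr(\bA)+\kr(\bB)-1,R)$, combined with the hypothesis and $\rank(\bC)\le R$, gives $\kr(\bA)+\kr(\bB)-1\ge 2R+1-\rank(\bC)\ge R+1$, so $\bA\odot\bB$ has full column rank $R$. Since $\unfold{\bX}{3}=(\bA\odot\bB)\bC^\top$, it follows that $\rank(\unfold{\bX}{3})=\rank(\bC)=:\rho$. (ii) For every $\bw$, the mode-3 contraction $\bM_{\bw}:=\tensor{X}\times_3\bw^\top$, regarded as an $I\times J$ matrix, admits the two expressions $\bM_{\bw}=\bA\operatorname{diag}(\bC^\top\bw)\bB^\top=\bA'\operatorname{diag}(\bC'^\top\bw)\bB'^\top$, which is the bridge linking the two decompositions.

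The core of the argument is the rank count on $\bM_{\bw}$. The primed expression writes $\bM_{\bw}$ as a sum of $\omega(\bC'^\top\bw)$ rank-one terms, whence $\rank(\bM_{\bw})\le\omega(\bC'^\top\bw)$. The unprimed expression, restricted to the support $S$ of $\bC^\top\bw$, reads $\bM_{\bw}=\bA_S\,\bD\,\bB_S^\top$ with $\bD$ nonsingular and $|S|=\omega(\bC^\top\bw)$, and the definition of the Kruskal rank forces $\rank(\bM_{\bw})\ge\min(|S|,\kr(\bA))+\min(|S|,\kr(\bB))-|S|$. To invoke the permutation lemma I must check that, for every $\bw$ with $\omega(\bC'^\top\bw)\le R-\rho+1$, one has $\omega(\bC^\top\bw)\le\omega(\bC'^\top\bw)$. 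Combining the two bounds, the hypothesis $\kr(\bA)+\kr(\bB)+\rank(\bC)\ge 2R+2$ rules out the large-support regime (it would force $\omega(\bC^\top\bw)\ge R+1$, impossible since $\bC^\top\bw\in\bbR^{R}$), leaving exactly the regime in which the lower bound equals $|S|$ and the required inequality is immediate. A point worth stressing is that only $\rank(\bC)$, not $\kr(\bC)$, enters: the factor $\bC$ is used solely through the threshold $R-\rank(\bC)+1$ of the permutation lemma, whereas a Kruskal-style proof of full uniqueness would apply the lemma in all three modes and hence need $\kr$ of each factor.

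I expect the main difficulty to be the boundary of this case analysis, namely when the support of $\bC^\top\bw$ has intermediate size---exceeding one of $\kr(\bA),\kr(\bB)$ but not the other---since there the crude lower bound degenerates to $\min(\kr(\bA),\kr(\bB))$ and no longer pins down $\omega(\bC^\top\bw)$. Closing this case requires exploiting that $\bC^\top\bw$ and $\bC'^\top\bw$ are not free but tied by $\bC^\top=(\bA\odot\bB)^\dagger(\bA'\odot\bB')\,\bC'^\top$ (valid since $\bA\odot\bB$ has full column rank), so that the sparsity of $\bC'^\top\bw$ genuinely constrains that of $\bC^\top\bw$. Verifying the permutation-lemma hypothesis for \emph{all} admissible $\bw$, rather than for generic ones, is where the real work lies; once the hypothesis holds, the permutation lemma yields $\bC'=\bC\bPi\bLambda$ and the proof concludes.
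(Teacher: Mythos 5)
A preliminary remark: the paper offers no proof of this statement --- it is imported verbatim, as Theorem~2.1 of \cite{guo2012partialUniquenessPARAFAC}, into the background section --- so your proposal can only be measured against the argument in that source (which itself refines the Stegeman--Sidiropoulos proof of Kruskal's theorem \cite{stegeman2007kruskalConditionAcessible}). Your route is the same one: form the contractions $\bM_{\bw}=\bA\operatorname{diag}(\bC^\top\bw)\bB^\top=\bA'\operatorname{diag}(\bC'^\top\bw)\bB'^\top$, sandwich $\rank(\bM_{\bw})$ between the upper bound $\omega(\bC'^\top\bw)$ and the Kruskal-rank lower bound $\min(\omega,\kr(\bA))+\min(\omega,\kr(\bB))-\omega$ with $\omega=\omega(\bC^\top\bw)$, and feed the resulting inequality into Kruskal's permutation lemma, whose threshold involves only the \emph{rank} of the third factor --- which is exactly why $\rank(\bC)$ can replace $\kr(\bC)$. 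Your preliminaries are sound: the hypothesis does force $\bA\odot\bB$ to have full column rank, hence $\rank(\bC')\ge\rank(\unfold{\bX}{3})=\rank(\bC)$, which is what legitimizes using the threshold $R-\rank(\bC)+1$; and your elimination of the large-support regime $\omega>\max(\kr(\bA),\kr(\bB))$ is correct.

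The gap is the one you flag yourself: the intermediate regime $\min(\kr(\bA),\kr(\bB))<\omega\le\max(\kr(\bA),\kr(\bB))$ is left open, and the repair you propose --- exploiting $\bC^\top=(\bA\odot\bB)^\dagger(\bA'\odot\bB')\,\bC'^\top$ --- cannot work as stated: the matrix $(\bA\odot\bB)^\dagger(\bA'\odot\bB')$ is precisely the object whose structure (a scaled permutation) the theorem asserts, so invoking it to transfer sparsity from $\bC'^\top\bw$ to $\bC^\top\bw$ is circular, and a general linear map transfers no sparsity at all. The case in fact closes with the very same counting you already used for the large-support regime, with no new idea needed. In the intermediate regime your slice bound degenerates to $\omega(\bC'^\top\bw)\ge\min(\kr(\bA),\kr(\bB))$; but any $\bw$ admissible for the permutation lemma obeys $\omega(\bC'^\top\bw)\le R-\rank(\bC')+1\le R-\rank(\bC)+1$, so $\min(\kr(\bA),\kr(\bB))+\rank(\bC)\le R+1$, and then \eqref{eq:uni_mode_unique_cp1} gives $\max(\kr(\bA),\kr(\bB))\ge 2R+2-\rank(\bC)-\min(\kr(\bA),\kr(\bB))\ge R+1$, which is impossible since a Kruskal rank never exceeds the number of columns $R$. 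Hence for every admissible $\bw$ both the large and intermediate regimes are vacuous, $\omega(\bC^\top\bw)\le\min(\kr(\bA),\kr(\bB))$ is forced, and the required $\omega(\bC^\top\bw)\le\omega(\bC'^\top\bw)$ follows from your first case; the permutation lemma (this is where the ``$\bC$ has no zero columns'' hypothesis is consumed, since the matrix concluded to be a scaled permuted copy must have nonzero columns) then yields $\bC'=\bC\bPi\bLambda$. With that one step inserted, your proof is complete and coincides with the cited one.
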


Other conditions for the uni-modal and full uniqueness of the CPD were later studied considering the compound matrix framework in~\cite{domanov2013uniquenessCPS_part1_oneFactor,domanov2013uniquenessCPS_part2_overallDec}. In particular, the uni-mode uniqueness condition given in Expression~1.12 of~\cite{domanov2013uniquenessCPS_part1_oneFactor} shows that, under some mild additional conditions on factors $\bA$ and $\bB$, the condition~\eqref{eq:uni_mode_unique_cp1} from Theorem~\ref{thm:partial_uniq_cp} can be substituted by the alternative condition $\rank(\bC)+\min(\kr(\bA),\kr(\bB))\geq R+1$.

The uni-mode uniqueness of the CPD plays a fundamental role in the uniqueness of coupled decomposition models.
Also, the uni-mode uniqueness results for a single CPD have been extended to the \cred{coupled case~\eqref{eq:rev_coupled_CPD}} to provide more relaxed conditions to recover the factor $\bC$ \cite[Theorem 4.6]{sorensen2015coupledCPD_part1_unique}. 
Moreover, the coupled \cred{CPD~\eqref{eq:rev_coupled_CPD}} can be transformed into an LL1 BTD model (see~\cite[Section~4.3]{sorensen2015coupledCPD_part1_unique}). \cred{Thus, \eqref{eq:rev_coupled_CPD}} inherits the uniqueness of the LL1 BTD~\cite{lathauwer2008tensor_BTD2_uniqueness}, which guarantees the recovery of $\bC$.
These results on the uniqueness of $\bC$ serve as the basis to derive uniqueness guarantees for the coupled \cred{CPD~\eqref{eq:rev_coupled_CPD}} even when none of the individual tensors is unique~\cite{sorensen2015coupledCPD_part1_unique}.
Thus, uni-mode uniqueness has a significant impact in the development of recovery conditions in data fusion. In particular, it is possible to recover a ``high-resolution'' tensor from the coupled decomposition of multiple ``low-resolution'' ones, where the low-resolution tensors only need to ensure the uniqueness of a high-resolution factor matrix by being uni-mode unique, but not necessarily fully unique.

\section{Personalized coupled tensor decomposition}
\label{sec:prob_statement}

We consider the problem of recovering a common order-3 tensor $\tensor{C}\in\amsmathbb{R}^{M_1\times M_2\times M_3}$ and distinct order-3 tensors $\tensor{D}_k\in\amsmathbb{R}^{N_{k,1}\times N_{k,2}\times N_{k,3}}$ from a set of possibly degraded measurements $\tensor{Y}_k\in\amsmathbb{R}^{N_{k,1} \times N_{k,2} \times N_{k,3}}$, $k=1,\ldots,K$, which are acquired according to the following measurement model:
\begin{align}
    \tensor{Y}_k = \mathscr{P}_k(\tensor{C}) + \tensor{D}_k \,,
    \label{eq:meas_model}
\end{align}
where $\mathscr{P}_k$ is a separable operator of the form:
\begin{align}
    \mathscr{P}_k(\tensor{C}) = \tensor{C} \times_1 \bP_{k,1} \times_2 \bP_{k,2} \times_3 \bP_{k,3} \,,
    \label{eq:degrad_model}
\end{align}
with $\bP_{k,j}\in\amsmathbb{R}^{N_{k,j}\times M_j}$ being matrices of rank~$\min\{N_{k,j},M_j\}$ which describe the way in which the $j$-th mode of $\tensor{C}$ is measured in (or contributes to) $\tensor{Y}_k$.
The common tensor $\tensor{C}$ represents information/content that is shared among all datasets, linking them together, while the distinct \cred{tensors $\tensor{D}_k$ represent} information unique to each dataset.
The model in~\eqref{eq:meas_model} and~\eqref{eq:degrad_model} is illustrated in Figures~\ref{fig:coupled_model} and~\ref{fig:degradation}.

This general model can be related to different applications. For instance, in multimodal image fusion, tensors $\tensor{Y}_k$ can represent multichannel images acquired from the same scene, $\tensor{C}$ is a latent high-resolution image, operators $\bP_{k,j}$ represent spatial and spectral degradations during image acquisition, and $\tensor{D}_k$ contains inter-image variations/changes~\cite{prevost2020coupledTucker_hyperspectralSRR_TSP,borsoi2020tensorHSRvariability,prevost2022LL1_HSR}.
When $\tensor{Y}_k$ represents fMRI data from multiple tasks, $\tensor{C}$ contains functional brain networks that are active during data acquisition for all tasks, whereas $\tensor{D}_k$ contains task-specific functional networks~\cite{borsoi2022coupledCP_dec_fMRI}.
This model could also be used, e.g., to separate images of faces from multiple subjects (represented in $\tensor{Y}_k$) represent common ($\tensor{C}$) and subject specific ($\tensor{D}_k$) components as in~\cite{prince2007probabilisticLDA_sharedCommon}, or to perform fusion of EEG and fMRI data (represented in $\tensor{Y}_k$) accounting for components that might be present only in a single data modality ($\tensor{D}_k$)~\cite{acar2019biomarkersStructureRevealingTensorFusion,acar2014structureRevealingFusion}.

Our aim in this work is to recover $\tensor{C}$ and $\{\tensor{D}_k\}_k$ from $\{\tensor{Y}_k\}_k$ by considering a low-rank CPD tensor model for $\tensor{C}$ and $\tensor{D}_k$, which is described in the following.
Assuming that the common and distinct components admit CPDs, we can write the tensors in~\eqref{eq:meas_model} and~\eqref{eq:degrad_model} as:
\begin{align}
    \tensor{C} & = \ldbrack \bC_{1}, \bC_{2}, \bC_{3}\rdbrack \,,
    \label{eq:cp_z}
    \\
    \tensor{D}_k & = \ldbrack \bD_{k,1}, \bD_{k,2}, \bD_{k,3} \rdbrack \,,
    \label{eq:cp_psi}
    \\
    \mathscr{P}_k(\tensor{C}) & = \ldbrack  \bP_{k,1}\bC_{1}, \bP_{k,2}\bC_{2}, \bP_{k,3}\bC_{3}\rdbrack \,, 
    \label{eq:cp_Pz}
\end{align}
where $\bC_{j}\in\amsmathbb{R}^{M_j\times R}$ and $\bD_{k,j}\in\amsmathbb{R}^{N_{k,j}\times L_{k}}$, $j\in\{1,2,3\}$ are the factor matrices.
\cred{Note that the model in~\eqref{eq:cp_Pz} consists in a polyadic decomposition with linear constraints on the factor matrices. This is directly connected to the CANDELINC decomposition~\cite{douglas1980candelinc}. However, unlike the traditional \mbox{CANDELINC}, we do not require matrices $\bP_{k,j}$ to be full column rank. In such cases, recovery of the common factors $\bC_j$ can still be achieved in the present case due to the coupling between the different tensors.}
The tensor $\tensor{Y}_k$ also admits a polyadic decomposition with factor matrices equal to
\begin{align}
    \big[\bP_{k,j}\bC_{j}, \, \bD_{k,j} \big] \,,
    \label{eq:cp_yk}
\end{align}
for $j\in\{1,2,3\}$. \cred{Note, however, that although $\tensor{C}$ and $\tensor{D}_k$ are assumed to admit CPDs, the polyadic decomposition of $\tensor{Y}_k$ with the factor matrices given in~\eqref{eq:cp_yk} is not necessarily canonical, since the number of components might not be minimal.} This model is fairly general, and encompasses as particular cases several applications as illustrated by the following examples.

\begin{figure}[t]
    \centering
    \includegraphics[width=0.65\linewidth]{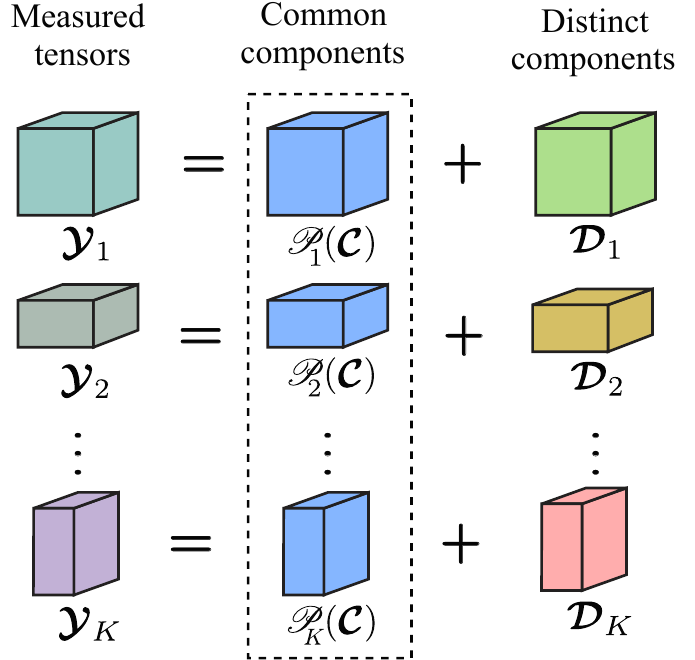}
    \vspace{-1.5ex}
    \caption{Illustration of the model in~\eqref{eq:meas_model}. A latent tensor $\tensor{C}$ is common in all measurements and acquired through an operator $\mathscr{P}_k$ while \cred{tensors} $\tensor{D}_k$ are distinct to each measurement, leading to a personalized decomposition.}
    \label{fig:coupled_model}
\end{figure}

\begin{figure}[t]
    \centering
    \includegraphics[width=0.5\linewidth]{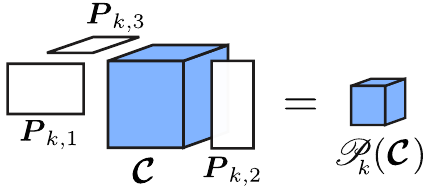}
    \vspace{-1.7ex}
    \caption{Illustration of the multilinear measurement model in~\eqref{eq:degrad_model}.}
    \label{fig:degradation}
\end{figure}

\begin{example}(fMRI decomposition) \label{ex:fMRIexample}
Multi-subject fMRI data of the brain can be represented in the form of spatial feature maps, which are obtained by performing voxel-wise regression of the raw time-series data~\cite{levin2017quantifyingInteractionMultipleDatasetsSchizophrenia}. Such feature representations can be ordered as tensors, whose modes correspond to the spatial dimensions (voxels), to the subject index, and to sets of features corresponding to different data acquisitions within a task. By acquiring data while subjects are performing certain tasks (e.g., recognizing number sequences displayed on a screen, or sound tunes), different tensors can be constructed, one for each task. Then, by decomposing them according to the model~\eqref{eq:meas_model},~\eqref{eq:degrad_model}, with the matrices $\bP_{k,j}$ being identity matrices, it is possible to recover the brain activations which are common to all tasks in $\tensor{C}$, and those which are specific to each of the tasks in $\tensor{D}_k$~\cite{borsoi2022coupledCP_dec_fMRI}. %

\end{example}

\begin{color}{red}
\begin{remark}
    Although the fMRI voxels are matricized in Example~\ref{ex:fMRIexample} (i.e., they are considered a single mode of the tensor), each of the three spatial dimensions of the fMRI volume can be considered as additional tensor modes, as investigated in~\cite{chatzichristos2019fMRI_unmixing_tensor}. Although this would require extending the proposed model to higher order tensors (the measurements would have order 5), it allows the use of more general low-rank models which can reduce the amount of unknowns to be estimated.
\end{remark}
\end{color}

\begin{example}(MRI fusion)
A high-resolution MRI tensor $\tensor{C}$ (whose modes represent the three spatial dimensions) can be computed efficiently by fusing three acquisitions, each with a smaller resolution (i.e., a smaller number of slices) in one of the modes, which can make acquisition time smaller~\cite{prevost2023nonLocalSuperResolutionMRI,prevost2022superResolutionMRItensorTucker}. The acquisition model can be written as in~\eqref{eq:meas_model},~\eqref{eq:degrad_model}, in which $\bP_{k,j}$ consist in either the identity matrix (for modes observed in full resolution), or downsampling and reweighing matrices (for modes observed in lower resolution). The variability between the acquisitions, represented in $\tensor{D}_k$, can originate from patient motion, which might not be perfectly compensated between the different acquisitions~\cite{shi2022motionFetalMRI}.
\end{example}

\begin{example}(Spectral image fusion)
\label{ex:HS_MS_fusion}
Spectral images can be represented as order-3 tensors, with two modes corresponding to spatial dimensions (pixels), and one mode to the spectral dimension (bands). 
A high resolution image $\tensor{C}$ can be recovered by fusing two images, one with low spectral resolution (an MSI), and one with low spatial resolution (an HSI)~\cite{kanatsoulis2018hyperspectralSRR_coupledCPD}. These images can be represented as degraded versions of $\tensor{C}$ according to model~\eqref{eq:meas_model},~\eqref{eq:degrad_model}, where matrices $\bP_{k,j}$ are either identity matrices (for the modes observed with high resolution), or degradation matrices represent the spatial or spectral responses of the sensors in a given mode. Inter-image variability between the different acquisitions, represented in $\tensor{D}_k$, originates from illumination or other acquisition differences between the images~\cite{borsoi2020tensorHSRvariability,borsoi2021spectralVariabilityReview}.
\end{example}

\cred{The model in~\eqref{eq:meas_model}--\eqref{eq:cp_Pz} considers that all factors of $\tensor{C}$ are coupled among the different datasets through the operators $\mathscr{P}_k$. However, when fusing heterogeneous datasets such as EEG and fMRI data~\cite{acar2017tensor_EEG_fMRIfusionSchizophrenia} or simulated and real metabolomics datasets~\cite{li2024longitudinalMetabolomicsTensor}, where some tensor modes such as the time dimension might not be necessarily aligned among datasets, a more flexible approach in which such modes of the tensor are left ``un-coupled'' (i.e., the coupling is enforced only in one or two of the modes) might be desirable. We will introduce such additional flexibility when presenting the optimization-based CTD framework later in Section~\ref{ssec:optim_bcd_sol}.}

In the following, we first study the uniqueness of this decomposition \cred{in Section~\ref{sec:uniqueness_full}}. Then, \cred{in Section~\ref{sec:computing_the_decomposition}} we develop two algorithms to compute \cred{it}, one being semi-algebraic, and another being based on an alternating optimization approach.

\section{Uniqueness of the decomposition}
\label{sec:uniqueness_full}

In this section, we provide both deterministic and generic conditions under \cred{which the proposed coupled decomposition model is unique, allowing the recovery of the common and distinct tensors $\tensor{C}$ and $\{\tensor{D}_k\}_k$ from the measurements $\{\tensor{Y}_k\}_k$.}
Those results will also motivate the development of a semi-algebraic algorithm to compute the decomposition.

\subsection{Deterministic recoverability conditions}

Considering the uniqueness and uni-mode uniqueness results for the CPD presented in Section~\ref{ssec:uni_mode_unique}, the following result concerning the recoverability of the proposed coupled CPD-based model from Section~\ref{sec:prob_statement} can be stated.

\begin{theorem}
\label{thm:uniqueness_deterministic}
Consider the measurement model \eqref{eq:meas_model}, \eqref{eq:degrad_model} \cred{and the CPD model~\eqref{eq:cp_z}, \eqref{eq:cp_psi}, \eqref{eq:cp_Pz},~\eqref{eq:cp_yk}.} Then, if the following conditions are satisfied:

\begin{description}
\item[A1:] There exists an $\eta\in[K]$ such that $\tensor{Y}_{\eta}$ is fully unique.

\item[A2:] For each ${j\in[3]}$ there exists $\xi_j\in[K]$ such that $\tensor{Y}_{\xi_j}$ is mode-$j$ unique and $\bP_{\xi_j,j}$ has full column rank.

\item[A3:] There exists at least one $j\in[3]$ such that $\xi_j\neq\eta$. Moreover, for all such $j$, the Kruskal rank of matrix $\big[\bP_{\eta,j}\bC_{j},\,\bP_{\eta,j}\bP_{\xi_j,j}^\dagger\bD_{\xi_j,j},\,\bD_{\eta,j}\big]$ is bigger than one.

\end{description}
\cred{then the common and distinct tensors $\tensor{C}$ and $\{\tensor{D}_k\}_{k}$ can be uniquely recovered from $\{\tensor{Y}_k\}_{k}$.}
\end{theorem}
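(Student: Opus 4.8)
The plan is to apply the three conditions in sequence, recovering the common tensor $\tensor{C}$ first and then obtaining every distinct tensor by subtraction. First I would invoke A1: since $\tensor{Y}_\eta$ is fully unique, its three factor matrices $[\bP_{\eta,j}\bC_j,\,\bD_{\eta,j}]$, $j\in[3]$, are determined from $\tensor{Y}_\eta$ up to a single common column permutation and column scaling. The essential difficulty is that this recovery does not reveal which columns belong to the common block $\bP_{\eta,j}\bC_j$ and which to the distinct block $\bD_{\eta,j}$; resolving this common/distinct labeling is the crux of the argument and the place where A2 and A3 enter.

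To separate the columns I would choose a mode $j_0$ with $\xi_{j_0}\neq\eta$ (which exists by A3) and use A2 for that mode. Mode-$j_0$ uniqueness of $\tensor{Y}_{\xi_{j_0}}$ recovers its $j_0$-th factor $[\bP_{\xi_{j_0},j_0}\bC_{j_0},\,\bD_{\xi_{j_0},j_0}]$ up to permutation and scaling. Since $\bP_{\xi_{j_0},j_0}$ has full column rank, $\bP_{\xi_{j_0},j_0}^\dagger\bP_{\xi_{j_0},j_0}=\bI$, so left-multiplying this factor by $\bP_{\eta,j_0}\bP_{\xi_{j_0},j_0}^\dagger$ sends its common block exactly onto $\bP_{\eta,j_0}\bC_{j_0}$ --- the same common block carried by $\tensor{Y}_\eta$'s $j_0$-th factor --- while its distinct block becomes $\bP_{\eta,j_0}\bP_{\xi_{j_0},j_0}^\dagger\bD_{\xi_{j_0},j_0}$. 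Condition A3 enters decisively here: $\kr(\bM_{j_0})\ge 2$ for $\bM_{j_0}=[\bP_{\eta,j_0}\bC_{j_0},\,\bP_{\eta,j_0}\bP_{\xi_{j_0},j_0}^\dagger\bD_{\xi_{j_0},j_0},\,\bD_{\eta,j_0}]$ forces every pair of these columns to be non-collinear and none to vanish. Consequently a column of $\tensor{Y}_\eta$'s $j_0$-th factor is common if and only if it is collinear to some column of the transformed $\tensor{Y}_{\xi_{j_0}}$ factor: a genuine common column coincides up to scale, whereas a distinct column $\bD_{\eta,j_0}$ cannot be collinear to anything in the transformed factor. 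This test is invariant to the unknown scalings and permutations, and because $\tensor{Y}_\eta$ shares one permutation across modes, the labeling obtained in mode $j_0$ separates all three factors at once, yielding $\mathscr{P}_\eta(\tensor{C})$ and $\tensor{D}_\eta$ exactly.

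With the common block identified I would reconstruct $\tensor{C}$ mode by mode. For each $j$, A2 supplies a mode-$j$ unique $\tensor{Y}_{\xi_j}$ whose $\bP_{\xi_j,j}$ has full column rank; applying the same collinearity test against the now-known reference columns $\bP_{\eta,j}\bC_j$ (valid for $\xi_j\neq\eta$ through the A3 bound on $\bM_j$, and trivial when $\xi_j=\eta$) isolates the common columns $\bP_{\xi_j,j}\bC_j$, from which $\bC_j=\bP_{\xi_j,j}^\dagger(\bP_{\xi_j,j}\bC_j)$. Matching the recovered common columns to the references keeps the column ordering consistent across the three modes, and the one remaining per-column scaling freedom is pinned down by requiring the resulting rank-one terms to reproduce the already-known $\mathscr{P}_\eta(\tensor{C})$ (whose factor columns are nonzero because $\tensor{Y}_\eta$ is fully unique). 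This yields $\tensor{C}=\ldbrack\bC_1,\bC_2,\bC_3\rdbrack$ as a tensor, and each distinct tensor then follows from $\tensor{D}_k=\tensor{Y}_k-\mathscr{P}_k(\tensor{C})$.

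The main obstacle, as anticipated, is the common/distinct separation: the whole argument rests on the collinearity criterion being simultaneously sound (no distinct column is mislabeled as common) and complete (each common column is matched to exactly one reference), and it is exactly to guarantee this that A3 demands both a comparison dataset $\xi_j\neq\eta$ and $\kr(\bM_j)\ge 2$. By contrast, inverting $\bP_{\xi_j,j}$, fixing the scalings, and subtracting are routine; the one point meriting care is verifying that the collinearity matching defines a genuine bijection between the $R$ reference common columns and those extracted from each $\tensor{Y}_{\xi_j}$, which again follows from the pairwise non-collinearity enforced by A3.
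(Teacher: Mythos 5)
Your proposal is correct and takes essentially the same route as the paper's proof: you anchor on the fully unique $\tensor{Y}_{\eta}$ (A1), transport the mode-$j$ factor of $\tensor{Y}_{\xi_j}$ through $\bP_{\eta,j}\bP_{\xi_j,j}^\dagger$ (A2), use the Kruskal-rank bound of A3 on $\big[\bP_{\eta,j}\bC_{j},\,\bP_{\eta,j}\bP_{\xi_j,j}^\dagger\bD_{\xi_j,j},\,\bD_{\eta,j}\big]$ to rule out any collinearity between common and distinct columns, and then enforce cross-mode consistency through the single permutation of $\tensor{Y}_{\eta}$ --- precisely the paper's Steps 1--3. The only difference is presentational: you phrase the argument as a constructive recovery procedure (which is exactly the paper's semi-algebraic algorithm, and the paper itself notes its proof is constructive), with your rank-one-term matching against $\mathscr{P}_{\eta}(\tensor{C})$ playing the role of the paper's scaling identities $\bLambda_{\xi_j,j}^{(1)}=\bLambda_{\eta,j}^{(1)}$ and $\bLambda_{\eta,1}^{(1)}\bLambda_{\eta,2}^{(1)}\bLambda_{\eta,3}^{(1)}=\bI$.
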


\begin{proof}

\medskip
\cred{Consider a decomposition for the coupled CPD model in~\eqref{eq:meas_model}--\eqref{eq:cp_yk} given by $\{\bC_j,\bD_{k,j}\}_{k,j}$. Suppose that there exists an alternative decomposition $\big\{\mywidehat{\bC}_{i},\mywidehat{\bD}_{k,i}\big\}_{i,k}$ that also satisfies equations~\eqref{eq:meas_model}--\eqref{eq:cp_yk}.} 

\textbf{Step 1 (linking the recovered factors).} 
Let us take a $j\in[3]$ satisfying $\xi_j\neq\eta$. Such a $j$ is guaranteed to exist due to hypothesis A3. The uniqueness of $\tensor{Y}_{\eta}$ and mode-$j$ uniqueness of $\tensor{Y}_{\xi_j}$ from hypotheses A1 and A2 mean that their decompositions \cred{$\big\{\mywidehat{\bC}_{i},\mywidehat{\bD}_{\eta,i}\big\}_i$ and $\big\{\mywidehat{\bC}_{j},\mywidehat{\bD}_{\xi_j,j}\big\}_j$ satisfy:}
\begin{align}
    \big[\bP_{\eta,i}\mywidehat{\bC}_{i},\,\mywidehat{\bD}_{\eta,i}\big] 
    &= \big[\bP_{\eta,i}\bC_{i},\,\bD_{\eta,i}\big] \bPi_{\eta} \bLambda_{\eta,i} \,,
    \label{eq:uniq_cpd3}
\end{align}
which holds for $i\in[3]$, and
\begin{align}
    \big[\bP_{\xi_j,j}\mywidehat{\bC}_{j},\,\mywidehat{\bD}_{\xi_j,j}\big]
    &= \big[\bP_{\xi_j,j}\bC_{j},\,\bD_{\xi_j,j}\big] \bPi_{\xi_j} \bLambda_{\xi_j,j} \,.
    \label{eq:uniq_cpd4}
\end{align}
where $\bPi_{\xi_j}$ and $\bPi_{\eta}$ are permutation matrices, and $\bLambda_{\xi_j,j}$ and $\bLambda_{\eta,i}$ are diagonal matrices, with $\bLambda_{\eta,1}\bLambda_{\eta,2}\bLambda_{\eta,3}=\bI$.

Moreover, hypothesis A2 means that $\bP_{\xi_j,j}$ has full column rank. This allows us to relate the common part of the mode-$j$ factor matrices of the tensor in measurements $\xi_j$ and $\eta$ as
\begin{align}
    \bP_{\eta,j}\mywidehat{\bC}_{j}
    ={}& \big[\bP_{\eta,j}\bP_{\xi_j,j}^\dagger\big] \bP_{\xi_j,j}\mywidehat{\bC}_{j} \,.
    \label{eq:unique_relate_commons_1}
\end{align}
Using~\eqref{eq:uniq_cpd3} and~\eqref{eq:uniq_cpd4} on the left- and right-hand side of equation~\eqref{eq:unique_relate_commons_1}, respectively, we obtain:
\begin{align}
    & \Big(\bP_{\eta,j}\bC_{j}\bPi_{\eta}^{(1)} + \bD_{\eta,j}\bPi_{\eta}^{(2)} \Big) \bLambda_{\eta,j}^{(1)}
    \label{eq:cp_thm_coupl_6}
    \\
    &= \bP_{\eta,j}\bP_{\xi_j,j}^\dagger\Big(\bP_{\xi_j,j}\bC_{j}\bPi_{\xi_j}^{(1)} + \bD_{\xi_j,j}\bPi_{\xi_j}^{(2)} \Big) \bLambda_{\xi_j,j}^{(1)}
    \label{eq:cp_thm_coupl_4}
    \\
    &= \Big(\bP_{\eta,j}\bC_{j}\bPi_{\xi_j}^{(1)} +  \bP_{\eta,j}\bP_{\xi_j,j}^\dagger\bD_{\xi_j,j}\bPi_{\xi_j}^{(2)} \Big) \bLambda_{\xi_j,j}^{(1)} \,,
    \label{eq:cp_thm_coupl_5}
\end{align}
where
\begin{align}
    \bPi_{\rho} &= \begin{bmatrix}
    \bPi_{\rho}^{(1)} & \bPi_{\rho}^{(3)} \\[0.1cm]
    \bPi_{\rho}^{(2)} & \bPi_{\rho}^{(4)} \\
    \end{bmatrix} \,, 
    \hspace{2ex} 
    \bLambda_{\rho,j} = \begin{bmatrix}
    \bLambda_{\rho,j}^{(1)} & \cb{0} \\[0.1cm]
    \cb{0} & \bLambda_{\rho,j}^{(2)} 
    \end{bmatrix} \,,
\end{align}
denote the partitions of $\bPi_{\rho}$ and $\bLambda_{\rho,j}$ into blocks of appropriate dimensions, for $\rho\in\{\xi_j,\eta\}$.
Matrices $\bPi_{\rho}^{(i)}$, $i=1,\ldots,4$ are blocks of a permutation matrix, thus having at most one element equal to one in each row or column, with the remaining ones being equal to zero.

\medskip
\textbf{Step 2 (solving the ambiguities).} In order to proceed, we need to show that~\eqref{eq:cp_thm_coupl_5} yields a permutation that allows for the correct recovery of the common part.  We can now show that the permutations will be block diagonal, and will not mix the common and distinct parts of the factor matrices. Again, let us consider a $j\in[3]$ satisfying hypothesis A3, such that $\xi_j\neq\eta$. Note that the columns of the matrices in the left- and right-hand side of~\eqref{eq:cp_thm_coupl_6}--\eqref{eq:cp_thm_coupl_5} can be expressed as:
\begin{align}
    \alpha (\ba + \bb) &= \beta (\ba'+\bc) \neq \cb{0} \,,
    \label{eq:proportional_simple_thm}
\end{align}
where $\ba$ and $\bb$ are either the zero vector or columns of 
$\bP_{\eta,j}\bC_{j}$ and $\bD_{\eta,j}$, respectively, and, similarly, $\ba'$ and $\bc$ are either the zero vector or columns of $\bP_{\eta,j}\bC_{j}$ and $\bP_{\eta,j}\bP_{\xi_j,j}^\dagger\bD_{\xi_j,j}$. 
Moreover, since $\bPi_{\rho}$ is a permutation matrix, only one vector at each side of~\eqref{eq:proportional_simple_thm} is different from zero; thus, one vector among among $\{\ba,\bb\}$, and one vector among among $\{\ba',\bc\}$ is equal to zero.
Scalars $\alpha$ and $\beta$ are scalings related to $\bLambda_{\eta,j}^{(1)}$ and $\bLambda_{\xi_j,j}^{(1)}$ and different from zero. 

Note that~\eqref{eq:proportional_simple_thm} implies that $\ba+\bb$ and $\ba'+\bc$ are proportional. However, since the Kruskal rank of $\big[\bP_{\eta,j}\bC_{j},\,\bP_{\eta,j}\bP_{\xi_j,j}^\dagger\bD_{\xi_j,j},\,\bD_{\eta,j}\big]$ is bigger than one due to hypothesis A3, \cred{no two columns of the matrix are proportional}, which implies that~\eqref{eq:proportional_simple_thm}
can only be satisfied if $\bb=\bc=\cb{0}$.
Since this holds for all columns of~\eqref{eq:cp_thm_coupl_6}--\eqref{eq:cp_thm_coupl_5}, we have that $\bPi_{\eta}^{(2)}=\cb{0}$ and $\bPi_{\xi_j}^{(2)}=\cb{0}$. This implies that the blocks $\bPi_{\eta}^{(1)}$ and $\bPi_{\xi_j}^{(1)}$ are themselves permutation matrices, and that $\bPi_{\xi_j}$ and $\bPi_{\eta}$ are block diagonal.

\medskip
\textbf{Step 3 (linking the three modes).}
Note that $\tensor{Y}_{\eta}$ is fully unique due to hypothesis A1, and the above results showed that $\bPi_{\eta}$ is block diagonal. We can also show that $\bPi_{\xi_j}$ has a block diagonal structure for $j\in[3]$, not mixing common and distinct parts of the factor matrices. Specifically, if $\xi_j\neq\eta$, then by the previous arguments we have directly that $\bPi_{\xi_j}$ is block diagonal with the desired structure. Otherwise, if $\xi_j=\eta$, then obviously $\bPi_{\xi_j}=\bPi_{\eta}$, thus being block diagonal with the same structure.
It remains to show that the permutations and scalings are consistent across the three modes.

To proceed, consider $\xi_j$, for each $j\in[3]$. Due to the block diagonal structure of $\bPi_{\xi_j}$, the mode-$j$ uniqueness in~\eqref{eq:uniq_cpd4}, the coupling between $\eta$ and $\xi_j$ in mode $j$, we have
\begin{align}
    & \bP_{\xi_j,j}\mywidehat{\bC}_{j} = \bP_{\xi_j,j}\bC_{j} \bPi_{\xi_j}^{(1)} \bLambda_{\xi_j,j}^{(1)} \,,
\end{align}
which, using the full column rank of $\bP_{\xi_j,j}$, leads to the following condition:
\begin{align}
    & \mywidehat{\bC}_{j} = \bC_{j} \bPi_{\xi_j}^{(1)} \bLambda_{\xi_j,j}^{(1)} \,.
    \label{eq:cp_thm_coupl_afterVar_1}
\end{align}
Due to the uniqueness of $\tensor{Y}_{\eta}$, we can use the block diagonal structure of $\bPi_{\eta}$ with equations~\eqref{eq:uniq_cpd3} and~\eqref{eq:cp_thm_coupl_afterVar_1} to obtain
\begin{align}
    \bP_{\eta,j}\mywidehat{\bC}_j & = \bP_{\eta,j}\bC_j\bPi_{\eta}^{(1)}\bLambda_{\eta,j}^{(1)}
    \nonumber \\
    & = \bP_{\eta,j} \bC_{j} \bPi_{\xi_j}^{(1)} \bLambda_{\xi_j,j}^{(1)} \,.
    \label{eq:cp_thm_coupl_afterVar_2}
\end{align}
Note that due to necessary conditions for the uniqueness of the CPD, $\tensor{Y}_{\eta}$ being unique implies that $\bP_{\eta,j}\bC_{j}$ does not have proportional columns~\cite{domanov2013uniquenessCPS_part2_overallDec}.
Since $\bLambda_{\xi_j,j}^{(1)}$ and $\bLambda_{\eta,j}^{(1)}$ are diagonal and $\bPi_{\eta}^{(1)}$ and $\bPi_{\xi_j}^{(1)}$ are permutation matrices, this means that equation~\eqref{eq:cp_thm_coupl_afterVar_2} is only satisfied when $\bPi_{\xi_j}^{(1)}=\bPi_{\eta}^{(1)}$ and $\bLambda_{\xi_j,j}^{(1)}=\bLambda_{\eta,j}^{(1)}$. 
This holds for all $j\in[3]$. Thus, since $\bLambda_{\eta,1}^{(1)}\bLambda_{\eta,2}^{(1)}\bLambda_{\eta,3}^{(1)}=\bI$, the reconstructed tensor satisfies
\begin{align}
    \mywidehat{\tensor{C}} & = \big\ldbrack \mywidehat{\bC}_{1}, \mywidehat{\bC}_{2}, \mywidehat{\bC}_{3} \big\rdbrack
    \nonumber \\
    & = \big\ldbrack \bC_{1}\bPi_{\eta}^{(1)}\bLambda_{\eta,1}^{(1)}, \bC_{2}\bPi_{\eta}^{(1)}\bLambda_{\eta,2}^{(1)}, \bC_{3}\bPi_{\eta}^{(1)}\bLambda_{\eta,3}^{(1)} \big\rdbrack
    \nonumber \\
    & = \tensor{C} \,,
\end{align}
which proves the recovery of $\tensor{C}$. The recovery of $\tensor{D}_k$ follows directly from~\eqref{eq:meas_model}, which concludes the proof.
\end{proof}

\begin{remark}
    Note that the conditions in Theorem~\ref{thm:uniqueness_deterministic} also imply the uniqueness of the factors of the common tensor $\tensor{C}$. Moreover, conditions for the uniqueness of the factors of the distinct tensors $\tensor{D}_k$ can be obtained by analyzing each tensor separately using Kruskal's condition, since $\tensor{D}_k=\tensor{Y}_k-\mathscr{P}_k(\tensor{C})$.
\end{remark}

The conditions in Theorem~\ref{thm:uniqueness_deterministic} are mild and interpretable. Condition A1 is equivalent to one of the measured tensors $\tensor{Y}_{\eta}$ being fully unique. However, it does not require its factors to be measured in full resolution (i.e., $\bP_{\eta,j}$ need not be full column rank). The full uniqueness of one tensor is used to ensure consistent permutations and scalings among the recovered common factors in the three modes.

Intuitively, condition A2 means that for each mode $j\in[3]$ we need one of the measured tensors to be mode-$j$ unique and to have the corresponding degradation matrix $\bP_{i,j}$ with full column rank. This means that the mode-$j$ factor matrix (containing both the common and distinct components) can be recovered in full resolution from this tensor.

Finally, condition A3 is key to allow us to relate factor matrices containing both common and distinct components from two different measured tensors in order to find out which of the components are the common ones.

\subsection{Generic recoverability conditions}

Based on Theorem~\ref{thm:uniqueness_deterministic} we can also derive generic uniqueness results which give explicit conditions based on the rank of the decomposition \cred{(i.e., the ranks of the common and distinct tensors, given by $R$ and $L_k$, respectively)}, on the measurement matrices $\bP_{k,j}$ and on the tensor dimensions that guarantee that a generic CTD 
following the model presented in Section~\ref{sec:prob_statement} is essentially unique. This result is given in the following theorem.

\begin{theorem}
\label{thm:uniqueness_generic}
Consider the measurement model \eqref{eq:meas_model}, \eqref{eq:degrad_model} \cred{and the CPD model~\eqref{eq:cp_z}, \eqref{eq:cp_psi}, \eqref{eq:cp_Pz},~\eqref{eq:cp_yk}}. Assume that all factors $\{\bC_j,\bD_{k,j}\}_{k,j}$ are drawn from some joint absolutely continuous distribution w.r.t. the Lebesgue measure. Then, if the following conditions are satisfied:
\begin{description}
\item[A4:] (uniqueness of $\eta$): There exists an $\eta\in[K]$ such that
\begin{align}
    \hspace{-6ex}
    \sum_{j\in[3]} \min\{\rank(\bP_{\eta,j}),R+L_{\eta}\} & \geq 2(R+L_{\eta})+2 \,.
    \label{eq:condition_hyp_A1}
\end{align}

\item[A5:] (uni-mode uniqueness of $\xi_j$) For each ${j\in[3]}$ there exist a $\xi_j\in[K]$ such that $\bP_{\xi_j,j}$ is full column rank and the following condition is satisfied:
\begin{align}
    & \hspace{-6ex}
    \min(N_{\xi_j,j},\min(M_j,R)+L_{\xi_j}) \,+
    \nonumber \\ 
    & \hspace{-6ex}
    \sum_{i\neq j} \min\{\rank(\bP_{\xi_j,i}),R+L_{\xi_j}\}
    \geq 2(R+L_{\xi_j})+2 \,.
    \label{eq:condition_hyp_A2}
\end{align}

\item[A6:] There exists at least one $j\in[3]$ such that $\xi_j\neq\eta$.

\end{description}
\cred{then the common and distinct tensors $\tensor{C}$ and $\{\tensor{D}_k\}_{k}$ can be uniquely recovered from $\{\tensor{Y}_k\}_{k}$ with probability one.}
\end{theorem}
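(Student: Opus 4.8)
The plan is to deduce Theorem~\ref{thm:uniqueness_generic} from the deterministic Theorem~\ref{thm:uniqueness_deterministic} by showing that, under an absolutely continuous distribution on the factors, conditions A4, A5 and A6 force the deterministic conditions A1, A2 and A3 to hold with probability one. The central tool is the standard genericity argument: each property I need (a given matrix attaining a prescribed rank or Kruskal rank, a pair of columns being non-proportional) fails only on the zero set of some nonzero polynomial in the entries of the factors; since a finite union of such sets has Lebesgue measure zero, it suffices to exhibit, for each property, a single instance of the factors where it holds, which certifies that the corresponding polynomial is not identically zero. I expect the \emph{main obstacle} to be establishing the generic rank and Kruskal-rank formulas for matrices of the form $[\bP\bC,\bD]$ in which $\bP$ is a fixed, possibly rank-deficient, matrix while $\bC$ and $\bD$ are generic, since the genericity must be argued subject to the constraint that the first block of columns is confined to the range of $\bP$.

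First I would handle A4 $\Rightarrow$ A1. The tensor $\tensor{Y}_{\eta}$ has mode-$j$ factor $[\bP_{\eta,j}\bC_j,\,\bD_{\eta,j}]$ with $R+L_\eta$ columns, and the key lemma to establish is that, generically,
\[
\kr\big([\bP_{\eta,j}\bC_j,\,\bD_{\eta,j}]\big)\;\geq\;\min\{\rank(\bP_{\eta,j}),\,R+L_\eta\}.
\]
The argument is that any selection of $s$ columns, $a$ of them from $\bP_{\eta,j}\bC_j$ and $b=s-a$ from $\bD_{\eta,j}$, is generically linearly independent whenever $a\le\rank(\bP_{\eta,j})$ and $s\le N_{\eta,j}$, which yields the stated bound. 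Summing over $j$ and invoking A4 gives $\sum_j\kr([\bP_{\eta,j}\bC_j,\bD_{\eta,j}])\ge 2(R+L_\eta)+2$, so Kruskal's Theorem~\ref{thm:kruskals} makes $\tensor{Y}_{\eta}$ fully unique, which is exactly A1.

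Next I would treat A5 $\Rightarrow$ A2. Here $\bP_{\xi_j,j}$ is full column rank by hypothesis, so that part of A2 is immediate, and for mode-$j$ uniqueness I would apply Guo's Theorem~\ref{thm:partial_uniq_cp} to $\tensor{Y}_{\xi_j}$, treating $[\bP_{\xi_j,j}\bC_j,\bD_{\xi_j,j}]$ as the factor to be recovered. This needs two generic rank evaluations: the rank of the mode-$j$ factor, which generically equals $\min\{N_{\xi_j,j},\min\{M_j,R\}+L_{\xi_j}\}$ because $\bP_{\xi_j,j}$ is injective; and, for the two remaining modes $i\ne j$, the Kruskal-rank lower bound $\min\{\rank(\bP_{\xi_j,i}),R+L_{\xi_j}\}$ from the lemma above. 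These are precisely the three terms of \eqref{eq:condition_hyp_A2}, so the inequality in Guo's theorem holds generically and the mode-$j$ factor of $\tensor{Y}_{\xi_j}$ is essentially unique; its no-zero-column requirement is also generic.

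Finally, A6 is verbatim the first part of A3, so it remains to verify the second part generically, namely $\kr([\bP_{\eta,j}\bC_j,\,\bP_{\eta,j}\bP_{\xi_j,j}^\dagger\bD_{\xi_j,j},\,\bD_{\eta,j}])>1$ for each $j$ with $\xi_j\ne\eta$. A useful observation is that A4 already forces $\rank(\bP_{\eta,j})\ge 2$ for every $j$: since each of the three summands in \eqref{eq:condition_hyp_A1} is at most $R+L_\eta$, the inequality can only hold if each summand is at least $2$ (which also forces $R+L_\eta\ge 2$). With $\rank(\bP_{\eta,j})\ge2$, and hence $\rank(\bP_{\eta,j}\bP_{\xi_j,j}^\dagger)=\rank(\bP_{\eta,j})\ge2$ as $\bP_{\xi_j,j}^\dagger$ has full row rank, none of the three column blocks is confined to a line; each pair of columns is then proportional (or zero) only on a measure-zero set, and excluding the finitely many such pairs leaves $\kr>1$ almost surely. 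Having established A1, A2 and A3 with probability one, Theorem~\ref{thm:uniqueness_deterministic} delivers the recovery of $\tensor{C}$ and $\{\tensor{D}_k\}_k$, completing the proof.
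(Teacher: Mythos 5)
Your proposal is correct and follows essentially the same route as the paper: it reduces to the deterministic Theorem~\ref{thm:uniqueness_deterministic} by showing A4 $\Rightarrow$ A1 via a generic Kruskal-rank bound plus Kruskal's condition, A5 $\Rightarrow$ A2 via the generic rank formula plus Theorem~\ref{thm:partial_uniq_cp}, and A6 together with $\rank(\bP_{\eta,j})\geq 2$ (forced by A4, which you justify explicitly where the paper merely asserts it) $\Rightarrow$ A3. The only difference is cosmetic: where the paper packages the generic rank and Kruskal-rank facts into Lemmas~\ref{lem:kruskalCat} and~\ref{lem:rankCat} (via a full-row-rank matrix $\bE$ and the Sylvester rank inequality), you argue them directly with column-wise measure-zero/polynomial arguments and the observation that right-multiplication by the surjective $\bP_{\xi_j,j}^{\dagger}$ preserves rank, arriving at the same statements.
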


Before proceeding with the proof of Theorem~\ref{thm:uniqueness_generic}, we need a couple auxiliary results, provided in the following lemmas.

\begin{lemma}
\label{lem:kruskalCat}
Let the elements of matrices $\bX_1\in\amsmathbb{R}^{R_1\times S_1},\ldots,\bX_K\in\amsmathbb{R}^{R_K\times S_K}$ be drawn according to a joint distribution that is absolutely continuous with respect to the Lebesgue measure in $\amsmathbb{R}^{R_1 S_1\cdots R_K S_K}$, and matrices $\bQ_1\in\amsmathbb{R}^{T\times R_1},\ldots,\bQ_K\in\amsmathbb{R}^{T\times R_K}$ be deterministic. If there exist a full row rank matrix $\bE\in\amsmathbb{R}^{\widetilde{T}\times T}$ such that matrices $\bE\bQ_1,\ldots,\bE\bQ_K$ also have full row rank, then the matrix $\bW=[\bQ_1\bX_1,\ldots,\bQ_K\bX_K]$ has Kruskal rank at least $\min\{\widetilde{T},\sum_k R_k\}$ with probability one.
\end{lemma}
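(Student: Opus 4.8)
The plan is to reduce the Kruskal-rank claim for $\bW$ to the same claim for the \emph{compressed} matrix $\bE\bW$, and then to settle the latter by a standard generic-nonvanishing (Zariski-closure) argument. The key observation is that left-multiplication by the full-row-rank matrix $\bE$ can only \emph{raise} the Kruskal rank: if a set of columns of $\bE\bW$ is linearly independent, then the corresponding columns of $\bW$ are linearly independent as well, since any dependence $\sum_l c_l\bw_{i_l}=\cb{0}$ would survive multiplication by $\bE$. Hence $\kr(\bW)\geq\kr(\bE\bW)$, and it suffices to prove $\kr(\bE\bW)\geq\widetilde{T}$.

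Write $\widetilde{\bQ}_k=\bE\bQ_k\in\amsmathbb{R}^{\widetilde{T}\times R_k}$, so that $\bE\bW=[\widetilde{\bQ}_1\bX_1,\ldots,\widetilde{\bQ}_K\bX_K]$ has exactly $\widetilde{T}$ rows. The hypothesis that each $\widetilde{\bQ}_k$ has full row rank forces $\widetilde{T}\leq R_k$ for every $k$, hence $\widetilde{T}\leq\min_k R_k\leq\sum_k R_k$, so the target value $\min\{\widetilde{T},\sum_k R_k\}$ is simply $\widetilde{T}$. Since $\bE\bW$ has $\widetilde{T}$ rows, any $\widetilde{T}$ of its columns form a square $\widetilde{T}\times\widetilde{T}$ matrix, and ``Kruskal rank at least $\widetilde{T}$'' is equivalent to ``every such square submatrix is nonsingular.'' Each such determinant is a polynomial in the entries of the $\bX_k$, and there are only finitely many choices of $\widetilde{T}$ columns; so, using that a nonzero polynomial vanishes only on a Lebesgue-null set (hence with probability zero under the assumed absolutely continuous distribution) and taking a finite intersection of probability-one events, it is enough to exhibit, for each fixed choice of $\widetilde{T}$ columns, one realization of the $\bX_k$ making that determinant nonzero.

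This is where the full-row-rank hypotheses do the work. Because each $\widetilde{\bQ}_k$ has full row rank $\widetilde{T}$, it maps $\amsmathbb{R}^{R_k}$ \emph{onto} $\amsmathbb{R}^{\widetilde{T}}$; consequently a column of $\bE\bW$ lying in block $k$, namely $\widetilde{\bQ}_k\bx$ with $\bx$ a column of $\bX_k$, can be set to any prescribed vector in $\amsmathbb{R}^{\widetilde{T}}$ by choosing $\bx$ in its preimage. Since distinct columns of $\bX_k$ are free variables, I can independently steer the $\widetilde{T}$ selected columns of $\bE\bW$ to be, say, the standard basis $\be_1,\ldots,\be_{\widetilde{T}}$, making the square submatrix the identity and hence nonsingular. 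This shows the determinant polynomial is not identically zero, which completes the argument.

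I expect the main obstacle to be expository rather than mathematical: making the reduction $\kr(\bW)\geq\kr(\bE\bW)$ airtight, and clearly justifying that columns drawn from the \emph{same} block $\bX_k$ remain independently adjustable, so that the ``set the selected columns to a basis'' construction is legitimate for every admissible multiset of chosen column indices. One should also record the degenerate bookkeeping (the identity $\min\{\widetilde{T},\sum_k R_k\}=\widetilde{T}$ and the implicit $\widetilde{T}\leq R_k$) to confirm that the construction delivers exactly the stated bound.
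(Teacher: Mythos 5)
Your proof is correct wherever the lemma itself is correct, and it shares the paper's opening move: the reduction $\kr(\bW)\geq\kr(\bE\bW)$, justified exactly as you justify it, by noting that a column dependence in $\bW$ survives multiplication by $\bE$. The difference is in how genericity of the compressed matrix $\bE\bW$ is established. The paper propagates absolute continuity: since each $\bE\bQ_k$ has full row rank, the entries of $\bE\bW=[\bE\bQ_1\bX_1,\ldots,\bE\bQ_K\bX_K]$ are jointly absolutely continuous (invoking Lemma~1 of~\cite{kanatsoulis2018hyperspectralSRR_coupledCPD}, applied recursively block by block), and it then cites the facts that a generic matrix has rank equal to its smaller dimension and that rank and Kruskal rank coincide generically. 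You instead argue minor by minor: each $\widetilde{T}\times\widetilde{T}$ column-submatrix has determinant a polynomial in the entries of the $\bX_k$, not identically zero because surjectivity of each $\bE\bQ_k$ lets you steer the selected columns (disjoint sets of free variables, even when several come from the same block) onto the standard basis; the null-set property of zero sets of nonzero polynomials plus a finite intersection over column subsets finishes. Your route is more self-contained---it needs neither the external absolute-continuity lemma nor the unproved ``generic Kruskal rank equals rank'' assertion---while the paper's route delivers the exact generic value of $\rank(\bE\bW)$ in one stroke. Your remark that the hypotheses force $\widetilde{T}\leq R_k$, so that the stated bound is simply $\widetilde{T}$, is a simplification the paper never makes explicit.

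One caveat, which is a defect of the statement rather than of your strategy, but which your write-up inherits: the Kruskal rank can never exceed the number of columns $\sum_k S_k$, so if $\sum_k S_k<\widetilde{T}$ the bound $\min\{\widetilde{T},\sum_k R_k\}=\widetilde{T}$ is unattainable, and your asserted equivalence ``$\kr(\bE\bW)\geq\widetilde{T}$ iff every $\widetilde{T}\times\widetilde{T}$ column-submatrix is nonsingular'' breaks down (the right side is vacuously true, the left side false). The quantity that the paper's argument actually establishes, and the one used when the lemma is applied in Corollary~\ref{lem:kruskalFactorCorollary} (where the bound is the column count $R+L_k$), is $\min\{\widetilde{T},\sum_k S_k\}$; the $\sum_k R_k$ in the statement appears to be a slip for $\sum_k S_k$. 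Your construction patches this reading immediately: when $\sum_k S_k<\widetilde{T}$, steer all $\sum_k S_k$ columns onto distinct standard basis vectors to conclude they are generically linearly independent, giving $\kr(\bE\bW)=\sum_k S_k$.
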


\begin{proof}
Let us denote by $\{\bw_1,\ldots,\bw_i\}$ a subset of $i$ columns of matrix $\bW$. Due to matrix $\bE$ having full row rank, if $\{\bE\bw_1,\ldots,\bE\bw_i\}$ are linearly independent vectors, this implies that $\{\bw_1,\ldots,\bw_i\}$ will also be linearly independent. Since this holds for any subset of columns of $\bW$, it implies that $\kr(\bW)\geq \kr(\bE\bW)$.

We now aim to compute the Kruskal rank of $\bE\bW=[\bE\bQ_1\bX_1,\ldots,\bE\bQ_K\bX_K]$. Since $\bE\bQ_i$ is assumed to be of full row rank for all $i$, this implies that if \cred{$\{\bX_1,\ldots,\bX_K\}$ are jointly} absolutely continuous, \cred{$\{\bE\bQ_i\bX_i\}\bigcup\{\bX_j:j\neq i\}$} will be \cred{jointly} absolutely continuous too (see  Lemma~1 of~\cite{kanatsoulis2018hyperspectralSRR_coupledCPD}). 
Thus, noting that the above result holds when applying the transformation $\bE\bQ_i$ to each block of variables, we can proceed recursively for each of the blocks $i=1,\ldots,K$ to show that the elements of matrix $\bE\bW$ are also jointly absolutely continuous. This, on the other hand, implies that $\rank(\bE\bW)=\min(\widetilde{T},\sum_k R_k)$ with probability one. Since the rank and Kruskal rank of generic matrices coincide, the result follows.
\end{proof}

Note that this lemma can be used to lower bound the generic Kruskal rank of the factor matrices of $\tensor{Y}_{k}$.
\begin{corollary}
    \label{lem:kruskalFactorCorollary}
    Suppose $\bC_j$ and $\bD_{k,j}$ are distributed according to a joint absolutely continuous measure, then
    \[\kr([\bP_{k,j}\bC_j, \bD_{k,j}])\geq\min\{\rank(\bP_{k,j}),R+L_{k}\}\,.\]
\end{corollary}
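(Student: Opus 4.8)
The plan is to recognize this corollary as a direct instance of Lemma~\ref{lem:kruskalCat} applied with $K=2$ blocks. First I would match the two settings by identifying $\bQ_1 = \bP_{k,j}\in\amsmathbb{R}^{N_{k,j}\times M_j}$ with $\bX_1 = \bC_j\in\amsmathbb{R}^{M_j\times R}$, and $\bQ_2 = \bI_{N_{k,j}}$ with $\bX_2 = \bD_{k,j}\in\amsmathbb{R}^{N_{k,j}\times L_k}$, so that
\[
\bW = [\bQ_1\bX_1,\,\bQ_2\bX_2] = [\bP_{k,j}\bC_j,\,\bD_{k,j}] \,.
\]
The matrices $\bQ_1,\bQ_2$ are deterministic (the measurement matrix $\bP_{k,j}$ is fixed, and the identity trivially so), while $\bC_j$ and $\bD_{k,j}$ are jointly absolutely continuous by hypothesis, which is precisely the generic setting required by the lemma. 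Note that $\bW$ has exactly $R+L_k$ columns.

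It then remains to exhibit a full row rank matrix $\bE$ with the largest possible number of rows $\widetilde{T}$ such that both $\bE\bQ_1 = \bE\bP_{k,j}$ and $\bE\bQ_2 = \bE$ retain full row rank. I would take $\widetilde{T} = r := \rank(\bP_{k,j})$ and set $\bE = \bU^\top$, where $\bU\in\amsmathbb{R}^{N_{k,j}\times r}$ collects an orthonormal basis of the column space of $\bP_{k,j}$. Then $\bE\bU = \bU^\top\bU = \bI_r$, so $\bE$ is full row rank; the second block $\bE\bQ_2 = \bU^\top$ is full row rank for the same reason; and writing $\bP_{k,j} = \bU\bM$ with $\bM\in\amsmathbb{R}^{r\times M_j}$ of rank $r$, the first block satisfies $\bE\bQ_1 = \bU^\top\bU\bM = \bM$, which is again full row rank. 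Thus all hypotheses of Lemma~\ref{lem:kruskalCat} hold with $\widetilde{T} = \rank(\bP_{k,j})$.

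Applying the lemma then yields $\kr(\bW) \geq \min\{\widetilde{T},\,R+L_k\} = \min\{\rank(\bP_{k,j}),\,R+L_k\}$, which is the claimed bound (the second entry of the minimum being the number of columns of $\bW$). The corollary is in this sense essentially a direct specialization of the lemma, and the only step requiring genuine thought is the construction of $\bE$: one must choose it so that $\widetilde{T}$ is as large as $\rank(\bP_{k,j})$ while simultaneously keeping both blocks full row rank. Any larger $\widetilde{T}$ would force $\bE\bP_{k,j}$ to drop rank, and projecting onto (an orthonormal basis of) the column space of $\bP_{k,j}$ is exactly what reconciles these two requirements. I therefore expect no substantive obstacle beyond verifying this reduction preserves the full-row-rank structure of both factors at once.
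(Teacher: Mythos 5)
Your proposal is correct and follows essentially the same route as the paper: both apply Lemma~\ref{lem:kruskalCat} with $\bQ_1=\bP_{k,j}$, $\bQ_2=\bI$, $\bX_1=\bC_j$, $\bX_2=\bD_{k,j}$, and $\widetilde{T}=\rank(\bP_{k,j})$. The only (immaterial) difference is your choice of $\bE$ as the transpose of an orthonormal basis of the column space of $\bP_{k,j}$, whereas the paper takes $\bE$ to be the selection matrix extracting $\rank(\bP_{k,j})$ linearly independent rows of $\bP_{k,j}$; both make $\bE$, $\bE\bQ_1$, and $\bE\bQ_2$ full row rank, so the lemma applies identically.
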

\begin{proof}
    Let us take $\bQ_1=\bP_{k,j}$, $\bQ_2=\bI$, and consider $\bE\in\amsmathbb{R}^{\rank(\bP_{k,j})\times N_{k,j}}$ to be the matrix which selects the $\rank(\bP_{k,j})$ linearly independent rows of $\bP_{k,j}$. Clearly, $\rank(\bE)=\rank(\bE\bQ_1)=\rank(\bE\bQ_2)=\rank(\bP_{k,j})$. Applying Lemma~\ref{lem:kruskalCat} gives the desired result.
\end{proof}

\begin{lemma}
\label{lem:rankCat}
Consider $\bX_1\in\amsmathbb{R}^{M\times R}$ and $\bX_2\in\amsmathbb{R}^{N\times L}$ distributed according to a joint absolutely continuous measure, and $\bQ\in\amsmathbb{R}^{N\times M}$ deterministic and full column rank. Then, for $\bZ=[\bQ\bX_1,\bX_2]$, 
$\rank(\bZ)=\min(N,\min(M,R)+L)$, 
with probability one.
\end{lemma}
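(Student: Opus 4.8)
The plan is to establish the claimed value by a matching pair of bounds — a deterministic upper bound and a probability-one lower bound. Write $r=\min(N,\min(M,R)+L)$. The essential point to keep in mind throughout is that $\bZ=[\bQ\bX_1,\bX_2]$ is \emph{not} a generic $N\times(R+L)$ matrix: its first block is forced to lie in the $M$-dimensional subspace $\operatorname{col}(\bQ)\subseteq\amsmathbb{R}^N$, so the black-box statement ``a generic matrix of a given size has full rank'' cannot be applied to $\bZ$ directly. Instead I would work throughout with the parametrization by $(\bX_1,\bX_2)$.

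First I would prove the deterministic upper bound $\rank(\bZ)\le r$, valid for every realization. Since $\bQ$ has full column rank it is injective, so $\rank(\bQ\bX_1)=\rank(\bX_1)\le\min(M,R)$, whence $\operatorname{col}(\bQ\bX_1)$ has dimension at most $\min(M,R)$; the block $\bX_2$ contributes at most $L$ further dimensions. Together with the trivial bound $\rank(\bZ)\le N$, this yields $\rank(\bZ)\le\min(N,\min(M,R)+L)=r$ for all $(\bX_1,\bX_2)$.

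For the generic lower bound I would argue that the rank-deficiency locus is a proper algebraic subvariety. The event $\rank(\bZ)<r$ is the simultaneous vanishing of all $r\times r$ minors of $\bZ$; because the entries of $\bZ$ are linear functions of the entries of $(\bX_1,\bX_2)$, these minors are polynomials in $(\bX_1,\bX_2)$, so $\{(\bX_1,\bX_2):\rank(\bZ)<r\}$ is a real algebraic variety in $\amsmathbb{R}^{MR+NL}$. To show it is proper it suffices to exhibit one pair attaining rank $r$. Choose $\bX_1$ of rank $\min(M,R)$, so $V:=\operatorname{col}(\bQ\bX_1)$ has dimension $\min(M,R)$ inside $\operatorname{col}(\bQ)$. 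If $\min(M,R)+L\le N$ there is room in $\amsmathbb{R}^N$ to pick the $L$ columns of $\bX_2$ so as to extend a basis of $V$ to $\min(M,R)+L$ independent vectors, giving rank exactly $r$; if $\min(M,R)+L>N$, then $L\ge N-\min(M,R)$, so $N-\min(M,R)$ columns of $\bX_2$ can be chosen to complete $V$ to a basis of $\amsmathbb{R}^N$ (the remaining columns arbitrary), giving rank $N=r$. Hence the variety is proper and therefore Lebesgue-null.

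Finally, since the joint law of $(\bX_1,\bX_2)$ is absolutely continuous with respect to the Lebesgue measure on $\amsmathbb{R}^{MR+NL}$, this null set has probability zero, so $\rank(\bZ)\ge r$ almost surely; combined with the deterministic upper bound, $\rank(\bZ)=r$ with probability one. The only genuine subtlety is the structural constraint on $\bZ$ noted at the outset, which is precisely why the witness construction — rather than an off-the-shelf genericity claim — is required; that construction is elementary linear algebra, split into the two cases above, and is the one step worth carrying out carefully.
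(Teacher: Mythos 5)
Your proof is correct, and it takes a genuinely different (though related) route from the paper's. The paper first multiplies $\bZ$ on the left by the invertible matrix $\widetilde{\bQ}=[(\bQ^{\dagger})^\top,\bQ_{\perp}^\top]^\top$, obtaining a block-triangular matrix with blocks $\bX_1$, $\bQ^{\dagger}\bX_2$ and $\bQ_{\perp}\bX_2$, and then splits on $R$ versus $M$: for $R\ge M$ it invokes a Schur-complement rank identity together with genericity of the blocks, while for $R\le M$ it uses exactly your device --- exhibit one matrix of the constrained (block-triangular) form with rank $T=\min(N,R+L)$, so that the determinant of the leading $T\times T$ submatrix is a not-identically-zero polynomial whose zero set is Lebesgue-null. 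You dispense with the change of coordinates entirely: you prove the deterministic upper bound $\rank(\bZ)\le r$ from injectivity of $\bQ$, then run the witness-plus-polynomial argument once, directly on the $r\times r$ minors of $\bZ$ viewed as polynomials in $(\bX_1,\bX_2)$; your only case split (whether $\min(M,R)+L\le N$) lives inside the witness construction, where you extend a basis of the column space of $\bQ\bX_1$ by columns of $\bX_2$ --- legitimate because $N\ge M\ge\min(M,R)$, which is forced by $\bQ$ having full column rank. The net effect is a more uniform and more elementary argument (no Schur complement, no coordinate change, no split on $R$ vs.\ $M$), at the price of a slightly more hands-on witness; the paper's route, in exchange, makes the structural constraint on $\bZ$ explicit as a zero block and localizes the randomness into jointly absolutely continuous blocks --- the same issue you flag when warning that $\bZ$ is not a generic $N\times(R+L)$ matrix. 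Both proofs ultimately rest on the same measure-theoretic fact (the zero set of a nonzero polynomial is Lebesgue-null, combined with absolute continuity of the law of $(\bX_1,\bX_2)$), and your observation that the bad event is contained in the zero set of the single minor that is nonvanishing at the witness is the correct way to pass from properness of the variety to probability zero.
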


\begin{proof}
Consider the square matrix $\widetilde{\bQ}=[(\bQ^{\dagger})^\top, \bQ_{\perp}^\top]^\top$, where $\bQ^{\dagger}$ is the left pseudoinverse of $\bQ$ and $\bQ_{\perp}$ its orthogonal complement. Since $\bQ$ is full column rank, $\widetilde{\bQ}$ is invertible, and thus $\rank(\bZ)=\rank(\widetilde{\bZ})$, where
\begin{align}\label{eq:tildeZ_blktrg}
    \widetilde{\bZ} = \widetilde{\bQ}\bZ = 
    \begin{bmatrix}
        \bX_1 & \widetilde{\bX}_{2,1} \\
        \cb{0} & \widetilde{\bX}_{2,2} 
    \end{bmatrix} \,,
\end{align}
where $\widetilde{\bX}_{2,1}=\bQ^{\dagger}\bX_2$ and $\widetilde{\bX}_{2,2}=\bQ_{\perp}\bX_2$. Note that since $\widetilde{\bQ}$ is deterministic, $\bX_1$, $\widetilde{\bX}_{2,1},$ and $\widetilde{\bX}_{2,2}$ are jointly absolutely continuous. 
We will consider two cases depending on the shape of $\bX_1$.

\emph{Case $R \ge M$}: In this case,
$\bX_1$ is generically full row rank, and thus by results related to the Schur complement~\cite[fact~6.5.6]{bernstein2009matrixBook}
the rank of the matrix is equal to
\[
\rank(\widetilde{\bZ}) = \rank(\bX_1) + \rank(\widetilde{\bX}_{2,2}) \,,
\]
with the right hand side equal generically to $M+\min(N-M,L) = \min(N,M+L)$.

\emph{Case $R \le M$}: Denote $T = \min(N,R+L)$. We know that the rank is bounded by the dimensions of the matrix, i.e., $\rank(\widetilde{\bZ}) \le T$. To prove that the equality holds, we \cred{use the standard argument that is often used for proving generic (holding with probability $1$) properties~\cite{lyu2020nonlinearCCA}.} We first show that \cred{there exists} a matrix $\widetilde{\bZ}_0$ 
with the block-triangular form~\eqref{eq:tildeZ_blktrg} that has 
\cred{rank $T$. Such a matrix can be constructed as}
\begin{align}
\widetilde{\bZ}_0 =
    \begin{bmatrix}
        \bI_T &  \cb{0} \\
        \cb{0} & \cb{0}
    \end{bmatrix} \,.
\end{align}
\cred{
Note that this matrix is still in the same form as~\eqref{eq:tildeZ_blktrg}, although the dimensions of the blocks are different. 
Then the existence of such $\widetilde{\bZ}_0$ implies that $\rank(\widetilde{\bZ}) = T$ with probability $1$.
Below we explain in detail why it is the case (although it can be also deduced from the semicontinuity of the rank function).} 

\cred{By construction, $\det([\widetilde{\bZ}_0]_{1:T,1:T}) = 1 \neq 0$, which is a multivariate polynomial in $\widetilde{\bZ}$. Since the zero set (set of solutions) of a nonzero polynomial equation is necessarily of Lebesgue measure zero~\cite{caron2005zeroPolynimials}, we conclude that} $\det([\widetilde{\bZ}]_{1:T,1:T}) \neq 0$ with probability $1$ for a matrix of the form~\eqref{eq:tildeZ_blktrg}, hence $\rank(\widetilde{\bZ}) = T$ with probability $1$. Combining the two cases gives the desired result.
\end{proof}

Considering the results in Lemmas~\ref{lem:kruskalCat} and~\ref{lem:rankCat}, as well as the deterministic uniqueness result in Theorem~\ref{thm:uniqueness_deterministic}, we can proceed to prove the generic uniqueness case of Theorem~\ref{thm:uniqueness_generic}.

\begin{proof}
\textbf{Step 1 (full uniqueness).} Let us consider the factor matrix of the $j$-th mode of the $k$-th tensor. Due to the joint absolute continuity of $\bC_j$ and $\bD_{k,j}$, we can use Corollary~\ref{lem:kruskalFactorCorollary} to show that the Kruskal rank of the $j$-th factor matrix of $\tensor{Y}_{k}$ is lower bounded as 
\[\kr([\bP_{k,j}\bC_j, \bD_{k,j}])\geq\min\{\rank(\bP_{k,j}),R+L_{k}\}\]
with probability one. 
Thus, for the tensor indexed by $\eta$, condition~\eqref{eq:condition_hyp_A1} of A4 leads to
\begin{align}
    \sum_{j\in[3]} \kr([\bP_{\eta,j}\bC_j, \bD_{\eta,j}]) 
    & \geq \sum_{j\in[3]} \min\{\rank(\bP_{\eta,j}),R+L_{\eta}\} 
    \nonumber \\
    & \geq 2(R+L_{\eta})+2 \,,
\end{align}
which implies that the CPD of $\tensor{Y}_{\eta}$ is essentially unique. This means that condition A1 of Theorem~\ref{thm:uniqueness_deterministic} is satisfied with probability one.

\medskip
\textbf{Step 2 (uni-mode uniqueness).} 
Moreover, for each $\xi_j$, $j\in[3]$, for which $\bP_{\xi_j,j}$ is full column rank due to hypothesis A5, we can use Lemma~\ref{lem:rankCat} to show that, with probability one, the rank of the $j$-th factor matrix of $\tensor{Y}_{\xi_j}$ is equal to 
\[\rank([\bP_{\xi_j,j}\bC_j, \bD_{\xi_j,j}])=\min(N_{\xi_j,j},\min(M_j,R)+L_{\xi_j})\,,\] and from Corollary~\ref{lem:kruskalFactorCorollary} it has no zero columns. This, along with condition~\eqref{eq:condition_hyp_A2} of hypothesis A5, leads to
\begin{align}
    & \rank([\bP_{\xi_j,j}\bC_j, \bD_{\xi_j,j}]) + \sum_{i\neq j} \kr([\bP_{\xi_j,i}\bC_i, \bD_{\xi_j,i}]) 
    \nonumber\\
    & \geq \min(N_{\xi_j,j},\min(M_j,R)+L_{\xi_j})
    \nonumber \\
    & \quad\, + \sum_{i\neq j} \min\{\rank(\bP_{\xi_j,i}),R+L_{\xi_j}\}
    \nonumber\\
    & \geq 2(R+L_{\xi_j})+2 \,,
\end{align}
for all $j\in[3]$. Combining this result with Theorem~\ref{thm:partial_uniq_cp} means that the mode-$j$ factor matrix of $\tensor{Y}_{\xi_j}$ is essentially unique. This implies that condition A2 of Theorem~\ref{thm:uniqueness_deterministic} is satisfied with probability one.

\medskip
\textbf{Step 3 (auxiliary results to solve the ambiguities).} 
Using hypothesis A6, let us choose a $\xi_j\in[K]$ satisfying $\xi_j\neq\eta$. Since $\rank(\bP_{\eta,j})\geq 2$ due to assumption A4, we can find an $\bE\in\amsmathbb{R}^{\widetilde{T}\times N_{\eta,j}}$, $\widetilde{T}\geq 2$ such that $\bE\bP_{\eta_j,j}$ has full row rank. Moreover, since $\bP_{\xi_j,j}$ has full column rank due to assumption A5, $\rank(\bP_{\xi_j,j}^{\dagger})=M_j$, and due to the Sylvester rank inequality, we have
\begin{align}
    \rank(\bE\bP_{\eta,j}\bP_{\xi_j,j}^\dagger) & \geq \rank(\bE\bP_{\eta,j}) + \rank(\bP_{\xi_j,j}^{\dagger}) - M_j
    \nonumber \\
    & = \rank(\bE\bP_{\eta,j}) \,,
\end{align}
which implies $\bE\bP_{\eta,j}\bP_{\xi_j,j}^\dagger$ also has full row rank. Thus, due to the joint absolute continuity of $\bC_{j},\bD_{\xi_j,j},\bD_{\eta,j}$ (a consequence of $\eta\neq\xi_j$), using Lemma~\ref{lem:kruskalCat} we have that with probability one the Kruskal rank of $\big[\bP_{\eta,j}\bC_{j},\,\bP_{\eta,j}\bP_{\xi_j,j}^\dagger\bD_{\xi_j,j},\,\bD_{\eta,j}\big]$ is bigger than one. This holds for any $\xi_j\neq\eta$. Thus, condition A3 of Theorem~\ref{thm:uniqueness_deterministic} is satisfied with probability one.
Since A1, A2 and A3 are satisfied, \cred{Theorem~\ref{thm:uniqueness_deterministic} implies that tensors $\tensor{C}$ and $\{\tensor{D}_k\}_k$ can be uniquely recovered from $\{\tensor{Y}_k\}_k$} with probability one.
\end{proof}

The generic uniqueness result in Theorem~\ref{thm:uniqueness_generic} allows us to more clearly illustrate how considering conditions based on uni-mode uniqueness can lead to more relaxed conditions for the uniqueness of the full CTD. This can be illustrated by the following example.

\begin{example}
\label{ex:theoremConditionsRank}
Let us consider a common tensor $\tensor{C}$, with size ${7\times 11\times 9}$, and distinct components $\tensor{D}_k$, with CP ranks of $R=5$ and $L_k=5$ for all $k$, whose factor matrices are drawn from some joint continuous distribution. Suppose we have $K=3$ measured tensors $\tensor{Y}_1$, $\tensor{Y}_2$, $\tensor{Y}_3$, with sizes $10\times5\times7$, $5\times12\times7$ and $5\times7\times10$, respectively. The measured tensors satisfy the model~\eqref{eq:meas_model}, \eqref{eq:degrad_model} for some full rank matrices $\bP_{k,j}$ of appropriate dimensions.
This CTD is generically unique according to Theorem~\ref{thm:uniqueness_generic}. However, only tensor $\tensor{Y}_2$ is guaranteed to be fully unique, satisfying condition~\eqref{eq:condition_hyp_A1}. Tensors $\tensor{Y}_1$ and $\tensor{Y}_3$ are only uni-mode unique, satisfying~\eqref{eq:condition_hyp_A2} for modes $j=1$ and $j=3$, respectively.
While our result based on the uni-mode uniqueness allows us to consider ranks $R+L_k\leq10$, if we were to require $\tensor{Y}_1$ and $\tensor{Y}_3$ to be fully unique instead (satisfying~\eqref{eq:condition_hyp_A1}), uniqueness would only hold for ranks satisfying $R+L_1\leq8$ and $R+L_3\leq9$.

\end{example}

\cred{The works~\cite{sorensen2020factorizationSharesUnshared,kanatsoulis2018hyperspectralSRR_coupledCPD} consider the uniqueness of models related to the one in Section~\ref{sec:prob_statement}. However, these results are not directly comparable to ours. Flexible coupled decompositions with shared and distinct components were considered in~\cite{sorensen2020factorizationSharesUnshared}, but without a general measurement model like~\eqref{eq:meas_model},~\eqref{eq:degrad_model}. Moreover, the uniqueness results in~\cite{sorensen2020factorizationSharesUnshared} are based on algebraic and topological properties of matrices and a graph constructed from the measured tensors and factor matrices, whereas the conditions in Theorem~\ref{thm:uniqueness_generic} are directly based on the dimensions of the tensor and on the ranks of $\bP_{k,j}$. The results in~\cite{kanatsoulis2018hyperspectralSRR_coupledCPD}, which considers the spectral image fusion problem of Example~\ref{ex:HS_MS_fusion}, are easier to compare to ours. However, although both our results and those in~\cite{kanatsoulis2018hyperspectralSRR_coupledCPD} require one of the measured tensors to be fully unique,~\cite{kanatsoulis2018hyperspectralSRR_coupledCPD} does not consider distinct components (i.e., $\tensor{D}_k=\tensor{0}$) and exploits the fact that several $\bP_{k,j}$ are identity matrices. This leads to conditions for the recovery of the ``high resolution'' factor matrices that can be slightly weaker the uni-mode uniqueness used in our results.}

\section{Computing the decomposition}
\label{sec:computing_the_decomposition}

\cred{This section presents algorithms to compute the coupled CP decomposition according to the model presented in Section~\ref{sec:prob_statement}.
Two approaches are proposed. The first, presented in Section~\ref{ssec:decomp_sol_algebraic}, is a semi-algebraic strategy, while the second, presented in Section~\ref{ssec:optim_bcd_sol} uses an optimization algorithm.}

\subsection{A semi-algebraic solution}
\label{ssec:decomp_sol_algebraic}

Algebraic algorithms can provide accurate solutions in low-noise settings and constitute good quality initializations to optimization strategies~\cite{fu2020computingTensorOptimizationReview}.
Theorem~\ref{thm:uniqueness_deterministic} provides a means of devising a semi-algebraic algorithm. It \cred{requires} the CPD of at least two and at most four of the \cred{measured tensors}, and solutions to linear algebra and assignment problems, as will be detailed in the sequence. 
\cred{The approach is referred to as semi-algebraic because except for the computation of the CPDs, which might involve the use of optimization algorithms, the rest of the procedure is based on simple algebraic operations.}
To this end, let us assume the conditions in Theorem~\ref{thm:uniqueness_deterministic} to hold.

\paragraph*{\textbf{Step 0 (measurements selection)}}
Select an $\eta\in[K]$ and $\xi_j\in[K]$, $j\in[3]$ such that $\tensor{Y}_{\eta}$ is fully unique, \cred{$\tensor{Y}_{\xi_j}$ is mode-$j$ unique,} $\bP_{\xi_j,j}$ has full column rank, and the cardinality of the set $\{\eta,\xi_1,\xi_2,\xi_3\}$ is at least two. This ensures there exists a $\xi_j\neq\eta$. 
\cred{Note that there can be several possible choices of $\eta$ and $\xi_j$ satisfying these conditions, introducing a measure of user choice. First, since we will compute CPDs of the tensors indexed by $\{\eta,\xi_1,\xi_2,\xi_3\}$, the smaller the cardinality of this set the smaller the number of CPDs to compute. Moreover, we can also prioritize choices of $\eta$ or $\xi_j$ for tensors that are easier to decompose. For example, for tensors of rank $R+L_k$, typically the larger their dimensions $N_{k,j}$ the easier it is to compute its decomposition accurately (i.e., larger tensors might have less collinearity between the columns of the factor matrices, reducing convergence issues due to ``swamping''~\cite{rajih2008enhancedLineSearchPARAFAC}). In addition, if $\tensor{Y}_k$ are contaminated by noise, we might prioritize measurements with higher signal-to-noise ratio (SNR).}

\paragraph*{\textbf{Step 1 (recovery of factors separately)}}
Compute the (unique) rank-$R+L_{\eta}$ CPD of $\tensor{Y}_{\eta}$. Denote the recovered factor matrices by $\bU_{\eta,i}$, $i\in[3]$.

\paragraph*{\textbf{Step 2 (recover a different uni-mode factor)}}
Pick $j$ such that $\xi_j\neq\eta$. Since $\tensor{Y}_{\xi_j}$ is mode-$j$ unique by assumption, compute its CPD with rank $R+L_{\xi_j}$, and denote the recovered (unique) mode-$j$ factor matrix by $\bU_{\xi_j,j}$, and the remaining factors (not necessarily unique) by $\bU_{\xi_j,i}$, $i\neq j$.

\paragraph*{\textbf{Step 3 (correcting ambiguities)}}
We find the common part of the factor by searching for an optimal assignment between the transformed columns of $\bU_{\eta,j}$ and $\bU_{\xi_j,j}$. More precisely, we first map the factors to the same column space using the transformation \[\bV_{\xi_j,j}=\bP_{\eta,j}\bP_{\xi_j,j}^{\dagger}\bU_{\xi_j,j}\,,\]
recalling that $\bP_{\xi_j,j}$ is left-invertible by assumption. Now we find an optimal correspondence between $R$ columns of matrices $\bV_{\xi_j,j}$ and $\bU_{\eta,j}$. To do this, we first construct a matrix $\bZ\in\amsmathbb{R}^{R+L_{\eta} \times R+L_{\xi_j}}$ with the normalized inner product between each pairs of columns form these matrices, that is,
\begin{align}
    Z_{n,m} = \frac{|\langle \bu_n, \bv_n \rangle|}{\|\bu_n\|_2\|\bv_m\|_2} \,,
\end{align}
where $\bu_n$ is the $n$-th column of $\bU_{\eta,j}$, and $\bv_m$ is the $m$-th column of $\bV_{\xi_j,j}$. 
Then, we find a partial permutation that best aligns the common columns among these matrices, as the binary matrix $\bDelta\in\{0,1\}^{R+L_{\eta} \times R+L_{\xi_j}}$ that solves:
\begin{subequations}
\begin{align}
    \max_{\Delta_{r,s}\in \{0,1\}} & \,\, \sum_{r,s} \Delta_{r,s} \, Z_{r,s}
    \\
    {\rm s.t.} \,\,\, & \cb{1}^\top \bDelta \cb{1} = R \,, \quad \bDelta \cb{1} \leq \cb{1} \,, \quad  \bDelta^\top \cb{1} \leq \cb{1} \,.
\end{align}
\label{eq:matching_problem}
\end{subequations}
Note that this is an unbalanced assignment problem with fixed cardinality $R$, which can be solved efficiently~\cite{ramshaw2012unbalancedAssignments}.

From the solution $\widehat{\bDelta}$ to problem~\eqref{eq:matching_problem} we define two matrices: $\bM_{\eta}$, equal to the identity matrix with the columns corresponding to the indices of the all-zero rows of $\widehat{\bDelta}$ removed  (size $R\times R+L_{\eta}$), and $\bM_{\xi_j}$, which is the matrix $\widehat{\bDelta}$ with the all-zero rows removed (of size $R\times R+L_{\xi_j}$).
Now, we can select the common parts and correct the permutations for all factors of tensors $\eta$ and $\xi_j$:
\begin{align}
    \mywidehat{\bX}_{\eta,i}^C &= \bU_{\eta,i}\bM_{\eta} \,, 
    \label{eq:algeb_alg_xc_1}
    \\
    \mywidehat{\bX}_{\xi_j,i}^C &= \bU_{\xi_j,i}\bM_{\xi_j}^\top \,.
    \label{eq:algeb_alg_xc_2}
\end{align}
for $i\in\{1,2,3\}$. Finally, we compute $\mywidehat{\bC}_j$ by compensating the scaling ambiguities as $\mywidehat{\bC}_j=\bP_{\xi_j,j}^{\dagger}\mywidehat{\bX}_{\xi_j,j}^C\bLambda_j$, where $\bLambda_j$ is the diagonal matrix solving 
\begin{align}
    \bLambda_j = \mathop{\arg\min}_{\text{diagonal matrix } \bLambda} \,\,\, \big\|\mywidehat{\bX}_{\eta,j}^C - \bP_{\eta,j}\bP_{\xi_j,j}^{\dagger}\mywidehat{\bX}_{\xi_j,j}^C\bLambda\big\|_F \,.
    \label{eq:scaling_compensatot}
\end{align}
This gives us the $j$-th common factor with permutation scaling ambiguities matching those of tensor $\eta$.
\cred{The solution to~\eqref{eq:scaling_compensatot} can be computed in a straightforward manner for each diagonal element of $\bLambda_j$. Moreover, in a setting where $\mywidehat{\bX}_{\eta,j}^C$ and 
$\mywidehat{\bX}_{\xi_j,j}^C$ are not computed exactly, the solution to~\eqref{eq:scaling_compensatot} will result in an approximate scaling that best matches the two factor matrices.}

\begin{algorithm} [!t]
\footnotesize
\setstretch{1.1}
\caption{Semi-algebraic algorithm}\label{alg:alg_cp_algebraic}
\textbf{Inputs:} Data $\tensor{Y}_k$, operators $\bP_{k,j}$, ranks $R$, $L_{k}$, indices of unique tensor $\eta$ and mode-$i$ unique tensors $\xi_i$, $i\in[3]$.

Compute the CPD of $\tensor{Y}_{\eta}$, with factors $\bU_{\eta,i}$, for $i\in[3]$.

Take $j\in[3]$ such that $\xi_j\neq\eta$, compute the mode-$j$ CPD factor matrix of $\tensor{Y}_{\xi_j}$, denoted by $\bU_{\xi_j,j}$.

Compute the optimal partial assignment between the columns of $\bU_{\eta}$ and $\bU_{\xi_j}$ according to \textbf{Step 3}, solving \eqref{eq:matching_problem}--\eqref{eq:scaling_compensatot}, and compute the factor matrix
$\mywidehat{\bC}_j=\bP_{\xi_j,j}^{\dagger}\mywidehat{\bX}_{\xi_j,j}^C\bLambda_j$.

\For{$\ell\in\{1,2,3\}\setminus\{j\}$}{

\uIf{$\xi_{\ell}\in\{\eta,\xi_j\}$}{
$\mywidehat{\bC}_{\ell}=\bP_{\xi_{\ell},\ell}^{\dagger}\mywidehat{\bX}_{\xi_{\ell},\ell}^C\bLambda_{\ell}$.
}

\Else{
Compute rank $R+L_{\xi_{\ell}}$ CPD of $\tensor{Y}_{\eta_{\ell}}$ and denote the recovered mode-$\ell$ factor matrix by $\bU_{\xi_{\ell},\ell}$.

Compute the optimal partial assignment between the columns of $\mywidehat{\bC}_j$ and $\bP_{\xi_{\ell},\ell}^{\dagger}\mywidehat{\bX}_{\xi_{\ell},\ell}^C$ and the corresponding matrices as in \texttt{line 4}, and compute the factor $\mywidehat{\bC}_{\ell}$ according to \textbf{Step 4}.

}}

\textbf{return} $\tensor{C}_{\rm alg}=\big\ldbrack\mywidehat{\bC}_{1},\mywidehat{\bC}_{2},\mywidehat{\bC}_{3}\big\rdbrack$, $\tensor{D}_k=\tensor{Y}_k-\tensor{C}_{\rm alg}$, $k\in[K]$.
\end{algorithm}

\paragraph*{\textbf{Step 4 (computing the remaining factors)}}
Let us now consider the other modes, $\ell\neq j$. Let us consider the measurements available for which the remaining modes are uni-mode unique and observed with full resolution, $\{\xi_{\ell},\ell\neq j\}$. Depending on the value of $\xi_j$ and $\eta$, we have two cases:

\begin{enumerate}

\item If $\xi_{\ell}\in\{\eta,\xi_j\}$, then $\mywidehat{\bX}_{\xi_{\ell},\ell}^C$ was computed in~\eqref{eq:algeb_alg_xc_1} or~\eqref{eq:algeb_alg_xc_2} and the mode-$\ell$ factor can be computed as $\mywidehat{\bC}_{\ell}=\bP_{\xi_{\ell},\ell}^{\dagger}\mywidehat{\bX}_{\xi_{\ell},\ell}^C\bLambda_{\ell}$, with $\bLambda_{\ell}$ computed similarly to~\eqref{eq:scaling_compensatot}.

\vspace{0.3ex}

\item If $\xi_{\ell}\notin\{\eta,\xi_j\}$, then let us compute the rank $R+L_{\xi_{\ell}}$ CPD of $\tensor{Y}_{\xi_{\ell}}$, and denote the recovered mode-$\ell$ factor matrix by $\bU_{\xi_{\ell},\ell}$. Then, we repeat the same procedure as in the \textbf{Step 3}, to find the optimal assignment between the columns of matrices $\mywidehat{\bC}_j$ and $\bP_{\xi_{\ell},\ell}^{\dagger}\mywidehat{\bX}_{\xi_{\ell},\ell}^C$, which will give us the corresponding matrices $\bM_{\xi_{\ell}}$ and $\bLambda_{\ell}$. This way, we can recover $\mywidehat{\bC}_{\ell}$ as $\mywidehat{\bC}_{\ell}=\bP_{\xi_{\ell},\ell}^{\dagger}\bU_{\xi_{\ell},\ell}\bM_{\xi_{\ell}}\bLambda_{\ell}$, which matches the scaling ambiguities.
This gives us the $\ell$-th common factor with permutation scaling ambiguities matching those of $\mywidehat{\bC}_j$.

\end{enumerate}

\paragraph*{\textbf{Step 5 (reconstruct the tensor)}}
With the permutations and scalings compatible across the three modes, we can reconstruct the common tensor as $\mywidehat{\tensor{C}}=\ldbrack \mywidehat{\bC}_{1}, \mywidehat{\bC}_{2}, \mywidehat{\bC}_{3} \rdbrack$.
The distinct components $\mywidehat{\tensor{D}}_k$ are recovered as the CPD of $\tensor{Y}_k-\mathscr{P}_k(\widehat{\tensor{C}})$.

Note that, following the same arguments as in the proof of Theorem~\ref{thm:uniqueness_deterministic}, it can be shown that this recovers the true tensor.

\subsection{An \cred{optimization-based} solution}
\label{ssec:optim_bcd_sol}

\cred{Let us consider the case when the tensors are measured with noise. The CTD can be formulated as an optimization problem by seeking the decomposition satisfying the model~\eqref{eq:meas_model}, \eqref{eq:cp_z}, \eqref{eq:cp_psi} and \eqref{eq:cp_Pz} which is the closest to the measurements according to some criterion. However, the computational complexity can increase substantially when the number of measured tensors $K$ becomes large. To allow the computational complexity to be reduced, we will propose a more flexible formulation in which one can control which factors of the common tensor are coupled to a given measurement in the optimization problem. This is related to factorization frameworks with both shared and unshared factors~\cite{sorber2015structuredDataFusionReview,sorensen2020factorizationSharesUnshared}.}
To define this more precisely, let us introduce some additional notation. 
First, we can write the polyadic decomposition of tensor $\tensor{Y}_k$ as:
\begin{align}
    \tensor{Y}_k = \sum_{\cdidx\in\{C,D\}} \big\ldbrack \bX_{k,1}^\cdidx, \bX_{k,2}^\cdidx, \bX_{k,3}^\cdidx \big\rdbrack \,,
\end{align}
where $\bX_{k,j}^C$ and $\bX_{k,j}^D$ are related to the factor matrices in~\eqref{eq:cp_psi} and~\eqref{eq:cp_Pz} through:
\begin{align}
    \bX_{k,j}^C = \bP_{k,j} \bC_{j}\,,\, \quad  \bX_{k,j}^D = \bD_{k,j} \,.
    \label{eq:cp_coupl_C_Pz_Psi}
\end{align}
Let $\Gamma_j\subseteq[K]$ for $j\in[3]$  be the sets defining which coupling constraints are enforced, i.e., $k\in\Gamma_j$ means that, for mode-$j$ factor $\bC_j$ of the common tensor, the coupling  \eqref{eq:cp_coupl_C_Pz_Psi} in the $k$-the measurement is included as a constraint in the optimization problem (leaving the possibility to relax some of the constraints).
\cred{Having more flexibility in the definition of couplings was shown to be useful in prior works such as in double coupled factorizations~\cite{gong2018jointBSS_coupledCPD,chatzichristos2022fusionEEGfMRI_doubleCoupledMatrixTensor}, where the factor matrix of a given measurement tensor can be coupled to one among multiple common factors, instead of only a single global factor.}
\cred{The flexible coupled tensor decomposition considered in this work can be} formulated as follows:
\begin{subequations} \label{eq:opt_prob_cp2_full_flexible}
\begin{align}
    & \min_{\Phi} \,\, \sum_{k=1}^K \, \Big\|\tensor{Y}_k - \sum_{\cdidx\in\{C,D\}} \big\ldbrack \bX_{k,1}^\cdidx, \bX_{k,2}^\cdidx, \bX_{k,3}^\cdidx \big\rdbrack \Big\|_F^2
    \label{eq:opt_prob_cp2_a_flexible}
    \\
    & \text{s.t. } \, \bX_{k,j}^C = \bP_{k,j}\bC_{j} \,,\,\,\, k \in\Gamma_j\,, \,\, j\in[3] \,,
    \label{eq:opt_prob_cp2_b_flexible}
\end{align}
\end{subequations}
where $\Phi=\big\{\bC_{j},\bX_{k,j}^C,\bX_{k,j}^D\big\}$. Note that the factor matrices $\bD_{k,j}$ are not included in the optimization problem since they are equal to $\bX_{k,j}^D$ due to~\eqref{eq:cp_coupl_C_Pz_Psi}, and thus redundant.
The constraint~\eqref{eq:opt_prob_cp2_b_flexible} links the mode-$j$ factor of the common tensor $\tensor{C}$ among all measured datasets indexed in $\Gamma_j$. Note that we do not require knowledge of $\bP_{k,j}$ for $k\notin\Gamma_j$.
\cred{We also remark that although the loss function in~\eqref{eq:opt_prob_cp2_a_flexible} is based on the squared loss, other loss functions can also be considered, such as the Kullback-Leibler divergence or the logistic loss, which can be more natural for count or binary tensors (see, e.g., \cite{hong2020generalizedCPDlosses,pu2022stochasticMirrorDescentTensorNonEuclidean,prevost2023nonnegativeBTDbetaDivergence}).}

\begin{remark}
    \cred{Note that following the same reasoning as for the semi-algebraic algorithm developed in Section~\ref{ssec:decomp_sol_algebraic}, supposing the conditions in Theorem~\ref{thm:uniqueness_deterministic} are satisfied, is is desired that: 1) all common factors of the unique tensor are coupled, leading to $\eta\in\bigcap_{j\in[3]}\Gamma_j$; and 2) the uni-mode unique factors are coupled, i.e., for each mode $j\in[3]$, $\xi_j\in\Gamma_j$.} Moreover, for all $k$ there must be at least one mode $j$ such that $k\in\Gamma_j$, otherwise there is no coupling linking tensor $\tensor{Y}_k$. \cred{However, the uniqueness results in Theorems~\ref{thm:uniqueness_deterministic} and~\ref{thm:uniqueness_generic} concern the model~\eqref{eq:meas_model}--\eqref{eq:cp_Pz} and would have to be extended in order to rigorously address the flexible coupling case in~\eqref{eq:opt_prob_cp2_full_flexible}.}
\end{remark}

\cred{
\begin{remark}
There is a trade-off between accuracy and computational cost in the selection of $\Gamma_j$. In case the measurements $\tensor{Y}_k$ are contaminated by noise, one might want to couple as many of the measurements as possible to aid in the estimation of the common factor $\bC_j$, since this increases the amount of available data and can improve accuracy. However, including more measurements in the coupling (i.e., increasing the cardinality of $\Gamma_j$) increases the computation cost, since faster solvers are available for the case of $|\Gamma_j|\leq2$ as will be discussed in detail in Appendix~\ref{app:solution_optim_steps}.
\end{remark}
}

\cred{To solve~\eqref{eq:opt_prob_cp2_full_flexible}, different iterative approaches can be considered, such as the structured data fusion (SDF) framework~\cite{sorber2015structuredDataFusionReview}, among others~\cite{fu2020computingTensorOptimizationReview}. In this work, we consider an ALS solution, which is a common iterative approach to compute tensor decompositions since each of the updates has an easy-to-find solution~\cite{comon2009tensor_ALS_otherTales}.}
\cred{We present the overall procedure in Algorithm~\ref{alg:alg_cp}. The detailed derivations for the solution of the different optimization subproblems are presented in Appendix~\ref{app:solution_optim_steps}.}

\begin{algorithm} [!t]
\footnotesize
\setstretch{1.2}
\caption{ALS solution to problem~\eqref{eq:opt_prob_cp2_full_flexible}}\label{alg:alg_cp}
\textbf{Inputs:} Data $\tensor{Y}_k$, operators $\bP_{k,j}$, ranks $R$, $L_{k}$, couplings $\Gamma_j$. %

Initialize the factors (e.g., randomly or using Algorithm~\ref{alg:alg_cp_algebraic}) \;

\While{not converged}{

Compute $\bC_{1}$ by solving~\eqref{eq:sol_cp_Bz1} \;
Compute $\bX_{k,1}^C$ using~\eqref{eq:opt_prob_cp_Ckj_inGamma} and~\eqref{eq:opt_prob_cp_Ck1_notinGamma} \;
Compute $\bC_{2}$ by solving~\eqref{eq:sol_cp_Bz2} \;
Compute $\bX_{k,2}^C$ using~\eqref{eq:opt_prob_cp_Ckj_inGamma} and~\eqref{eq:opt_prob_cp_Ck2_notinGamma} \;
Compute $\bC_{3}$ by solving~\eqref{eq:sol_cp_Bz3} \;
Compute $\bX_{k,3}^C$ using~\eqref{eq:opt_prob_cp_Ckj_inGamma} and~\eqref{eq:opt_prob_cp_Ck3_notinGamma} \;

Compute $\bX_{k,1}^D$, $\bX_{k,2}^D$ and $\bX_{k,3}^D$ using~\eqref{eq:sol_cp_Bk1},~\eqref{eq:sol_cp_Bk2} and~\eqref{eq:sol_cp_Bk3} \;

}
\mbox{\textbf{return} $\tensor{C}=\big\ldbrack\bC_{1},\bC_{2},\bC_{3}\big\rdbrack$, $\tensor{D}_k=\big\ldbrack\bX_{k,1}^D,\bX_{k,2}^D,\bX_{k,3}^D\big\rdbrack$, $k=[K]$.}

\end{algorithm}

\section{Numerical experiments}

We evaluate the performance of the proposed decomposition methods using synthetic and real data. As a quantitative metric, we considered the normalized root mean square error (NRMSE) between the estimated and the true common component, defined as \mbox{$\text{NRMSE}=\|\mywidehat{\tensor{C}}-\tensor{C}\|_F/\|\tensor{C}\|_F$}. The accurate recovery of $\tensor{C}$ also means an accurate reconstruction of $\tensor{D}_k\approx\tensor{Y}_k-\mathscr{P}_k(\tensor{C})$ if the SNR is not too low.
In all experiments, we computed the average $\text{NRMSE}$ results over 20 Monte Carlo runs. The CPDs required by the semi-algebraic algorithms were computed using Tensorlab~\cite{tensorlab3.0}.

\vspace{-1em}

\subsection{Synthetic data}

\subsubsection{\cred{Performance as a function of the SNR}} \label{sec:ex_synth_1_SNR}
\cred{This experiment evaluates} the performance of the proposed semi-algebraic and optimization algorithms when decomposing noisy data.

\paragraph*{\textbf{Data generation}} We generated synthetic data following model \eqref{eq:meas_model}, \eqref{eq:degrad_model}, \eqref{eq:cp_z}, and \eqref{eq:cp_psi}.
To this end, we generated common tensor $\tensor{C}$, distinct tensors $\tensor{D}_k$ and $K=3$ measurements $\tensor{Y}_1$, $\tensor{Y}_2$, $\tensor{Y}_3$, all with the same sizes and rank values as in Example~\ref{ex:theoremConditionsRank}. The factor matrices of $\tensor{C}$ and $\tensor{D}_k$ (generated according to \eqref{eq:cp_z} and \eqref{eq:cp_psi}) were randomly sampled from a standard Gaussian distribution.
To generate matrices $\bP_{i,j}$ (of appropriate dimensions), we sampled each of their elements from an uniform distribution over the interval $[0,1]$.
\cred{The measurements $\tensor{Y}_1$, $\tensor{Y}_2$ and $\tensor{Y}_3$ were generated as
\begin{align}
    \tensor{Y}_k = \mathscr{P}_k(\tensor{C}) + \tensor{D}_k + \tensor{N}_k \,,
\end{align}
where $\tensor{N}_k$ denotes white Gaussian noise, with SNR given by 
\begin{align}
    \text{SNR}=10\log_{10}\bigg(\frac{\Ex\big\{\|\mathscr{P}_k(\tensor{C}) + \tensor{D}_k\|_F^2\big\}}{\Ex\big\{\|\tensor{N}_k\|_F^2\big\}}\bigg) \,,
    \label{eq:SNR_definition_synth}
\end{align}
with $\Ex\{\cdot\}$ denoting the expectation operator. Note that from the SNR in~\eqref{eq:SNR_definition_synth} the noise variance is adjusted according to the energy of the full tensor $\mathscr{P}_k(\tensor{C})+\tensor{D}_k$, which is different from the energy of the common tensor $\tensor{C}$ evaluated in the NRMSE. As} explained in Example~\ref{ex:theoremConditionsRank} this CTD is generically unique \cred{in the noiseless case}, but only tensor $\tensor{Y}_2$ is guaranteed to be fully unique, with $\tensor{Y}_1$ and $\tensor{Y}_3$ being only uni-mode unique. Moreover, the rank of each measured tensor is larger than the dimension of two of its modes, making this a numerically challenging example.

\paragraph*{\textbf{Algorithms setup}} We compared the semi-algebraic algorithm with the optimization algorithm 
following two different initialization options, with ``\textit{init. 1}'' being the solution from the semi-algebraic method, and ``\textit{init. 2}'' being a uniform random initialization. We ran the optimization algorithms for at most 1000 ALS iterations. To mitigate the random effect of the initialization, we ran all the algorithms using 50 independent initializations (including the initialization of the CPD used in the semi-algebraic approach) and picked the solution which had the lowest reconstruction errors, $\sum_k\|\tensor{Y}_k-\mywidehat{\tensor{Y}}_k\|_F^2$, with $\mywidehat{\tensor{Y}}_k$ being the reconstructed tensor.

\paragraph*{\textbf{Results}} The NRMSE results are shown in Figure~\ref{fig:nrmse_synthetic_ex}. It can be seen that the errors are very low for high SNRs (indicating a correct recovery of the true factorization) but increase approximately linearly with the noise intensity. Moreover, although the semi-algebraic solution achieved low NRMSEs for high SNRs, its performance was consistently worse than that of the optimization-based solutions. 
\cred{The two initializations options (\textit{init. 1} and \textit{init. 2}) performed similarly for high ($\geq40$ dB) SNRs, but the random initialization (\textit{init. 2}) achieved lower NRMSEs compared to the semi-algebraic one (\textit{init. 1}) for lower SNRs, particularly for the case of 20~dB. However, since the NRMSE values for this SNR are large, they do not provide a meaningful way to compare the quality of different solutions; thus, their interpretation must be performed with proper care.}

\begin{figure}
    \centering
    \includegraphics[width=0.6\linewidth]{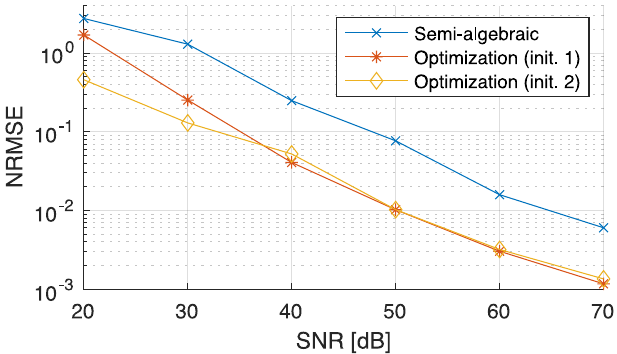}\\
    \vspace{-0.3cm}
    \caption{NRMSE for different SNRs for the example with synthetic data.}
    \label{fig:nrmse_synthetic_ex}
\end{figure}

\subsubsection{\cred{Ablation experiment 1}}

\begin{color}{red}
This experiment evaluates the performance of the proposed algorithms when there is variability in the common tensor between different datasets.

\paragraph*{\textbf{Experimental setup}} 
We consider the same general setting as in Section~\ref{sec:ex_synth_1_SNR} with an SNR fixed at 30 dB. We generate the measured tensors as
\begin{align}
    \tensor{Y}_k = \mathscr{P}_k(\tensor{C}_k(\alpha)) + \tensor{D}_k + \tensor{N}_k \,,
\end{align}
where $\tensor{C}_k(\alpha)$ represents a ``noisy'' version of the common tensor in the $k$-th dataset, computed as $\tensor{C}_k(\alpha)= (1-\alpha) \tensor{C} + \alpha \tensor{E}_k$. Tensor $\tensor{C}$, which is the same for all datasets, is generated randomly as in Section~\ref{sec:ex_synth_1_SNR}, while $\tensor{E}_k$, which is generated in the same way as $\tensor{C}$, is independent for each dataset. The contribution of each term is controlled by $\alpha\in[0,1]$. We decompose $\tensor{Y}_k$ using the proposed optimization-based algorithm with random initialization (\textit{init. 2}) for different values of $\alpha$.

\paragraph*{\textbf{Results}} The NRMSE, defined for this example as $\text{NRMSE}=\frac{1}{K}\sum_k\|\tensor{C}_k(\alpha)-\mywidehat{\tensor{C}}\|_F/\|\tensor{C}_k(\alpha)\|_F$ for the recovery of $\tensor{C}_k(\alpha)$, is shown as a function of $\alpha$ (averaged over 20 Monte Carlo runs) in Figure~\ref{fig:results_ablation_noiseInCommon}. It can be seen that the NRMSE increases with $\alpha$, which is expected since this reduces the amount of information shared among datasets. Nonetheless, the performance degradation occurs smoothly, indicating the proposed method is not overly sensitive to this type of mismodeling effect when $\alpha$ is small.
\end{color}

\begin{figure}
    \centering
    \includegraphics[width=0.45\linewidth]{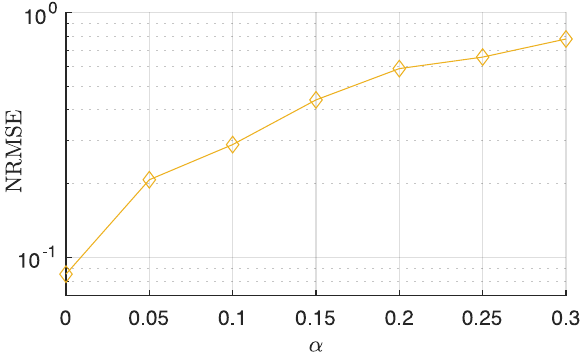}\\
    \vspace{-0.3cm}
    \caption{\cred{NRMSE (between $\tensor{C}_k(\alpha)$ and $\widehat{\tensor{C}}$) of the solutions from the optimization algorithm (\textit{init. 2}) as a function of $\alpha$. }} 
    \label{fig:results_ablation_noiseInCommon}
\end{figure}

\begin{figure}
    \centering
    \includegraphics[width=0.5\linewidth]{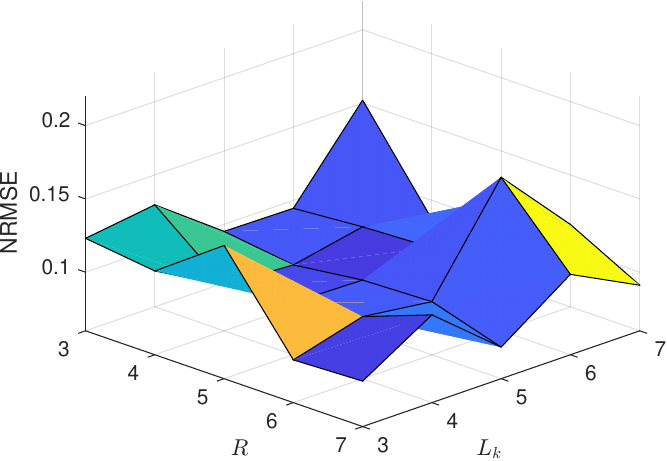}\\
    \vspace{-0.3cm}
    \caption{\cred{NRMSE of the solution given by the proposed optimization-based algorithm (\textit{init. 2}) as a function of the ranks of the decomposition $R$ and $L_k$. The true ranks of the tensors are given by $R^{\rm true}=5$ and $L_k^{\rm true}=5$.}}
    \label{fig:results_ablation_rank}
\end{figure}

\subsubsection{\cred{Ablation experiment 2}}

\begin{color}{red}
This experiment evaluates the performance of the proposed algorithm when the rank of the decomposition is misspecified.

\paragraph*{\textbf{Experimental setup}} 
We generate sets of measurements $\tensor{Y}_k$ following the same setup as in Section~\ref{sec:ex_synth_1_SNR}, fixing the SNR at 30 dB. Thus, the data are generated from a (noisy) CTD model with ranks $R^{\rm true}=5$ and $L_k^{\rm true}=5$. We then aim to recover the common tensor $\tensor{C}$ using the optimization-based algorithm with random initialization (\textit{init. 2}), but specifying different values of ranks in the range $R,L_k\in\{3,4,5,6,7\}$ to evaluate the impact of rank misspecification.

\paragraph*{\textbf{Results}} 
The NRMSE ($\|\tensor{C}-\mywidehat{\tensor{C}}\|_F/\|\tensor{C}\|_F$) as a function of $R$ and $L_k$ (averaged over 20 Monte Carlo runs) is shown in Figure~\ref{fig:results_ablation_rank}. It can be seen that as the differences between the ranks specified to the algorithm and the true ones increase, so does the NRMSE. Nonetheless, even when the specified ranks are considerably different than the true ones the NRMSE performance is still reasonable (e.g., for $R=3$, $L_k=3$ the NRMSE increases 30\% compared to the optimal value). This indicates that the proposed method is not overly sensitive to the choice of the ranks.

\end{color}

\begin{color}{red}
\subsubsection{\cred{Computation time and comparison to SDF}}

This experiment compares the execution times of the proposed algorithms, and provides comparisons to the SDF framework of~\cite{sorber2015structuredDataFusionReview}.

\paragraph*{\textbf{Experimental setup}} 
We consider the same general experimental setup as in Section~\ref{sec:ex_synth_1_SNR} while fixing the SNR at 30~dB. The SDF framework~\cite{sorber2015structuredDataFusionReview} was implemented to solve problem~\eqref{eq:opt_prob_cp2_full_flexible} using Tensorlab~\cite{tensorlab3.0}. All algorithms were implemented on Matlab\textsuperscript{TM} and run on a MacBook Pro M1 with 16GB RAM.

\paragraph*{\textbf{Results}} The NRMSE and execution times (averaged over 20 Monte Carlo runs) can be seen in Table~\ref{tab:results_time_SDF}. It can be seen that the proposed optimization-based approach (with both random and semi-algebraic initializations) and SDF obtain very similar NRMSEs, whereas the semi-algebraic approach present a higher NRMSE. In terms of computation times, the fastest approach was Algorithm~\ref{alg:alg_cp} with random initialization (\textit{init. 2}), followed by the semi-algebraic solution, whose execution is slightly slower due to an iterative optimization procedure being used to compute the CPDs. The SDF framework, although very flexible to tackle different types of problems, presented the longest execution time, significantly higher than the proposed algorithms.
\end{color}

\begin{table}[h]
\footnotesize
\centering
\begin{color}{red}
\caption{NRMSE and computation time comparison with synthetic data.}
\label{tab:results_time_SDF}
\vspace{-0.15cm}
\begin{tabular}{c|cccc}
\hline
 & Semi-algebraic & Opt. (\textit{init. 1}) & Opt. (\textit{init. 2}) & SDF~\cite{sorber2015structuredDataFusionReview} \\\hline
NRMSE & 0.9088 & 0.0770 & 0.0767 & 0.0765 \\
Time [s] & 2.776 & 6.531 & 1.026 & 23.553 \\\hline
\end{tabular}
\end{color}
\end{table}

\begin{figure}
    \centering
    \includegraphics[width=\linewidth]{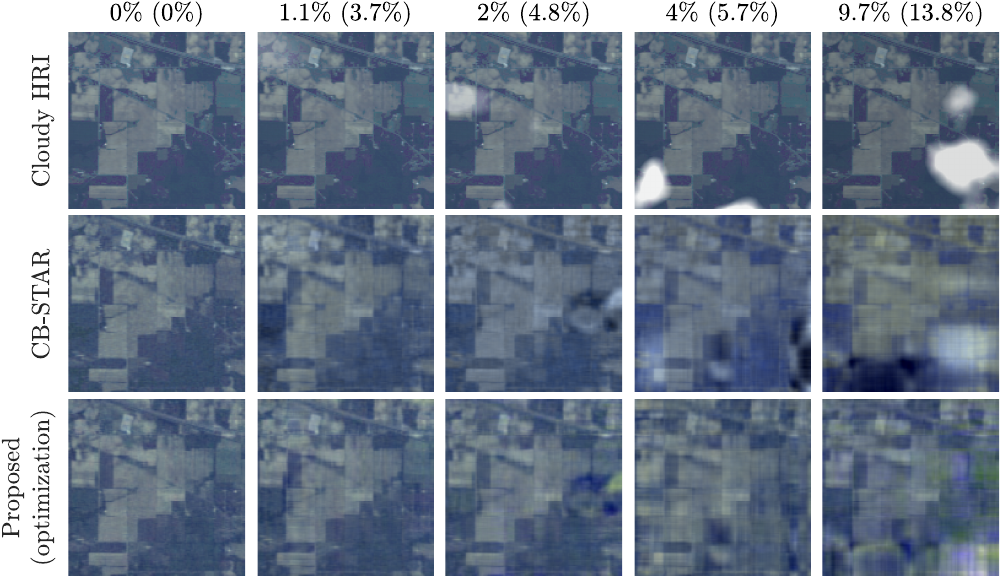}\\
    \vspace{-0.3cm}
    \caption{Illustration of common component $\tensor{C}$ and of the high resolution image reconstructed by the proposed method and by CB-STAR (the best performing competing method). The cloudy image $\tensor{X}_1$ (underlying the HSI) for the different cloud contamination levels is shown on top (the top-left corresponds to the ground truth). The percentages on top read as CC\% (CP\%).}
    \label{fig:results_exps_cloud}
    
    \vspace{-1em}
\end{figure}

\begin{table}[t]
\footnotesize
\centering
\caption{Average $\text{NRMSE}$ results for fusing an HSI and an MSI with different amounts of cloud contamination on both images.}
\vspace{-0.15cm}
\begin{tabular}{c|ccccc}
\hline
Cloud cover (CC) \%	&	0\%	&	1.1\% 	&	2\% 	&	4\% 	&	9.7\% 	\\[0.05cm]
Corrupted pixels (CP) \% 	&	0\% 	&	3.7\% 	&	4.8\% 	&	5.7\% 	&	13.8\% 	\\[0.05cm]
\hline

STEREO	&	\textbf{0.035}	&	0.082	&	0.146	&	0.222	&	0.386	\\
SCOTT	&	0.038	&	0.088	&	0.147	&	0.280	&	0.436	\\
CT-STAR	&	0.054	&	0.094	&	0.154	&	0.291	&	0.450	\\
CB-STAR	&	0.037	&	0.081	&	0.100	&	0.182	&	0.371	\\
Proposed (semi-algebraic)	&	0.079	&	0.102	&	0.125	&	0.189	&	0.297	\\
Proposed (optimization)	&	0.052	&	\textbf{0.055}	&	\textbf{0.073}	&	\textbf{0.134}	&	\textbf{0.251}	\\
\hline
\end{tabular}
\label{tab:quantitative_results}
\vspace{-1.5em}
\end{table}

\subsection{Real data}

For the experiments with real data, we consider the problem of imaging a given scene subject to cloud contamination. Two distinct acquisitions of this scene are obtained, one yielding an HSI and one an MSI, each of which is subject to some random cloud contamination (which are independent and identically distributed across the images).
This results in image-specific variability affecting both the HSI and the MSI. The objective is to recover an uncontaminated high-resolution image (HRI) of the scene by fusing the acquired HSI and MSI.

\paragraph*{\textbf{Experimental setup}} We consider an underlying high-resolution image $\tensor{C}\in\amsmathbb{R}^{145\times145\times200}$ as the Indian Pines image, with $145\times145$ pixels and $200$ spectral bands. Cloud cover maps (i.e., percentage of clouds per pixel) were then generated randomly using a nonlinear transformation of a Gaussian random field, which led to realistic cloud distribution (see the first row of Figure~\ref{fig:results_exps_cloud}) and allowed the intensity and spatial extent of the clouds to be adjusted. Specifically, we generated two cloud cover maps, $\bS_1$ and $\bS_2$ ($\in\amsmathbb{R}^{145\times145}$) to represent the cloud cover underlying the HSI and MSI, respectively. Then, cloud-corrupted images $\tensor{X}_1$ and $\tensor{X}_2$ were generated as a convex combination between the ground image and a cloud spectral signature as
\begin{align}
    \label{eq:convex_cloud_combination}
    [\tensor{X}_k]_{x,y,\lambda}=[\tensor{C}]_{x,y,\lambda}\big(1-[\bS_k]_{x,y}\big) + g_{\lambda} [\bS_k]_{x,y} \,,
\end{align}
for $k\in[2]$, where $g_{\lambda}\in\amsmathbb{R}$ is the cloud spectral reflectance at band index $\lambda$, which was computed from a real cloud signature presented in~\cite{gao1998cloudDetectionArticRegion}, and $x,y$ is the pixel indices. 
Note that decomposing $\tensor{X}_1$ and $\tensor{X}_2$ into common and distinct components is challenging since, due to the model~\eqref{eq:convex_cloud_combination},
\cred{neither the common nor the distinct components will have low CP rank. Thus, the data do not really follow the model in~\eqref{eq:cp_z},~\eqref{eq:cp_psi}, leading to an approximation problem rather than an exact decomposition.}

From the cloudy high-resolution images $\tensor{X}_1$ and $\tensor{X}_2$ we generated the HSI and the MSI, respectively, using degradation operators $\bP_{k,j}$ as described in Example~\ref{ex:HS_MS_fusion} of Section~\ref{sec:prob_statement}. We used a spatial decimation factor of four for the HSI and the spectral response of the Sentinel-2A sensor for the MSI (leading to a spectral decimation factor of 10 for the MSI). %
\cred{
The HSI $\tensor{Y}_1\in\amsmathbb{R}^{36\times36\times200}$ and the MSI $\tensor{Y}_2\in\amsmathbb{R}^{145\times145\times10}$ were then generated as
\begin{align}
    \tensor{Y}_k &= \mathscr{P}_k(\tensor{X}_k) + \tensor{N}_k
\end{align}
for $k\in\{1,2\}$, where $\mathscr{P}_k$ is as defined in~\eqref{eq:degrad_model} and $\tensor{N}_k$ are white Gaussian noise tensors whose variance is adjusted to obtain an SNR of 30~dB, which is here defined as 
\begin{align}
    \text{SNR}=10\log_{10}\bigg(\frac{\Ex\big\{\|\mathscr{P}_k(\tensor{X}_k)\|_F^2\big\}}{\Ex\big\{\|\tensor{N}_k\|_F^2\big\}}\bigg) \,.
\end{align}
For more details on the measurement model for this application, see, e.g.,~\cite{kanatsoulis2018hyperspectralSRR_coupledCPD,prevost2020coupledTucker_hyperspectralSRR_TSP,borsoi2019superResolutionHyperspectralVariabilityTIP}.
}

We adjusted the cloud cover generation setup to obtain images with a desired amount of cloud contamination, which was kept the same for the HSI and MSI and was measured using two metrics. The first metric is the cloud cover (CC) percent, defined as the mean percentage of clouds per pixel and computed as $\text{CC}=\frac{1}{Z}\sum_{k,x,y}[\bS_k]_{x,y}$, in which $Z=2\times145^2$ is a normalization equal to the total number of terms the summation. The second metric is the percent of corrupted pixels (CP), defined as the mean percentage of pixels with more than $15\%$ of cloud presence, and computed as $\text{CP}=\frac{1}{Z}\sum_{k,x,y}\iota\big([\bS_k]_{x,y}>0.15\big)$, where $\iota(\cdot)$ is an indicator function and $Z$ is defined as before. Note that the CC and CP metrics shown in the experiments are also averaged among all Monte Carlo runs.

\paragraph*{\textbf{Algorithms setup}} We compare the proposed method with the state-of-the-art CTD approaches STEREO~\cite{kanatsoulis2018hyperspectralSRR_coupledCPD}, SCOTT~\cite{prevost2020coupledTucker_hyperspectralSRR_TSP}, and with CT-STAR and CB-STAR~\cite{borsoi2020tensorHSRvariability}, the latter two which consider an image specific component in one of the images (the MSI).
The ranks of all methods were adjusted to maximize performance in a separate independently generated dataset.
We implement the optimization algorithm initialized with the algebraic approach, and considered up to 50 ALS iterations. For this example, we implemented a modification on the semi-algebraic algorithm to simplify it and improve its robustness inspired by TenRec in~\cite{kanatsoulis2018hyperspectralSRR_coupledCPD}. 
Specifically, we follow the steps 1 through 4 detailed in Section~\ref{ssec:decomp_sol_algebraic}, and exploited the fact that $\bP_{2,1}=\bP_{2,2}=\bI$ to compute \cred{$\mywidehat{\bC}_1$ and $\mywidehat{\bC}_2$} from the common parts of the factors obtained through the CPD of the MSI $\tensor{Y}_2$. However, to compute the final factor \cred{$\mywidehat{\bC}_3$} we use a different approach by just solving the linear regression problem on the HSI as \cred{$\min_{\bC_3}\|\tensor{Y}_1-\ldbrack\bP_{1,1}\mywidehat{\bC}_1,\bP_{1,2}\mywidehat{\bC}_2,\bC_3\rdbrack\|_F$.} This showed better performance than recovering $\bC_3$ through the separate CPD of the HSI $\tensor{Y}_1$.

\paragraph*{\textbf{Results}} Quantitative results are shown in Table~\ref{tab:quantitative_results} along with the CC and CP percentages. Visual results of the two overall best performing methods (the proposed optimization-based algorithm and CB-STAR) corresponding to the run with the median performance difference between them are shown in Figure~\ref{fig:results_exps_cloud}. It can be seen that the proposed methods perform slightly worse when there are no clouds, which is natural as the models in the competing methods (without personalized components) match the dataset better. However, as soon as cloud cover is present (even for 1.1\%) the proposed optimization-based approach gets considerably lower NRMSE then the other algorithms. The semi-algebraic algorithm performed considerably worse than the optimization one for low CC percentages, with their relative performances \cred{getting} better as the cloud contamination increases. The visual results also corroborate these findings, with the reconstruction by CB-STAR showing more artifacts compared to the proposed method reconstruction. The artifacts also get increasingly strong as the cloud contamination increases, being subtle for a CC of $1.1\%$ but very strong for a CC of $9.7\%$.

\section{\cred{Conclusions and perspectives}}
\label{sec:conclusions}

In this paper, a personalized CTD framework was proposed. A flexible model was presented representing the datasets as the sum of two components, each admitting a low-rank CPD. The first component was linked to a common tensor through a multilinear measurement model, while the second component was distinct and captured dataset-specific information. This model generalized several existing CPDs. The generic uniqueness of the decomposition was shown to hold under mild conditions that highlighted the influence of the measurement model and of ``weaker'' uni-mode uniqueness of individual datasets on the uniqueness of the full decomposition.
Assuming the ranks of the common and distinct components to be known, two algorithms were proposed to compute the decomposition.
The first was semi-algebraic, performing well for high SNRs but being less robust in low-SNR settings. The second was based on an optimization procedure and showed better performance, especially on noisy and real data. Experiments illustrated the advantage of the proposed framework compared with the state of the art methods.

\cred{An important issue related to the practical use of the proposed CTD framework is the estimation of its ranks, $R$ and $L_k$. While several strategies have been investigated to estimate the number of common components in statistical models involving multiple datasets (see, e.g.,~\cite{akhonda2021disjointSubspacesCommonAndDistinctComponentsfMRI,bhinge2017estimationCommonOrderMultipleDatasetsfMRI,hasija2020determiningDimensionSubspaceCorrelatedMUltipleDatasets}), these approaches are based on a very different set of model assumptions. Thus, in future work we aim to investigate the order selection problem for CTDs with common and distinct components from both theoretical and experimental perspectives.}

\appendices

\begin{color}{red}
\section{Solution to the optimization problem~\eqref{eq:opt_prob_cp2_full_flexible}}
\label{app:solution_optim_steps}

\end{color}

\paragraph*{\textbf{Optimization w.r.t. $\bC_{j}$}} 

Let us first consider the solution for $\bC_{1}$. Incorporating the equality constraint~\eqref{eq:opt_prob_cp2_b_flexible} directly in the cost function, this problem can be written as:
\begin{align}
    & \min_{\bC_{1}} \sum_{k\in\Gamma_1} \Big\| \widetilde{\tensor{Y}}_k - \big\ldbrack \bP_{k,1}\bC_{1}, \bX_{k,2}^C, \bX_{k,3}^C \big\rdbrack \Big\|_F^2
    \label{eq:opt_cp_bcd_bz_1}
\end{align}
where $\widetilde{\tensor{Y}}_{k}=\tensor{Y}_k-\big\ldbrack \bX_{k,1}^D, \bX_{k,2}^D, \bX_{k,3}^D \big\rdbrack$. Note that this is equivalent to a coupled CP decomposition without distinct components. \cred{Using the properties of the mode-1 matricization, it can be shown that} the critical points of~\eqref{eq:opt_cp_bcd_bz_1} satisfy
\begin{align}
    \sum_{k\in\Gamma_1} & \! \big(\cred{\bJ_{k,1}^\top\bJ_{k,1}}\big) \bC_{1}^\top \big(\bP_{k,1}^\top \bP_{k,1}\big)
    - \cred{\bJ_{k,1}^\top} \unfold{\widetilde{\bY}_{\!k}}{1} \bP_{k,1} = \cb{0} \,,
    \label{eq:sol_cp_Bz1}
\end{align}
where $\cred{\bJ_{k,1}}=\bX_{k,3}^C \odot \bX_{k,2}^C$. Note that the product \cred{$\bJ_{k,1}^\top\bJ_{k,1}$} can be computed using mixed product properties (see, e.g., \cite[eq.~2.2]{kolda2009tensor}). \cred{Since~\eqref{eq:sol_cp_Bz1} is linear in $\bC_1\in\amsmathbb{R}^{M_1\times R}$, it can be solved with a complexity of at most $O\big((M_1R)^3\big)$ operations. 
However, since this consists in a generalization of the Sylvester equation, when $|\Gamma_1|\leq2$ specialized solvers can be much faster: the complexity of solving the generalized Sylvester equation in this case is $O(M_1^3 + R^3)$ operations when using the Hessenberg-Schur or Bartels-Stewart methods~\cite{bartels1972solutionSylvester,gardiner1992solutionSylvester,golub1979hessenbergSchurSolverSylvesterEq,simoncini2016computationalSylvesterEqReview}.}

Optimizing the cost function w.r.t. $\bC_{2}$ and $\bC_{3}$ can be done similarly by using the mode-2 and mode-3 tensor matricizations, which leads to the following equations:
\begin{align}
    \sum_{k\in\Gamma_2} & \! \big(\cred{\bJ_{k,2}^\top\bJ_{k,2}}\big) \bC_{2}^\top \big(\bP_{k,2}^\top \bP_{k,2} \big)
    - \cred{\bJ_{k,2}^\top} \unfold{\widetilde{\bY}_{\!k}}{2} \bP_{k,2} = \cb{0} \,,
    \label{eq:sol_cp_Bz2}
    \\
    \sum_{k\in\Gamma_3} & \! \big(\cred{\bJ_{k,3}^\top\bJ_{k,3}}\big) \bC_{3}^\top \big(\bP_{k,3}^\top \bP_{k,3}\big)
    -  \cred{\bJ_{k,3}^\top} \unfold{\widetilde{\bY}_{\!k}}{3} \bP_{k,3} = \cb{0} \,,
    \label{eq:sol_cp_Bz3}
\end{align}
where $\cred{\bJ_{k,2}}=\bX_{k,3}^C \odot \bX_{k,1}^C$ and $\cred{\bJ_{k,3}}=\bX_{k,2}^C \odot \bX_{k,1}^C$.

\paragraph*{\textbf{Optimization w.r.t. $\bX_{k,j}^C$}}
The solution to this problem depends on whether $\bX_{k,j}^C$ is included in the set of equality constraints~\eqref{eq:opt_prob_cp2_b_flexible}. If $k\in\Gamma_j$, then the solution is direct:
\begin{align}
    \bX_{k,j}^C = \bP_{k,j}\bC_{j}\,, \quad k\in\Gamma_j, \,\, j\in[3] \,.
    \label{eq:opt_prob_cp_Ckj_inGamma}
\end{align}
On the other hand, if $k\not\in\Gamma_j$, the optimization problem for the mode $j=1$ will be:
\begin{align}
    & \min_{\bX_{k,1}^C} \big\| \widetilde{\tensor{Y}}_k - \big\ldbrack\bX_{k,1}^C, \bX_{k,2}^C, \bX_{k,3}^C \big\rdbrack \big\|_F^2 \,,
\end{align}
where $\widetilde{\tensor{Y}}_{k}=\tensor{Y}_k-\big\ldbrack \bX_{k,1}^D, \bX_{k,2}^D, \bX_{k,3}^D \big\rdbrack$. The solution to this problem is given by: 
\begin{align}
    \bX_{k,1}^C = \big[\big(\bX_{k,3}^C \odot \bX_{k,2}^C\big)^\dagger \unfold{\widetilde{\bY}_{\!k}}{1}\big]^\top \,, \quad k\not\in\Gamma_1 \,.
    \label{eq:opt_prob_cp_Ck1_notinGamma}
\end{align}
The same steps can be used to compute the solutions for modes $j=2$ and $j=3$:
\begin{align}
    \bX_{k,2}^C ={}& \big[\big(\bX_{k,3}^C \odot \bX_{k,1}^C\big)^\dagger \unfold{\widetilde{\bY}_{\!k}}{2}\big]^\top \,,\quad k\not\in\Gamma_2 \,,
    \label{eq:opt_prob_cp_Ck2_notinGamma}
    \\
    \bX_{k,3}^C ={}& \big[\big(\bX_{k,2}^C \odot \bX_{k,1}^C\big)^\dagger \unfold{\widetilde{\bY}_{\!k}}{3}\big]^\top \,, \quad k\not\in\Gamma_3 \,.
    \label{eq:opt_prob_cp_Ck3_notinGamma}
\end{align}

\paragraph*{\textbf{Optimization w.r.t. $\bX_{k,j}^D$}}
This problem is given by
\begin{align}
    & \min_{\bX_{k,1}^D,\bX_{k,2}^D,\bX_{k,3}^D} \,\, \big\| \overline{\tensor{Y}}_k - \big\ldbrack\bX_{k,1}^D, \bX_{k,2}^D, \bX_{k,3}^D \big\rdbrack \big\|_F^2 \,,
    \label{eq:opt_cp_bcd_bk}
\end{align}
where $\overline{\tensor{Y}}_k=\tensor{Y}_k-\big\ldbrack\bX_{k,1}^C, \bX_{k,2}^C, \bX_{k,3}^C \big\rdbrack$. Problem~\eqref{eq:opt_cp_bcd_bk} consists of computing a rank-$L_k$ CPD of $\overline{\tensor{Y}}_k$.
Following the ALS strategy, we update $\bX_{k,j}^D$ sequentially over $j\in[3]$. Using the mode-$j$ matricization and proceeding in the same way as in the previous problem, the solutions are given by:
\begin{align}
    \bX_{k,1}^D ={} & \big[\big(\bX_{k,3}^D \odot \bX_{k,2}^D\big)^{\dagger}\,\unfold{\overline{\bY}_{\!k}}{1}\big]^\top \,,
    \label{eq:sol_cp_Bk1}
    \\
    \bX_{k,2}^D ={} & \big[\big(\bX_{k,3}^D \odot \bX_{k,1}^D\big)^{\dagger}\,\unfold{\overline{\bY}_{\!k}}{2} \big]^\top \,,
    \label{eq:sol_cp_Bk2}
    \\
    \bX_{k,3}^D ={} & \big[\big(\bX_{k,2}^D \odot \bX_{k,1}^D\big)^{\dagger}\,\unfold{\overline{\bY}_{\!k}}{3} \big]^\top \,.
    \label{eq:sol_cp_Bk3}
\end{align}

\bibliographystyle{IEEEtran}
\bibliography{references}

\begin{thebibliography}{10}
\providecommand{\url}[1]{#1}
\csname url@samestyle\endcsname
\providecommand{\newblock}{\relax}
\providecommand{\bibinfo}[2]{#2}
\providecommand{\BIBentrySTDinterwordspacing}{\spaceskip=0pt\relax}
\providecommand{\BIBentryALTinterwordstretchfactor}{4}
\providecommand{\BIBentryALTinterwordspacing}{\spaceskip=\fontdimen2\font plus
\BIBentryALTinterwordstretchfactor\fontdimen3\font minus
  \fontdimen4\font\relax}
\providecommand{\BIBforeignlanguage}[2]{{%
\expandafter\ifx\csname l@#1\endcsname\relax
\typeout{** WARNING: IEEEtran.bst: No hyphenation pattern has been}%
\typeout{** loaded for the language `#1'. Using the pattern for}%
\typeout{** the default language instead.}%
\else
\language=\csname l@#1\endcsname
\fi
#2}}
\providecommand{\BIBdecl}{\relax}
\BIBdecl

\bibitem{sorber2015structuredDataFusionReview}
L.~Sorber, M.~Van~Barel, and L.~De~Lathauwer, ``Structured data fusion,''
  \emph{IEEE Journal of Selected Topics in Signal Processing}, vol.~9, no.~4,
  pp. 586--600, 2015.

\bibitem{acar2013dataFusionThroughCoupledFactorizations}
E.~Acar, M.~A. Rasmussen, F.~Savorani, T.~N{\ae}s, and R.~Bro, ``Understanding
  data fusion within the framework of coupled matrix and tensor
  factorizations,'' \emph{Chemometrics and Intelligent Laboratory Systems},
  vol. 129, pp. 53--63, 2013.

\bibitem{lahat2015multimodalFusionReview}
D.~Lahat, T.~Adali, and C.~Jutten, ``Multimodal data fusion: an overview of
  methods, challenges, and prospects,'' \emph{Proceedings of the IEEE}, vol.
  103, no.~9, pp. 1449--1477, 2015.

\bibitem{acar2014structureRevealingFusion}
E.~Acar, E.~E. Papalexakis, G.~G{\"u}rdeniz, M.~A. Rasmussen, A.~J. Lawaetz,
  M.~Nilsson, and R.~Bro, ``Structure-revealing data fusion,'' \emph{BMC
  bioinformatics}, vol.~15, no.~1, pp. 1--17, 2014.

\bibitem{chatzichristos2022coupledTensorDataFusionBook}
C.~Chatzichristos, S.~Van~Eyndhoven, E.~Kofidis, and S.~Van~Huffel, ``Chapter
  10 - {Coupled} tensor decompositions for data fusion,'' in \emph{Tensors for
  data processing}, Y.~Liu, Ed.\hskip 1em plus 0.5em minus 0.4em\relax
  Elsevier, 2022, pp. 341--370.

\bibitem{papalexakis2014turboSMT_matrix_tensor_fact}
E.~E. Papalexakis, C.~Faloutsos, T.~M. Mitchell, P.~P. Talukdar, N.~D.
  Sidiropoulos, and B.~Murphy, ``Turbo-{SMT}: Accelerating coupled sparse
  matrix-tensor factorizations by 200x - {Suplementary} material,'' in
  \emph{Proceedings of the 2014 SIAM International Conference on Data
  Mining}.\hskip 1em plus 0.5em minus 0.4em\relax SIAM, 2014, pp. 118--126.

\bibitem{ermics2015linkPredictionCoupledTensor}
B.~Ermi{\c{s}}, E.~Acar, and A.~T. Cemgil, ``Link prediction in heterogeneous
  data via generalized coupled tensor factorization,'' \emph{Data Mining and
  Knowledge Discovery}, vol.~29, no.~1, pp. 203--236, 2015.

\bibitem{prevost2020coupledTucker_hyperspectralSRR_TSP}
C.~Pr{\'e}vost, K.~Usevich, P.~Comon, and D.~Brie, ``Hyperspectral
  super-resolution with coupled tucker approximation: Recoverability and
  {SVD}-based algorithms,'' \emph{IEEE Transactions on Signal Processing},
  vol.~68, pp. 931--946, 2020.

\bibitem{kanatsoulis2018hyperspectralSRR_coupledCPD}
C.~I. Kanatsoulis, X.~Fu, N.~D. Sidiropoulos, and W.-K. Ma, ``Hyperspectral
  super-resolution: A coupled tensor factorization approach,'' \emph{IEEE
  Transactions on Signal Processing}, vol.~66, no.~24, pp. 6503--6517, 2018.

\bibitem{borsoi2020tensorHSRvariability}
R.~A. Borsoi, C.~Pr{\'e}vost, K.~Usevich, D.~Brie, J.~C.~M. Bermudez, and
  C.~Richard, ``Coupled tensor decomposition for hyperspectral and
  multispectral image fusion with inter-image variability,'' \emph{IEEE Journal
  of Selected Topics in Signal Processing}, vol.~15, no.~3, pp. 702--717, 2021.

\bibitem{prevost2022LL1_HSR}
C.~Prévost, R.~A. Borsoi, K.~Usevich, D.~Brie, J.~C.~M. Bermudez, and
  C.~Richard, ``Hyperspectral super-resolution accounting for spectral
  variability: Coupled tensor {LL1}-based recovery and blind unmixing of the
  unknown super-resolution image,'' \emph{SIAM Journal on Imaging Sciences},
  vol.~15, no.~1, pp. 110--138, 2022.

\bibitem{kuang2023coupledCPD_fMRI_multigroup}
L.-D. Kuang, Z.-M. He, J.~Zhang, and F.~Li, ``Coupled canonical polyadic
  decomposition of multi-group {fMRI} data with spatial reference and
  orthonormality constraints,'' \emph{Biomedical Signal Processing and
  Control}, vol.~80, p. 104232, 2023.

\bibitem{borsoi2022coupledCP_dec_fMRI}
R.~A. Borsoi, I.~Lehmann, M.~A. B.~S. Akhonda, V.~Calhoun, K.~Usevich, D.~Brie,
  and T.~Adali, ``Coupled {CP} tensor decomposition with shared and distinct
  components for multi-task {fMRI} data fusion,'' in \emph{Proc. IEEE
  International Conference on Acoustics, Speech and Signal Processing
  (ICASSP)}.\hskip 1em plus 0.5em minus 0.4em\relax Rhodes, Greece: IEEE, 2023,
  pp. 1--5.

\bibitem{sorensen2016harmonicRetrievalCCPD_part1}
M.~S{\o}rensen and L.~De~Lathauwer, ``Multidimensional harmonic retrieval via
  coupled canonical polyadic decomposition--{Part I}: Model and
  identifiability,'' \emph{IEEE Transactions on Signal Processing}, vol.~65,
  no.~2, pp. 517--527, 2016.

\bibitem{sorensen2016harmonicRetrievalCCPD_part2}
------, ``Multidimensional harmonic retrieval via coupled canonical polyadic
  decomposition--{Part II}: Algorithm and multirate sampling,'' \emph{IEEE
  Transactions on Signal Processing}, vol.~65, no.~2, pp. 528--539, 2016.

\bibitem{sorensen2016ESPRIT_coupledCPD}
------, ``Multiple invariance {ESPRIT} for nonuniform linear arrays: A coupled
  canonical polyadic decomposition approach,'' \emph{IEEE Transactions on
  Signal Processing}, vol.~64, no.~14, pp. 3693--3704, 2016.

\bibitem{sorensen2018coupledCPD_arrayProcessing}
M.~S{\o}rensen, I.~Domanov, and L.~De~Lathauwer, ``Coupled canonical polyadic
  decompositions and multiple shift invariance in array processing,''
  \emph{IEEE Transactions on Signal Processing}, vol.~66, no.~14, pp.
  3665--3680, 2018.

\bibitem{zheng2021coupledCPD_DOAestimation}
H.~Zheng, Z.~Shi, C.~Zhou, M.~Haardt, and J.~Chen, ``Coupled coarray tensor
  {CPD} for {DOA} estimation with coprime {L-shaped} array,'' \emph{IEEE Signal
  Processing Letters}, vol.~28, pp. 1545--1549, 2021.

\bibitem{vaneeghem2018coupledTensorDecBlindSysIdentification}
F.~Van~Eeghem, O.~Debals, N.~Vervliet, and L.~De~Lathauwer, ``Coupled and
  incomplete tensors in blind system identification,'' \emph{IEEE Transactions
  on Signal Processing}, vol.~66, no.~23, pp. 6137--6147, 2018.

\bibitem{gong2018jointBSS_coupledCPD}
X.-F. Gong, Q.-H. Lin, F.-Y. Cong, and L.~De~Lathauwer, ``Double coupled
  canonical polyadic decomposition for joint blind source separation,''
  \emph{IEEE Transactions on Signal Processing}, vol.~66, no.~13, pp.
  3475--3490, 2018.

\bibitem{yu2007parafacBlindDecomvolutionMIMO}
Y.~Yu and A.~P. Petropulu, ``{PARAFAC}-based blind estimation of possibly
  underdetermined convolutive {MIMO} systems,'' \emph{IEEE Transactions on
  Signal Processing}, vol.~56, no.~1, pp. 111--124, 2007.

\bibitem{acar2015dataFusionMetabolomics}
E.~Acar, R.~Bro, and A.~K. Smilde, ``Data fusion in metabolomics using coupled
  matrix and tensor factorizations,'' \emph{Proceedings of the IEEE}, vol. 103,
  no.~9, pp. 1602--1620, 2015.

\bibitem{karahan2015tensor_EEG_fMRI_fusion}
E.~Karahan, P.~A. Rojas-Lopez, M.~L. Bringas-Vega, P.~A.
  Vald{\'e}s-Hern{\'a}ndez, and P.~A. Valdes-Sosa, ``Tensor analysis and fusion
  of multimodal brain images,'' \emph{Proceedings of the IEEE}, vol. 103,
  no.~9, pp. 1531--1559, 2015.

\bibitem{chatzichristos2022fusionEEGfMRI_doubleCoupledMatrixTensor}
C.~Chatzichristos, E.~Kofidis, W.~Van~Paesschen, L.~De~Lathauwer,
  S.~Theodoridis, and S.~Van~Huffel, ``Early soft and flexible fusion of
  electroencephalography and functional magnetic resonance imaging via double
  coupled matrix tensor factorization for multisubject group analysis,''
  \emph{Human brain mapping}, vol.~43, no.~4, pp. 1231--1255, 2022.

\bibitem{belyaeva2024learningBrainDynamica_MEG_fMRI}
I.~Belyaeva, B.~Gabrielson, Y.-P. Wang, T.~W. Wilson, V.~D. Calhoun, J.~M.
  Stephen, and T.~Adali, ``Learning spatiotemporal brain dynamics in
  adolescents via multimodal {MEG} and {fMRI} data fusion using joint
  tensor/matrix decomposition,'' \emph{IEEE Transactions on Biomedical
  Engineering}, 2024.

\bibitem{adali2018ica_iva_dataFusionOverview}
T.~Adali, M.~Akhonda, and V.~D. Calhoun, ``{ICA} and {IVA} for data fusion: An
  overview and a new approach based on disjoint subspaces,'' \emph{IEEE Sensors
  Letters}, vol.~3, no.~1, pp. 1--4, 2019.

\bibitem{lathauwer2017coupledMatrixTensorFactPartially}
L.~De~Lathauwer and E.~Kofidis, ``Coupled matrix-tensor factorizations--the
  case of partially shared factors,'' in \emph{Proc. 51st Asilomar Conference
  on Signals, Systems, and Computers}.\hskip 1em plus 0.5em minus 0.4em\relax
  IEEE, 2017, pp. 711--715.

\bibitem{smilde2017commonDistinctComponentsFusion}
A.~K. Smilde, I.~M{\aa}ge, T.~Naes, T.~Hankemeier, M.~A. Lips, H.~A. Kiers,
  E.~Acar, and R.~Bro, ``Common and distinct components in data fusion,''
  \emph{Journal of Chemometrics}, vol.~31, no.~7, p. e2900, 2017.

\bibitem{acar2019biomarkersStructureRevealingTensorFusion}
E.~Acar, C.~Schenker, Y.~Levin-Schwartz, V.~D. Calhoun, and T.~Adali,
  ``Unraveling diagnostic biomarkers of schizophrenia through
  structure-revealing fusion of multi-modal neuroimaging data,''
  \emph{Frontiers in neuroscience}, vol.~13, p. 416, 2019.

\bibitem{prince2007probabilisticLDA_sharedCommon}
S.~J. Prince and J.~H. Elder, ``Probabilistic linear discriminant analysis for
  inferences about identity,'' in \emph{Proc. 11th international conference on
  computer vision (ICCV)}.\hskip 1em plus 0.5em minus 0.4em\relax IEEE, 2007,
  pp. 1--8.

\bibitem{wang2023deepFusionInterImageVariability}
X.~Wang, R.~A. Borsoi, C.~Richard, and J.~Chen, ``Deep hyperspectral and
  multispectral image fusion with inter-image variability,'' \emph{IEEE
  Transactions on Geoscience and Remote Sensing}, vol.~61, 2023.

\bibitem{liu2013miningTensorsCommonDistriminative}
W.~Liu, J.~Chan, J.~Bailey, C.~Leckie, and K.~Ramamohanarao, ``Mining labelled
  tensors by discovering both their common and discriminative subspaces,'' in
  \emph{Proceedings of the 2013 SIAM International Conference on Data
  Mining}.\hskip 1em plus 0.5em minus 0.4em\relax SIAM, 2013, pp. 614--622.

\bibitem{wen2016tensorSharedUniqueEventAnalytics}
X.~Wen, Y.-R. Lin, and K.~Pelechrinis, ``Pairfac: Event analytics through
  discriminant tensor factorization,'' in \emph{Proceedings of the 25th ACM
  International on Conference on Information and Knowledge Management}, 2016,
  pp. 519--528.

\bibitem{sorensen2020factorizationSharesUnshared}
M.~S{\o}rensen and N.~D. Sidiropoulos, ``Multi-set low-rank factorizations with
  shared and unshared components,'' \emph{IEEE Transactions on Signal
  Processing}, vol.~68, pp. 5122--5137, 2020.

\bibitem{schenker2020flexibleCouplingsTensorFact}
C.~Schenker, J.~E. Cohen, and E.~Acar, ``A flexible optimization framework for
  regularized matrix-tensor factorizations with linear couplings,'' \emph{IEEE
  Journal of Selected Topics in Signal Processing}, vol.~15, no.~3, pp.
  506--521, 2020.

\bibitem{farias2016tensorFusionFlexibleCouplings}
R.~C. Farias, J.~E. Cohen, and P.~Comon, ``Exploring multimodal data fusion
  through joint decompositions with flexible couplings,'' \emph{IEEE
  Transactions on Signal Processing}, vol.~64, no.~18, pp. 4830--4844, 2016.

\bibitem{adali2022reproducibilityReviewSPM}
T.~Adali, F.~Kantar, M.~A. B.~S. Akhonda, S.~Strother, V.~D. Calhoun, and
  E.~Acar, ``Reproducibility in matrix and tensor decompositions: Focus on
  model match, interpretability, and uniqueness,'' \emph{IEEE Signal Processing
  Magazine}, vol.~39, no.~4, pp. 8--24, 2022.

\bibitem{sorensen2015coupledCPD_part1_unique}
M.~S{\o}rensen and L.~D. De~Lathauwer, ``Coupled canonical polyadic
  decompositions and (coupled) decompositions in multilinear
  rank-{($L_{r,n}$,$L_{r,n}$,1)} terms--{part I}: Uniqueness,'' \emph{SIAM
  Journal on Matrix Analysis and Applications}, vol.~36, no.~2, pp. 496--522,
  2015.

\bibitem{sorensen2015coupledCPD_part2_algs}
M.~S{\o}rensen, I.~Domanov, and L.~De~Lathauwer, ``Coupled canonical polyadic
  decompositions and (coupled) decompositions in multilinear
  rank-{($L_{r,n}$,$L_{r,n}$,1)} terms--{part II}: Algorithms,'' \emph{SIAM
  Journal on Matrix Analysis and Applications}, vol.~36, no.~3, pp. 1015--1045,
  2015.

\bibitem{kanatsoulis2019tensorCompletionNyquistSampling}
C.~I. Kanatsoulis, X.~Fu, N.~D. Sidiropoulos, and M.~Ak{\c{c}}akaya, ``Tensor
  completion from regular {sub-Nyquist} samples,'' \emph{IEEE Transactions on
  Signal Processing}, vol.~68, pp. 1--16, 2019.

\bibitem{zhang2020spectrumCartographyCoupledBTD}
G.~Zhang, X.~Fu, J.~Wang, X.-L. Zhao, and M.~Hong, ``Spectrum cartography via
  coupled block-term tensor decomposition,'' \emph{IEEE Transactions on Signal
  Processing}, vol.~68, pp. 3660--3675, 2020.

\bibitem{shi2022personalizedPCA_sharedUnique}
N.~Shi and R.~A. Kontar, ``Personalized {PCA}: Decoupling shared and unique
  features,'' \emph{Journal of machine learning research}, vol.~25, pp. 1--82,
  2024.

\bibitem{liang2023personalizedDictionaryLearning_sharedUnique}
G.~Liang, N.~Shi, R.~A. Kontar, and S.~Fattahi, ``Personalized dictionary
  learning for heterogeneous datasets,'' in \emph{Thirty-seventh Conference on
  Neural Information Processing Systems}, 2023.

\bibitem{shi2023heterogeneousMatrixFactorization_sharedUnique}
N.~Shi, R.~A. Kontar, and S.~Fattahi, ``Heterogeneous matrix factorization:
  When features differ by datasets,'' \emph{arXiv preprint arXiv:2305.17744},
  2023.

\bibitem{guo2012partialUniquenessPARAFAC}
X.~Guo, S.~Miron, D.~Brie, and A.~Stegeman, ``Uni-mode and partial uniqueness
  conditions for {CANDECOMP/PARAFAC} of three-way arrays with linearly
  dependent loadings,'' \emph{SIAM Journal on Matrix Analysis and
  Applications}, vol.~33, no.~1, pp. 111--129, 2012.

\bibitem{domanov2013uniquenessCPS_part1_oneFactor}
I.~Domanov and L.~De~Lathauwer, ``On the uniqueness of the canonical polyadic
  decomposition of third-order tensors--{Part I}: Basic results and uniqueness
  of one factor matrix,'' \emph{SIAM Journal on Matrix Analysis and
  Applications}, vol.~34, no.~3, pp. 855--875, 2013.

\bibitem{kolda2009tensor}
T.~G. Kolda and B.~W. Bader, ``Tensor decompositions and applications,''
  \emph{SIAM review}, vol.~51, no.~3, pp. 455--500, 2009.

\bibitem{kruskal1977uniquenessTensor}
J.~B. Kruskal, ``Three-way arrays: rank and uniqueness of trilinear
  decompositions, with application to arithmetic complexity and statistics,''
  \emph{Linear algebra and its applications}, vol.~18, no.~2, pp. 95--138,
  1977.

\bibitem{stegeman2007kruskalConditionAcessible}
A.~Stegeman and N.~D. Sidiropoulos, ``On {Kruskal's} uniqueness condition for
  the {Candecomp/Parafac} decomposition,'' \emph{Linear Algebra and its
  applications}, vol. 420, no. 2-3, pp. 540--552, 2007.

\bibitem{domanov2013uniquenessCPS_part2_overallDec}
I.~Domanov and L.~De~Lathauwer, ``On the uniqueness of the canonical polyadic
  decomposition of third-order tensors--{Part II}: Uniqueness of the overall
  decomposition,'' \emph{SIAM Journal on Matrix Analysis and Applications},
  vol.~34, no.~3, pp. 876--903, 2013.

\bibitem{chiantini2012genericIdentifiabilityTensors}
L.~Chiantini and G.~Ottaviani, ``On generic identifiability of 3-tensors of
  small rank,'' \emph{SIAM Journal on Matrix Analysis and Applications},
  vol.~33, no.~3, pp. 1018--1037, 2012.

\bibitem{domanov2014CPD_uniqueness_reductionGEVD}
I.~Domanov and L.~D. Lathauwer, ``Canonical polyadic decomposition of
  third-order tensors: Reduction to generalized eigenvalue decomposition,''
  \emph{SIAM Journal on Matrix Analysis and Applications}, vol.~35, no.~2, pp.
  636--660, 2014.

\bibitem{lathauwer2008tensor_BTD2_uniqueness}
L.~De~Lathauwer, ``Decompositions of a higher-order tensor in block
  terms--part~{II}: Definitions and uniqueness,'' \emph{SIAM Journal on Matrix
  Analysis and Applications}, vol.~30, no.~3, pp. 1033--1066, 2008.

\bibitem{douglas1980candelinc}
J.~Douglas~Carroll, S.~Pruzansky, and J.~B. Kruskal, ``{CANDELINC}: A general
  approach to multidimensional analysis of many-way arrays with linear
  constraints on parameters,'' \emph{Psychometrika}, vol.~45, pp. 3--24, 1980.

\bibitem{levin2017quantifyingInteractionMultipleDatasetsSchizophrenia}
Y.~Levin-Schwartz, V.~D. Calhoun, and T.~Adal{\i}, ``Quantifying the
  interaction and contribution of multiple datasets in fusion: Application to
  the detection of schizophrenia,'' \emph{IEEE transactions on medical
  imaging}, vol.~36, no.~7, pp. 1385--1395, 2017.

\bibitem{chatzichristos2019fMRI_unmixing_tensor}
C.~Chatzichristos, E.~Kofidis, M.~Morante, and S.~Theodoridis, ``Blind {fMRI}
  source unmixing via higher-order tensor decompositions,'' \emph{Journal of
  neuroscience methods}, vol. 315, pp. 17--47, 2019.

\bibitem{prevost2023nonLocalSuperResolutionMRI}
C.~Pr{\'e}vost and F.~Odille, ``Non-local tensor sparse coding for multi-image
  super-resolution in magnetic resonance imaging,'' 2023.

\bibitem{prevost2022superResolutionMRItensorTucker}
------, ``Multi-frame super-resolution {MRI} using coupled low-rank tucker
  approximation,'' in \emph{Proc. 30th European Signal Processing Conference
  (EUSIPCO)}.\hskip 1em plus 0.5em minus 0.4em\relax IEEE, 2022, pp.
  1263--1267.

\bibitem{shi2022motionFetalMRI}
W.~Shi, H.~Xu, C.~Sun, J.~Sun, Y.~Li, X.~Xu, T.~Zheng, Y.~Zhang, G.~Wang, and
  D.~Wu, ``{AFFIRM}: affinity fusion-based framework for iteratively random
  motion correction of multi-slice fetal brain {MRI},'' \emph{IEEE Transactions
  on Medical Imaging}, vol.~42, no.~1, pp. 209--219, 2022.

\bibitem{borsoi2021spectralVariabilityReview}
R.~A. Borsoi, T.~Imbiriba, J.~C.~M. Bermudez, C.~Richard, J.~Chanussot,
  L.~Drumetz, J.-Y. Tourneret, A.~Zare, and C.~Jutten, ``Spectral variability
  in hyperspectral data unmixing: A comprehensive review,'' \emph{IEEE
  geoscience and remote sensing magazine}, vol.~9, no.~4, pp. 223--270, 2021.

\bibitem{acar2017tensor_EEG_fMRIfusionSchizophrenia}
E.~Acar, Y.~Levin-Schwartz, V.~D. Calhoun, and T.~Adali, ``Tensor-based fusion
  of {EEG} and {FMRI} to understand neurological changes in schizophrenia,'' in
  \emph{Proc. IEEE international symposium on circuits and systems
  (ISCAS)}.\hskip 1em plus 0.5em minus 0.4em\relax IEEE, 2017, pp. 1--4.

\bibitem{li2024longitudinalMetabolomicsTensor}
L.~Li, H.~C. Hoefsloot, B.~M. Bakker, D.~Horner, M.~A. Rasmussen, A.~Smilde,
  and E.~Acar, ``Longitudinal metabolomics data analysis informed by
  mechanistic models,'' \emph{bioRxiv}, 2024,
  https://doi.org/10.1101/2024.08.13.607724.

\bibitem{bernstein2009matrixBook}
D.~S. Bernstein, \emph{Matrix mathematics: theory, facts, and formulas},
  2nd~ed.\hskip 1em plus 0.5em minus 0.4em\relax Princeton university press,
  2009.

\bibitem{lyu2020nonlinearCCA}
Q.~Lyu and X.~Fu, ``Nonlinear multiview analysis: Identifiability and neural
  network-assisted implementation,'' \emph{IEEE Transactions on Signal
  Processing}, vol.~68, pp. 2697--2712, 2020.

\bibitem{caron2005zeroPolynimials}
R.~Caron and T.~Traynor, ``The zero set of a polynomial,'' University of
  Windsor, Windsor, ON, Canada, Tech. Rep. WSMR 05-02, 2005.

\bibitem{fu2020computingTensorOptimizationReview}
X.~Fu, N.~Vervliet, L.~De~Lathauwer, K.~Huang, and N.~Gillis, ``Computing
  large-scale matrix and tensor decomposition with structured factors: A
  unified nonconvex optimization perspective,'' \emph{IEEE Signal Processing
  Magazine}, vol.~37, no.~5, pp. 78--94, 2020.

\bibitem{rajih2008enhancedLineSearchPARAFAC}
M.~Rajih, P.~Comon, and R.~A. Harshman, ``Enhanced line search: A novel method
  to accelerate {PARAFAC},'' \emph{SIAM journal on matrix analysis and
  applications}, vol.~30, no.~3, pp. 1128--1147, 2008.

\bibitem{ramshaw2012unbalancedAssignments}
L.~Ramshaw and R.~E. Tarjan, ``On minimum-cost assignments in unbalanced
  bipartite graphs,'' HP Labs, Palo Alto, CA, USA, Tech. Rep. HPL-2012-40R1,
  Tech. Rep., 2012.

\bibitem{hong2020generalizedCPDlosses}
D.~Hong, T.~G. Kolda, and J.~A. Duersch, ``Generalized canonical polyadic
  tensor decomposition,'' \emph{SIAM Review}, vol.~62, no.~1, pp. 133--163,
  2020.

\bibitem{pu2022stochasticMirrorDescentTensorNonEuclidean}
W.~Pu, S.~Ibrahim, X.~Fu, and M.~Hong, ``Stochastic mirror descent for low-rank
  tensor decomposition under non-euclidean losses,'' \emph{IEEE Transactions on
  Signal Processing}, vol.~70, pp. 1803--1818, 2022.

\bibitem{prevost2023nonnegativeBTDbetaDivergence}
C.~Pr{\'e}vost and V.~Leplat, ``Nonnegative block-term decomposition with the
  $\beta$-divergence: Joint data fusion and blind spectral unmixing,'' in
  \emph{Proc. IEEE International Conference on Acoustics, Speech and Signal
  Processing (ICASSP)}.\hskip 1em plus 0.5em minus 0.4em\relax Rhodes Island,
  Greece: IEEE, 2023, pp. 1--5.

\bibitem{comon2009tensor_ALS_otherTales}
P.~Comon, X.~Luciani, and A.~L. De~Almeida, ``Tensor decompositions,
  alternating least squares and other tales,'' \emph{Journal of Chemometrics: A
  Journal of the Chemometrics Society}, vol.~23, no. 7-8, pp. 393--405, 2009.

\bibitem{tensorlab3.0}
\BIBentryALTinterwordspacing
N.~Vervliet, O.~Debals, L.~Sorber, M.~Van~Barel, and L.~De~Lathauwer. (2016,
  Mar.) Tensorlab 3.0. Available online. [Online]. Available:
  \url{https://www.tensorlab.net}
\BIBentrySTDinterwordspacing

\bibitem{gao1998cloudDetectionArticRegion}
B.-C. Gao, W.~Han, S.~C. Tsay, and N.~F. Larsen, ``Cloud detection over the
  {Arctic} region using airborne imaging spectrometer data during the
  daytime,'' \emph{Journal of Applied Meteorology}, vol.~37, no.~11, pp.
  1421--1429, 1998.

\bibitem{borsoi2019superResolutionHyperspectralVariabilityTIP}
R.~A. Borsoi, T.~Imbiriba, and J.~C.~M. Bermudez, ``Super-resolution for
  hyperspectral and multispectral image fusion accounting for seasonal spectral
  variability,'' \emph{IEEE transactions on image processing}, vol.~29, pp.
  116--127, 2019.

\bibitem{akhonda2021disjointSubspacesCommonAndDistinctComponentsfMRI}
M.~Akhonda, B.~Gabrielson, S.~Bhinge, V.~D. Calhoun, and T.~Adali, ``Disjoint
  subspaces for common and distinct component analysis: Application to the
  fusion of multi-task {fMRI} data,'' \emph{Journal of neuroscience methods},
  vol. 358, p. 109214, 2021.

\bibitem{bhinge2017estimationCommonOrderMultipleDatasetsfMRI}
S.~Bhinge, Y.~Levin-Schwartz, and T.~Adal{\i}, ``Estimation of common subspace
  order across multiple datasets: Application to multi-subject {fMRI} data,''
  in \emph{Proc. 51st Annual Conference on Information Sciences and Systems
  (CISS)}.\hskip 1em plus 0.5em minus 0.4em\relax IEEE, 2017, pp. 1--5.

\bibitem{hasija2020determiningDimensionSubspaceCorrelatedMUltipleDatasets}
T.~Hasija, T.~Marrinan, C.~Lameiro, and P.~J. Schreier, ``Determining the
  dimension and structure of the subspace correlated across multiple data
  sets,'' \emph{Signal Processing}, vol. 176, p. 107613, 2020.

\bibitem{bartels1972solutionSylvester}
R.~H. Bartels and G.~W. Stewart, ``Solution of the matrix equation {AX+XB=C},''
  \emph{Communications of the ACM}, vol.~15, no.~9, pp. 820--826, 1972.

\bibitem{gardiner1992solutionSylvester}
J.~D. Gardiner, A.~J. Laub, J.~J. Amato, and C.~B. Moler, ``Solution of the
  {Sylvester} matrix equation axb$^t$+ cxd$^t$= e,'' \emph{ACM Transactions on
  Mathematical Software (TOMS)}, vol.~18, no.~2, pp. 223--231, 1992.

\bibitem{golub1979hessenbergSchurSolverSylvesterEq}
G.~Golub, S.~Nash, and C.~Van~Loan, ``A {Hessenberg-Schur} method for the
  problem ax+xb=c,'' \emph{IEEE Transactions on Automatic Control}, vol.~24,
  no.~6, pp. 909--913, 1979.

\bibitem{simoncini2016computationalSylvesterEqReview}
V.~Simoncini, ``Computational methods for linear matrix equations,'' \emph{SIAM
  Review}, vol.~58, no.~3, pp. 377--441, 2016.

\end{thebibliography}

\vspace*{-2ex}

\begin{IEEEbiography}[{\includegraphics[width=1in,height=1.25in,clip,keepaspectratio]{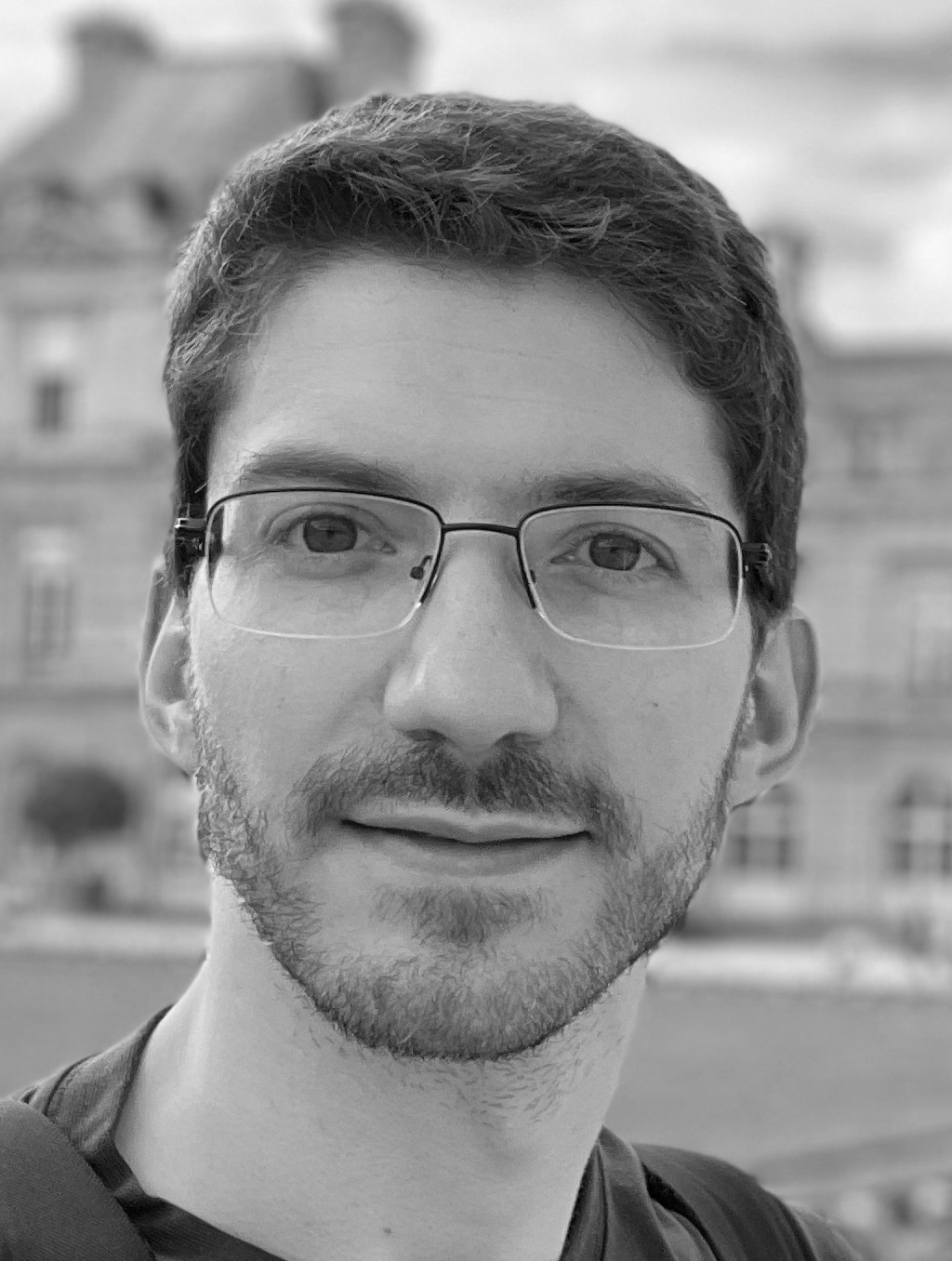}}]
{Ricardo Borsoi}(Member, IEEE) 
received the Doctorate degree from the Federal University of Santa Catarina, Brazil, and from the C\^ote d'Azur University, France, in 2021. He is currently a permanent research scientist in the French National Center for Scientific Research (CNRS) in the CRAN laboratory, Nancy, France. His research focuses on statistical inference, tensor decompositions and inverse problems, with applications in hyperspectral and medical image analysis.
\end{IEEEbiography}

\vspace*{-2ex}

\begin{IEEEbiography}[{\includegraphics[width=1in,height=1.25in,clip,keepaspectratio]{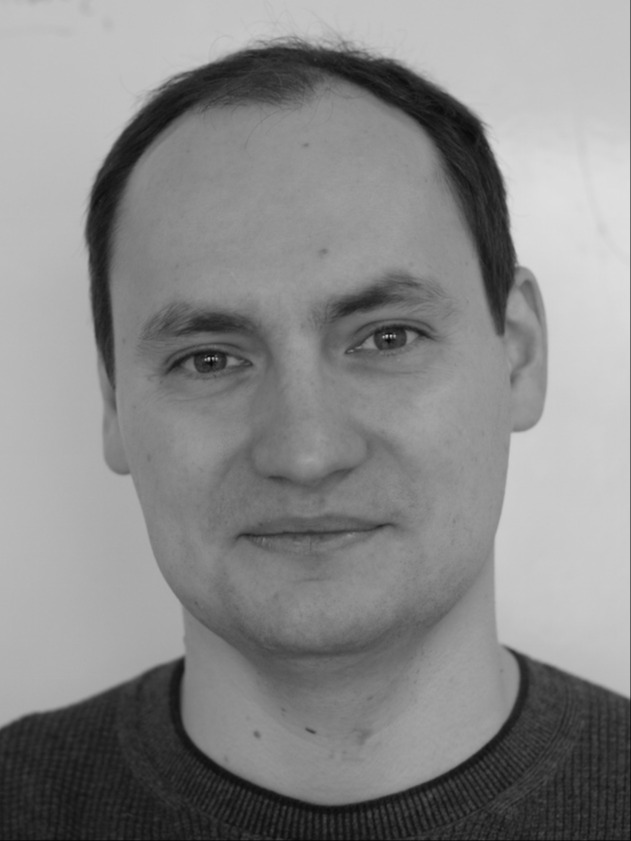}}]
{Konstantin Usevich}(Member, IEEE)
obtained a  Ph.D in 2011, from St.\,Petersburg State University, Russia. He held  several postdoctoral positions in UK, Belgium, and France, before joining the CNRS in 2017. He is currently a permanent CNRS researcher at CRAN laboratory, Nancy, France.
His research interests linear and multilinear algebra, optimization, signal and image processing, and machine learning. He has been a member of the editorial board of SIAM Journal on Matrix Analysis and Applications since 2018.
\end{IEEEbiography}

\vspace*{-2ex}

\begin{IEEEbiography}[{\includegraphics[width=1in,height=1.25in,clip,keepaspectratio]{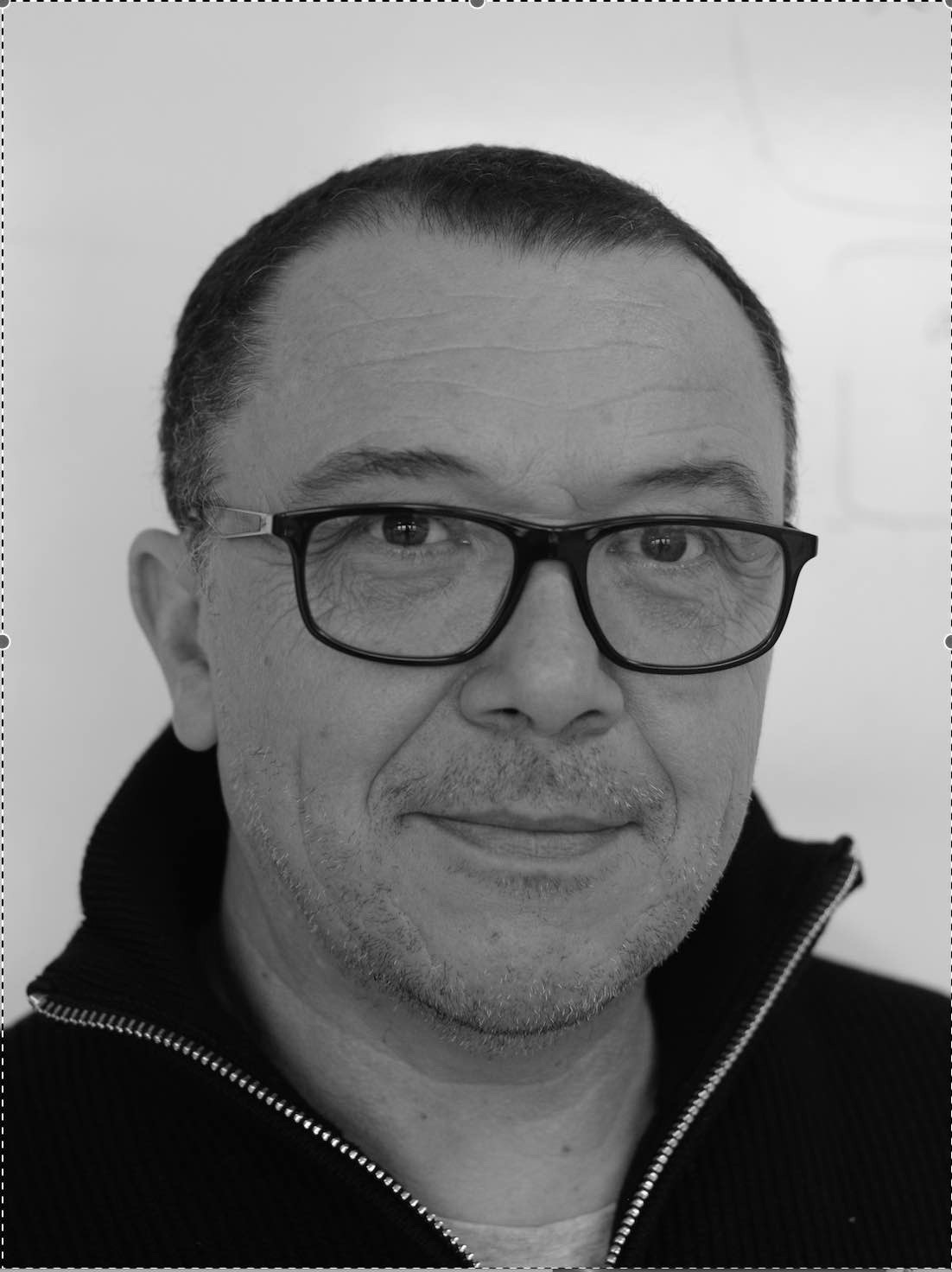}}]
{David Brie}(Member, IEEE) received the Ph.D. degree in signal processing from Université Henri Poincaré, Nancy, France, in 1992. He is currently a Full Professor with the Université de Lorraine, Nancy, France. He is a Member of the Centre de Recherche en Automatique de Nancy, Nancy, France. His research focuses on multidimensional statistical signal processing. He served as Editor in Chief of the French journal Traitement du Signal from 2014 to 2019. He  was the General Co-Chair with Jean-Yves Tourneret of IEEE CAMSAP 2019 Workshop and general chair of GRETSI 2022 conference.
\end{IEEEbiography}

\vspace*{-2ex}

\begin{IEEEbiography}[{\includegraphics[width=1in,height=1.25in,clip,keepaspectratio]{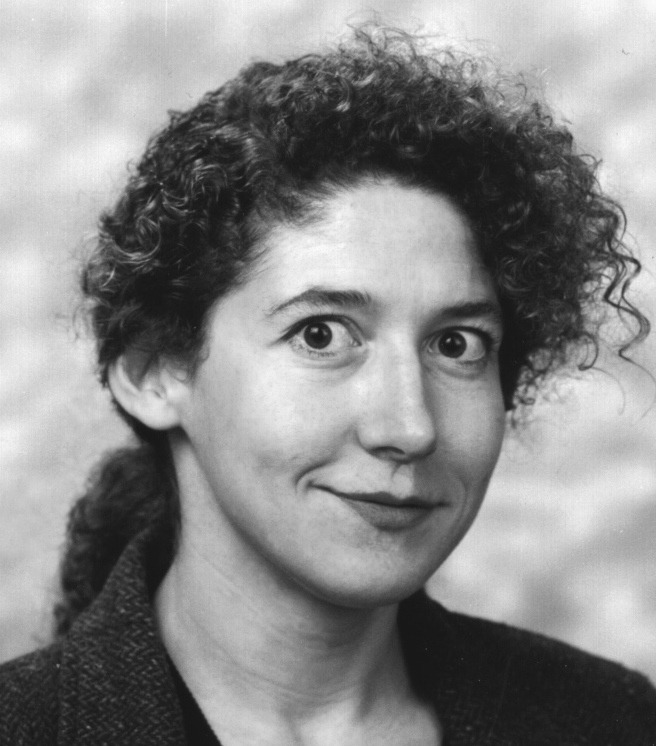}}]
{Tülay Adali}(Fellow, IEEE) is a Distinguished University Professor at the University of Maryland Baltimore County (UMBC), Baltimore, MD. Prof. Adali has been active within the IEEE. She served as the Chair of the IEEE Brain Technical Community in 2023, and the Signal Processing Society (SPS) Vice President for Technical Directions 2019-2022. She is currently the editor-in-chief of the IEEE Signal Processing Magazine. Her roles in conference organization include general/technical chair position for the IEEE Machine Learning for Signal Processing (MLSP) and Neural Networks for Signal Processing (NNSP) Workshops 2001-2009, 2014, and 2023, and program/technical chair position for the IEEE International Conference on Acoustics, Speech, and  Signal Processing (ICASSP) in 2017 and 2026 among other roles.  She was the Chair of the NNSP/MLSP Technical Committee, 2003-2005 and 2011-2013, and served on numerous boards and technical committees of the SPS. Prof. Adali is a Fellow of the IEEE, AIMBE, and AAIA, a Fulbright Scholar, an IEEE SPS Distinguished Lecturer, and the UMBC Presidential Research Professor for 2024-2027. She is the recipient of SPS Meritorious Service Award, Humboldt Research Award, IEEE SPS Best Paper Award, SPIE Unsupervised Learning and ICA Pioneer Award, the University System of Maryland Regents' Award for Research, and the NSF CAREER Award. Her current research interests are in statistical signal processing and machine learning, with applications to neuroimaging data analysis. 
\end{IEEEbiography}

\end{document}